\relax
\documentclass[letterpaper]{article} 
\usepackage{aaai22}  
\usepackage{times}  
\usepackage{helvet}  
\usepackage{courier}  
\usepackage[hyphens]{url}  
\usepackage{graphicx} 
\urlstyle{rm} 
\usepackage{natbib}  
\usepackage{caption} 
\DeclareCaptionStyle{ruled}{labelfont=normalfont,labelsep=colon,strut=off} 
\frenchspacing  
\setlength{\pdfpagewidth}{8.5in}  
\setlength{\pdfpageheight}{11in}  
%
\usepackage{algorithm}
\usepackage{algorithmic}

\usepackage{commath}
\usepackage{amsmath}
\usepackage{amssymb}
\usepackage{amsthm}
\usepackage{bm}
\usepackage{color}
\usepackage{enumitem}
\usepackage{subfigure}

\newcommand{\E}{\mathbb{E}}

\newtheorem{assumption}{Assumption}
\newtheorem{theorem}{Theorem}
\newtheorem{lemma}{Lemma}
\newtheorem{Def}{Definition}
\newtheorem{corollary}[lemma]{Corollary}

\DeclareMathOperator*{\argmax}{argmax}

%
\usepackage{newfloat}
\usepackage{listings}
\lstset{%
	basicstyle={\footnotesize\ttfamily},
	numbers=left,numberstyle=\footnotesize,xleftmargin=2em,
	aboveskip=0pt,belowskip=0pt,%
	showstringspaces=false,tabsize=2,breaklines=true}
\floatstyle{ruled}
\newfloat{listing}{tb}{lst}{}
\floatname{listing}{Listing}
%
%
\pdfinfo{
/Title (AAAI Press Formatting Instructions for Authors Using LaTeX -- A Guide)
/Author (AAAI Press Staff, Pater Patel Schneider, Sunil Issar, J. Scott Penberthy, George Ferguson, Hans Guesgen, Francisco Cruz, Marc Pujol-Gonzalez)
/TemplateVersion (2022.1)
}

\setcounter{secnumdepth}{0} 

\title{Policy Learning for Robust  Markov Decision Process \\
with a Mismatched Generative Model}
\author {
    Jialian Li,\textsuperscript{\rm 1}
    Tongzheng Ren, \textsuperscript{\rm 2}
    Dong Yan, \textsuperscript{\rm 1}
    Hang Su, \textsuperscript{\rm 1}
    Jun Zhu, \textsuperscript{\rm 1}
}
\affiliations {
    \textsuperscript{\rm 1} Department of Computer Science and Technology, Beijing National Research Center for Information Science and Technology, Tsinghua-Bosch Joint Center for Machine Learning, Institute for Artificial Intelligence, Tsinghua University, Beijing 100084, China\\
    \textsuperscript{\rm 2} Department of Computer Science, UT Austin\\
    \texttt{lijialian7@163.com, tongzheng@utexas.edu, sproblvem@gmail.com, suhangss@mail.tsinghua.edu.cn, dcszj@mail.tsinghua.edu.cn}
}

\usepackage{bibentry}

\begin{document}

\maketitle

\begin{abstract}
In high-stake scenarios like medical treatment and auto-piloting, it's risky or even infeasible to collect online experimental data to train the agent. Simulation-based training can alleviate this issue, but may suffer from its inherent mismatches from the simulator and real environment. It is therefore imperative to utilize the simulator to learn a robust policy for the real-world deployment.
In this work, we consider policy learning for Robust Markov Decision Processes (RMDP), where the agent tries to seek a robust policy with respect to unexpected perturbations on the environments. Specifically, we focus on the setting where the training environment can be characterized as a generative model and a constrained perturbation can be added to the model during testing. Our goal is to identify a near-optimal robust policy for the perturbed testing environment, which introduces additional technical difficulties as we need to simultaneously estimate the training environment uncertainty from samples and find the worst-case perturbation for testing. To solve this issue, we propose a generic method which formalizes the perturbation as an opponent to obtain a two-player zero-sum game, and further show that the Nash Equilibrium corresponds to the robust policy. We prove that, with a polynomial number of samples from the generative model, our algorithm can find a near-optimal robust policy with a high probability. Our method is able to deal with general perturbations under some mild assumptions and can also be extended to more complex problems like robust partial observable Markov decision process, thanks to the game-theoretical formulation.
\end{abstract}

\section{Introduction}
\noindent Reinforcement Learning (RL) \cite{sutton1998introduction} aims to identify good policies that can solve the sequential decision-making problem, and has achieved amounts of incredible results on different challenging tasks \cite{mnih2013playing,mnih2015human,silver2017mastering}. Most of the current work focuses on the case where we can evaluate the performance of learned policies on the training environments, which is reasonable in multiple cases. For example, we can evaluate the policy for video games under the same environment since the environmental dynamics or game rules are the same for both training and testing. 

However, most of these algorithms need to interact with the environment and collect online experimental data to train the agent, which restricts the application of these algorithms in high-stake scenarios like medical treatment and auto-piloting. In certain cases, we have access to an additional simulator (e.g. for auto-piloting we can train the agent in a simulator that simulates the real traffic scenarios) which allows us to train the agent on the simulator. However, the simulator can substantially suffer from the domain mismatch from the real application scenarios, which may lead to a significant performance degeneration when deploying the model in real world.
We can instead utilize this imperfect simulator for training and find a relatively robust policy such that it can also work well in the real world. This indeed falls into the distributional robust optimization (DRO) problem \cite{2019Distributionally}.  

In this work, we consider DRO problems where the environment can be formalized as a Markov Decision Process (MDP) with finite steps. Several existing works~\cite{nilim2004robustness,iyengar2005robust} treat the difference of the training and testing MDP as a perturbation and the goal is to find a robust policy that ensures a near-optimal reward against the worst-case perturbation, which is so-called a Robust MDP (RMDP) problem. Many previous works~\cite[e.g.][]{nilim2004robustness,iyengar2005robust} assume that all the parameters of the training MDP and the perturbation set are known. Then the robust policy can be solved directly by (approximated) dynamic programming. However, in the practical problems, the simulator may be constructed as a complex system with physical computation \cite{2014Hydrology,2020Digital} which can only provide samples through interactions. Other works~\cite{2010percentile,2013Robust,ghavamzadeh2016safe} attempt to address problems under parameter uncertainty, but they generally neither take extra perturbation into consideration, nor provide sample complexity analysis for a near-optimal result. 

Solving RMDPs in an online manner is quite hard, because we might not be able to reach some potentially important states during online training. This might lead to policy's bad performance during testing. \citet{jiang2018pac} gives an extreme case where there is only one single state-action pair difference between the simulator and real environment, which makes the learned policy information-theoretically useless. To the best of our knowledge, there are no satisfactory results for solving RMDP in an online fashion. \citet{jiang2018pac} assumes that it has access to the testing environment and can get data to correct training simulator. However, we want to remark that such cases may not be applicable in numerous practical scenarios, as we cannot take such risk to collect online data in the testing environment. \citet{roy2017reinforcement} use model-free methods to solve the RMDP problem by assuming the perturbation set fixed and given. However, they do not provide the non-asymptotic sample complexity. In some bad cases of online RMDP, where some potential important states are hard to reach, the algorithm may suffer a large sample complexity. 

In this work, we consider the online training RMDP problems in a relaxed way. We assume the training environment is characterized via a generative model and the agent queries the generative model to gain samples from the training MDP. Although solving an MDP with a generative model has gained much interest \cite{kearns1999finite,azar2013minimax,cui2020plug}, the corresponding analysis in RMDP remains mostly open problem. We extend the fix perturbation set assumption in \cite{roy2017reinforcement} to general constraints that the perturbation may be dependent on the environment parameters. This would lead to a new challenge that our estimation error for the training environment can further influence the robust policy solving. 


\textbf{Our Contributions.} To address the aforementioned issues, we propose a method for solving RMDPs under general constraints. We first re-formulate the RMDP problem as a two-player zero-sum game by considering the perturbation as an adversarial player, and show that the Nash Equilibrium (NE) of the game corresponds to the robust policy and worst-case perturbation. Then we propose a new algorithm that uses a plug-in NE solver to find the robust policy. 

We provide rigorous non-asymptotic sample complexity bounds for our algorithm in certain cases. For RMDP with $S$ states, $A$ actions, and horizon $H$, we give non-asymptotic analysis for constraints with the Lipschitz condition. We show that with samples of order $\widetilde{O}((1+\lambda)^2S^2AH^4/\epsilon^2)$\footnote{The notation $\widetilde{O}$ represents the order ignoring logarithm term.}, our learned policy has an error no larger than $\epsilon$ with a high probability, where $\lambda$ is the Lipschitz constant. 

We finally remark that, since we use a game-theoretical framework to consider the environment uncertainty and robust policy solving separately, we can extend our method to problems with other uncertainty. We give an example on solving the robust partial-observable MDP problems.

\section{Problem Formulation}

In this section, we introduce the background of Markov Decision Process (MDP) and robust Markov Decision Process (RMDP) problem in this work.

\subsection{Markov Decision Process (MDP)}

We consider the finite-horizon Markov Decision Process (MDP) $M=\langle \mathcal{S},\mathcal{A},P,r,H\rangle$ where $\mathcal{S}$ is the state space and $\mathcal{A}$ is the action space. We denote $S=|\mathcal{S}|$ and $A=|\mathcal{A}|$. Transition probability $P(s,a)$ represents the probability distribution of transiting to states in $\mathcal{S}$ if taking action $a$ at state $s$. Reward function $r$ maps a state-action pair $(s,a)$ to a reward value $r(s,a)\in[0,1]$. One episode of interaction has a depth of $H$ which is denoted as $[H]=\{1,2,...,H\}$.

Usually for finite-horizon MDPs, the optimal actions for one state at different depths can be different. For clarity, we assume the state sets for different depths are disjoint. Formally, we use $\mathcal{S}_h$ to denote the set of states with depth $h$ and $D=\max_{h}|\mathcal{S}_h|$, where $\mathcal{S}=\cup_{h=1}^H \mathcal{S}_h$ and $\mathcal{S}_h\cap \mathcal{S}_{h'}=\emptyset$ if $h\neq h'$. Furthermore we denote the simplex of all possible transition probability vectors on $P(s,a)$ as $\Delta_{sa}$. 

For clarity, we assume an initial state $s_0$ at depth $1$. At state $s\in\mathcal{S}_h$, the agent chooses one action $a\in\mathcal{A}$. Then the environment turns to state $s'\in\mathcal{S}_{h+1}$ with a probability of $P(s'|s,a)$. After $H$ times of interactions from depth $1$, one episode ends. For the convenience of notation, we define a terminal state $s_\mathcal{T}$ at depth $H+1$ and the environment evolves to $s_{\mathcal{T}}$ from depth $H$.

A policy $\pi$ of the agent maps each state $s\in\mathcal{S}$ to an action in $\mathcal{A}$. Usually, each state's policy can be a distribution over $\mathcal{A}$. For a finite horizon MDP, there always exists a deterministic optimal policy\cite{puterman2014markov}. Here without further clarification, our ``policy'' indicates a deterministic policy for the agent.

Consider state $s\in \mathcal{S}_h$, we use $V^\pi_{M}(s)$ to represent the expected reward, following the policy $\pi$ in MDP $M$ with transition $P$:
\begin{small}
$$V^\pi_{M}(s)=\E_{P}\left[\sum_{h'=h}^Hr(s_{h'},\pi(s_{h'}))|s_h=s\right],$$
\end{small}
where $\mathbb{E}_P$ indicates the expectation over transition function.
Similarly, we define the $Q$ values for state-action pairs as
\begin{small}
$$Q^\pi_{M}(s,a)=\E_P\left[r(s,a) + \sum_{h'=h+1}^Hr(s_{h'},\pi(s_{h'}))\right].$$
\end{small}
The Bellman equation for MDP $M=\langle \mathcal{S},\mathcal{A},P,r,H\rangle$ at state $s\in\mathcal{S}_h$ is
\begin{small}
$$V^\pi_{M}(s) = r(s,\pi(s))+\sum_{s'\in\mathcal{S}_{h+1}}P(s'|s,a) V^\pi_{M}(s').$$
\end{small}
Specifically, we define $V^\pi_{M}(s_\mathcal{T})=0$ for any $\pi$ or $M$.
Our final goal is to find a policy $\pi'$ such that
$$\pi'=\argmax_\pi V^\pi_{M}(s_0).$$

\subsection{Robust Markov Decision Process (RMDP)}

We consider the problem where the environments at training and testing phases can be different. Denote the testing MDP as $M^*=\langle \mathcal{S},\mathcal{A},P^*,r,H\rangle$ and the training MDP as ${M}^s = \langle \mathcal{S},\mathcal{A},P^s,r,H\rangle$. For clarity, here we only assume the transitions are different between $M^*$ and $M^s$, while our techniques to handle the transition can be easily generalized to handle the rewards. 

During the training time, $P^*$ is not available for the agent and only samples from $P^s$ are accessible via a generative model. At each time step, the agent sends a state-action pair $(s,a)$ to the generative model and gains $(s',r(s,a))$, where $s'$ is sampled from $P^s(s,a)$. To enable a policy to transfer from the training environment to the testing environment, 
$P^s$ and $P^*$ should be to some degree similar. Here we use a constraint to describe their connection. We use $\prod$ to denote the Cartesian product and define $\mathcal{P}=\prod_{(s,a)}\Delta_{sa}$ to be the set composed of all the transition probabilities. Here denote dimension of $\mathcal{P}$ to be $G\leq SAD$. Then we assume a known constraint function $\mathcal{C}$ which maps each $P\in\mathcal{P}$ to a set $\mathcal{C}(P)\subseteq \mathcal{P}$, and also assume that the testing transition function $P^*$ locates in $\mathcal{C}(P^s)$. Further we subtract the transition $P$ from $\mathcal{C}(P)$ to define the perturbation set as
\begin{equation}
U(P):=\{p\in [-1,1]^{G}:P+p\in\mathcal{C}(P)\}.
\label{eq:perturbation}
\end{equation}
If $U(P^s)$ contains only elements near $0$ (i.e., $\|p\|$ is small), then $P^*$ is constrained to be within a neighborhood of $P^s$. Furthermore, we use $\mathcal{C}(M)$ to denote the set of MDPs whose transition functions are located in $\mathcal{C}(P)$.

In order to learn a robust policy, we need to consider the worst-case environment for each policy. We define a worst-case $V$ value for policy $\pi$ as
\begin{small}
\begin{equation}
\widetilde{V}^\pi(s)=\min_{M\in\mathcal{C}(M^s)}V^\pi_{M}(s),
\end{equation}
\end{small}
and the optimal robust policy $\pi^*$ can be defined as
$$\pi^*=\argmax_{\pi}\widetilde{V}^\pi(s_0).$$
For a policy $\pi$, we define the error between $\pi$ and $\pi^*$ as
\begin{small}
\begin{equation}
    \label{eq:pi_error}
    Err(\pi):=\widetilde{V}^{\pi^*}(s_0)-\widetilde{V}^{{\pi}}(s_0).
\end{equation}
\end{small}
Our learning goal is to find a policy $\pi'$ such that
\begin{small}
\begin{equation}
\label{eq:goal}
\mathbb{P}(Err(\pi')\leq \epsilon)\geq 1-\delta,
\end{equation}
\end{small}
for some $\delta\in(0,1)$ and $\epsilon>0$. Here $\mathbb{P}(\cdot)$ denotes the probability for some event. A key for this learning target is that the optimal policy for $M^s$ might work poorly under $M^*$, which urges for a robust policy for testing phase.

\subsection{A Simple Case for Intuition}

Here we give an simple example to show that robustness is indeed an import issue under the model mismatch problems. 

\begin{figure}[hbt]
    \centering
    \includegraphics[width=.35\textwidth]{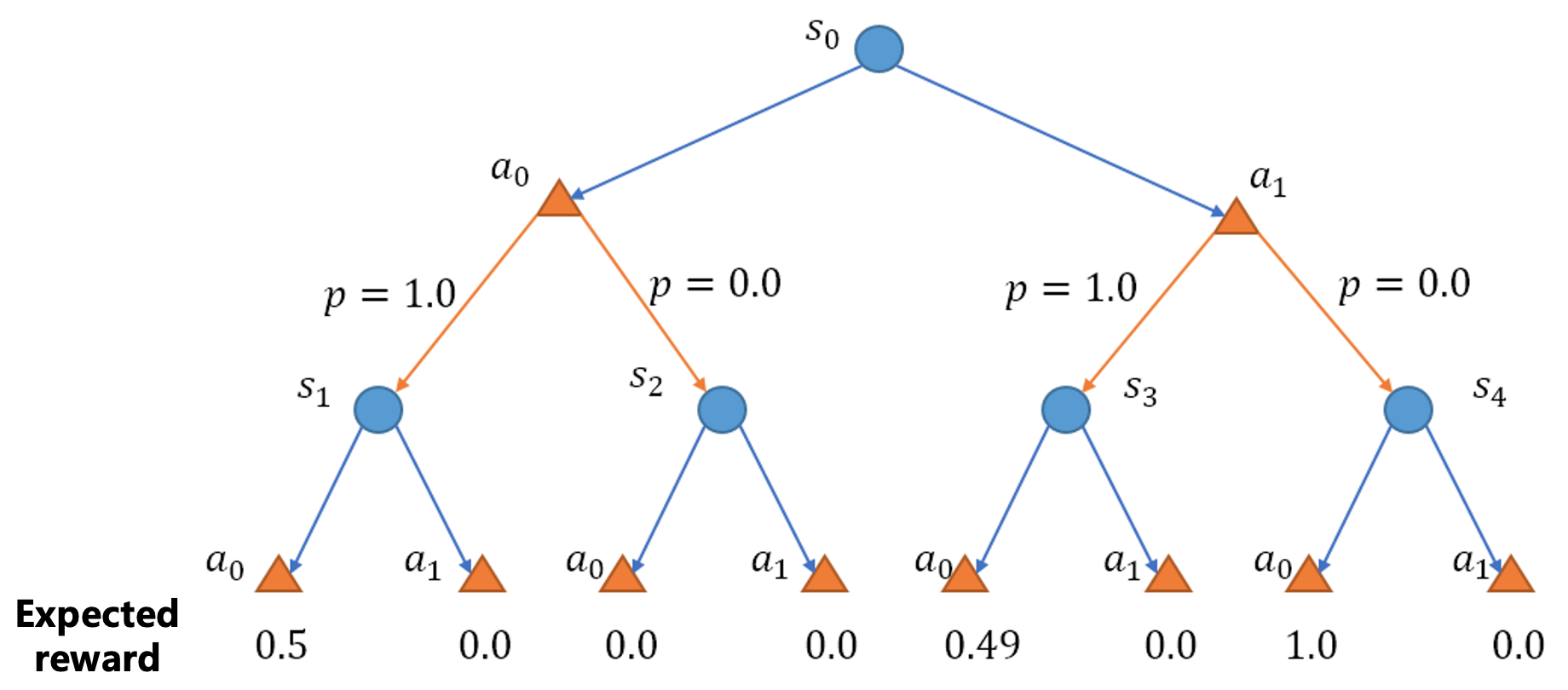}
    \caption{The structure of the MDP}
    \label{fig:mdp}
\end{figure}

We consider a training MDP $M^s$ with only 2 layers. Depth 1 has only one state, i.e. $s_0$ and depth 2 has four states. The agent has two actions $a_0$ and $a_1$ at each state. The agent receives a reward at the end of one episode. The structure of the MDP and the rewards are given in Fig.~\ref{fig:mdp}. It easy to see that the optimal strategy for $M^s$ is always to choose $a_0$ and the optimal reward is $0.5$. 

Assume that the true MDP $M^*$ and $M^s$ differs in the transitions. We assume a perturbation range $u$ for this example. That is, $P^*(s_2|s_0, a_0)\in[0,u)$  and $P^*(s_4|s_0,a_1)\in[0,u)$. Now if $u>0.02$, the robust policy for the agent is to choose $a_1$ at $s_0$. It can be seen that methods concentrating on $M^s$ can hardly identify this robustness issue since they do not pay attention to $s_4$ whose reaching probability in $M^s$ is $0$. 

We aim to use this simple case to show that: (1) the optimal policy of $M^s$ may be a bad policy for $M^*$; (2) the online training process for RMDP can be very inefficient. Thus we test 4 methods: (1) OPT: solving the optimal policy of $M^s$; (2) RPS: our robust policy solving method; (3) RQ-learning: the robust version of Q-learning; and (4) RSARSA: the robust version of SARSA. Here OPT and RPS are trained with the generative model and RQ-learning and RSARSA \cite{roy2017reinforcement} solves a one-step mini-max optimization. Each method can interact with the training environment for $20000$ times. We choose $u\in\{0.001,0.005,0.01,0.05,0.1,0.5\}$. We give detailed information about this experiment is given in the appendix.

\begin{figure}[hbt]
    \centering
    \subfigure[Worst perturbation]{\includegraphics[width=.23\textwidth]{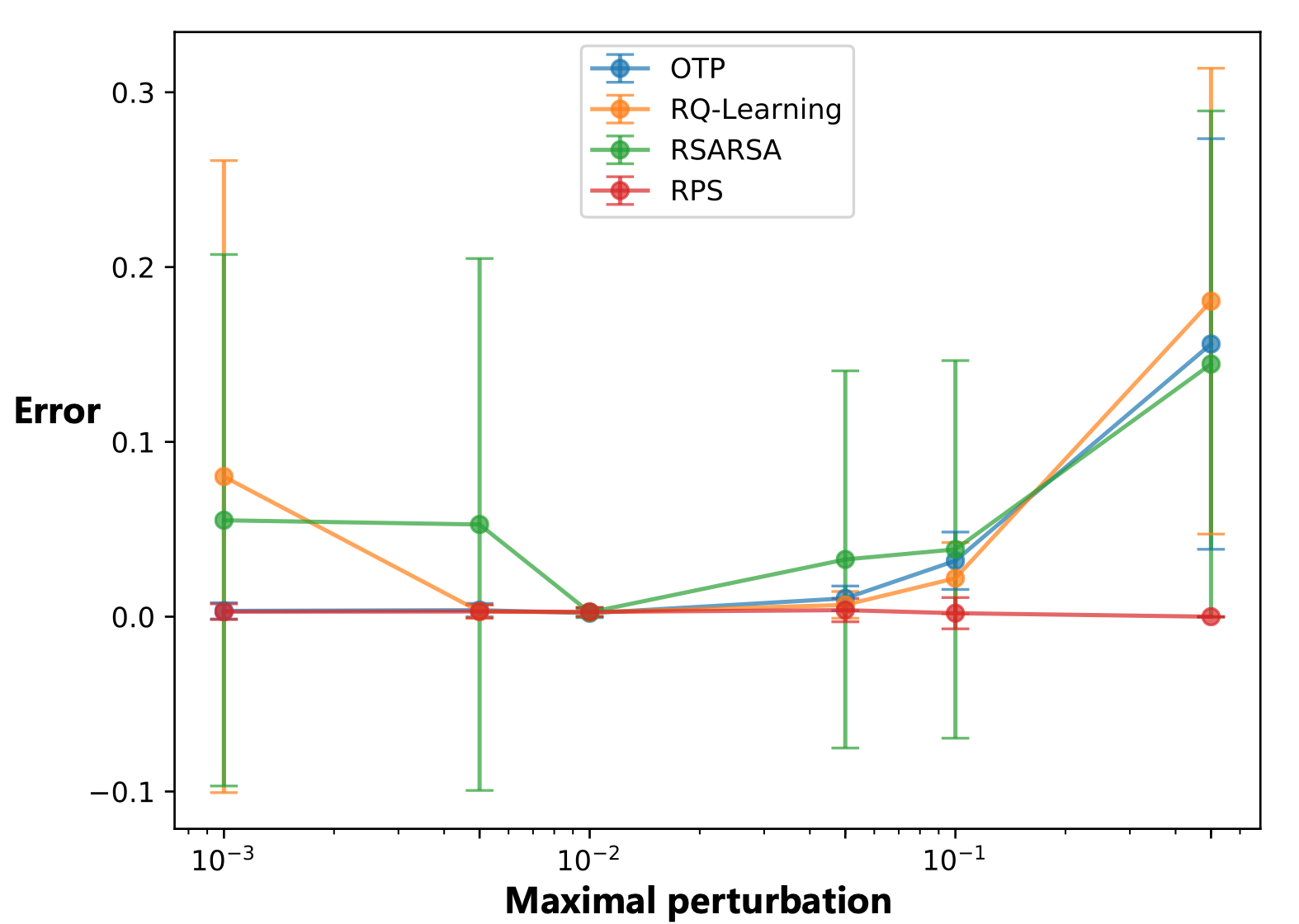}\label{fig:err}}
    \subfigure[Random perturbation]{\includegraphics[width=.23\textwidth]{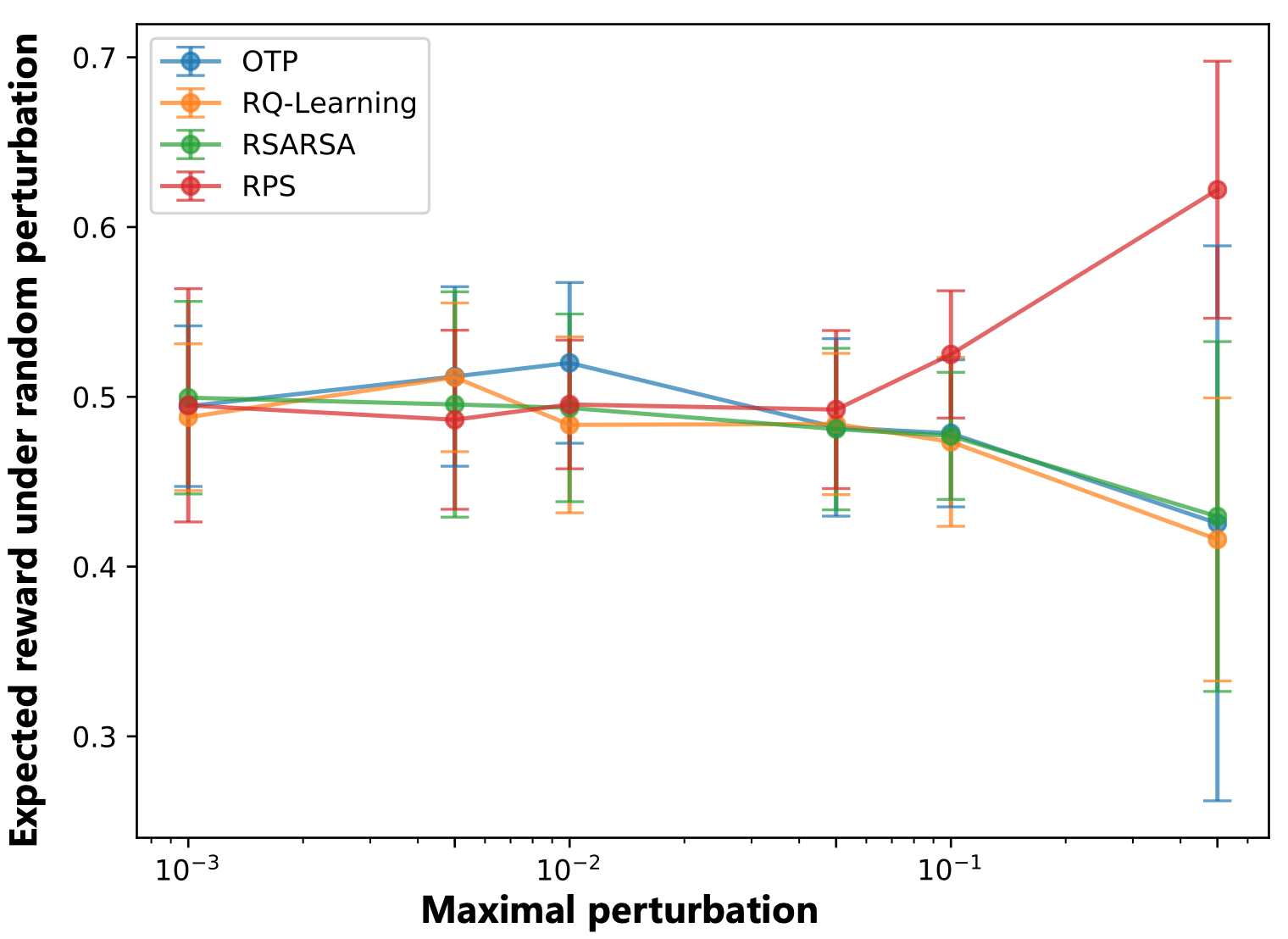}\label{fig:rand_res}}
    \caption{Results on 4 methods}
\end{figure}

The results under worst-case perturbation are shown in Fig.~\ref{fig:err}. It can be shown that only RPS can always finding the robust solution while other three methods suffer a large error for a large $u$. OTP suffer a large error because it ignores the effect of the perturbation. Online methods RSARSA and RQ-learning also fail because these online methods cannot reach $s_4$ at all. They cannot give a proper estimation for $s_4$ and cannot find the near-optimal robust solution.

We also show the results on random perturbation in Fig.~\ref{fig:rand_res}. The random perturbation here means that the difference between the training and testing transitions is sampled uniformly from the perturbation range. It can be seen that for a relatively large perturbation, a robust policy indeed gives higher expected rewards, since it takes the risks into consideration. 

\section{Methodology}
We now present a method to solve for the robust policy in RMDP. It is hard to solve the problem when the uncertainty estimation process and the robust policy solving process are entangled. We treat them separately by re-formalizing the RMDP as a two-player zero-sum game where the agent is a player maximizing reward and the perturbation is the other player minimizing the reward. We deal with the training estimation error by using sufficient samples to build an proper model for the game. Then we can use a plug-in Nash Equilibrium (NE) solver to solve the game. The NE corresponds to the robust policy and the worst-case perturbation.

The idea of solving a minimax dynamic programming have been used in previous works~\cite{nilim2004robustness,iyengar2005robust}. The key advantage of considering a two-player zero-sum game rather than simply a minimax dynamic programming is that we can handle the perturbation in a more general way. By doing so, we can estimate the environment uncertainty with RL methods and solve the robust policy with game-theoretical techniques. This separation can help us to handle problems with more complex environment uncertainty or mismatch constraints. We give more examples in the next section.

In this section, we first re-formulate the RMDP problem as a two-player zero-sum game and then present a method for RMDP with general constraint functions. Finally we give a non-asymptotic sample complexity result for our method under a mild assumption.


\subsection{RMDP as a Two-player Zero-sum Game}
\label{sec:tzg}

\begin{algorithm}[tb]
	\caption{Solving the robust policy with a Nash Equilibiurm (NE) solver}
	\label{alg:ubvp}
	\begin{algorithmic}[1]
		\STATE {\bfseries Input:} The generative model $M$, the perturbation constraint $U$ and the NE solver $\mathcal{O}$
		\FOR{$(s,a)$ in $\mathcal{S}\times\mathcal{A}$}
        \STATE Query $M$ with $(s,a)$ for $N$ times
        \STATE Calculate the transition empirical mean $\bar{P}(s,a)$ and the reward empirical mean $\bar{r}(s,a)$
		\ENDFOR
		\STATE Get an empirical model $\bar{M}=\langle \mathcal{S},\mathcal{A}, \bar{P}, \bar{r}, H\rangle$.
		\STATE Solve the NE $(\widehat{\pi}^*,\widehat{\sigma}^*)=\mathcal{O}(\bar{M},U(\bar{P}))$
		\STATE {\bfseries Output:} The robust policy $\widehat{\pi}^*$
	\end{algorithmic}
\end{algorithm}

We consider MDP $M$ with a given transition $P$ as well as the corresponding $U(P)$ defined in Eq.~\eqref{eq:perturbation}. We denote $\sigma$ as the elements in $U(P)$. For convenience, we use $M+\sigma$ to denote a perturbed MDP with transition function $P+\sigma$. The goal to learn a robust policy is to solve a maximin problem 
\begin{equation}
\label{eq:maximin_opt}
    \max\limits_\pi\min\limits_{\sigma\in U(P)} V^\pi_{M+\sigma}(s_0).
\end{equation}
With the above formulation, we can consider this task as a two-player zero-sum game where player 1 chooses $\pi$ to maximize $V^\pi_{M+\sigma}(s_0)$ while player 2 chooses $\sigma$ to minimize $V^\pi_{M+\sigma}(s_0)$. From this point of view, we define $\mathcal{V}$ and $\mathcal{Q}$ values to replace the $V$ and $Q$ values as 
\begin{align}
\mathcal{V}_{M}^{\pi,\sigma}(s):=V_{M+\sigma}^\pi(s),\
\mathcal{Q}_{M}^{\pi,\sigma}(s,a):=Q_{M+\sigma}^\pi(s,a).
\label{eq:game_q}
\end{align}

Here $(\pi,\sigma)$ are the policy pair of the two players and $\mathcal{V}_{M}^{\pi,\sigma}(s)$ represents the expected value for player 1 at state $s$ of depth $h$. The expected reward for player 2 is accordingly $-\mathcal{V}_{M}^{\pi,\sigma}(s)$.

We further use $\sigma(s,a)$ and $\sigma(s'|s,a)$ to denote the perturbations added to $P(s,a)$ and $P(s'|s,a)$, respectively. Then the Bellman Equation for this game can be shown as:
\begin{small}
\begin{align*}
\mathcal{V}_{M}^{\pi,\sigma}(s)
&= 
r(s,\pi(s)) \\
&+ \sum_{s'}\Big(P(s'|s,\pi(s))+\sigma(s'|s,\pi(s))\Big)\mathcal{V}_{M}^{\pi,\sigma}(s').
\end{align*}
\end{small}

Solving problem~\eqref{eq:maximin_opt} is equivalent to solving the Nash Equilibrium (NE) of the game. More specifically, the NE solution is corresponding to the robust policy and the worst-case perturbation. Note that there might be multiple NEs for one game. In two-player zero-sum games, all the NEs have the same reward values. Thus any NE can be considered as a solution of the original problem.

Now we turn to solve the NE of the game. We assume that we have access to an NE solver $\mathcal{O}$ which maps the MDP $M$ and perturbation set $U(P)$ to a policy pair $(\pi',\sigma')=\mathcal{O}(M,U(P))$ that satisfies
\begin{align*}
\pi' =& \mathop{\arg\max\limits}_{\pi} \mathcal{V}_{M}^{\pi,\sigma'}(s_0),\
\sigma' =  \mathop{\arg\min}_{\sigma\in U(P)} \mathcal{V}_{M}^{\pi',\sigma}(s_0).
\end{align*}

Then the policy pair $(\pi^*,\sigma^*)=\mathcal{O}(M^s,U(P^s))$ is exactly an NE for our learning goal of problem \eqref{eq:maximin_opt} and $\pi^*$ is exactly the robust policy we aim to solve. 


\subsection{Method for General Constraints}


Based on the above formulation, our method is simple and intuitive. We consider our setting that a generative model $M^s=\langle \mathcal{S},\mathcal{A}, P^s, r, H\rangle$ is given for training. For each state-action pair $(s,a)$, we query the generative model for $N$ times and gain corresponding samples for next steps and rewards, denoted as $\{s_t, r_t\}_{t=1}^N$. The number $N$ is calculated based on the target $\epsilon$ and $\delta$, as we will give in Sec.\ref{sec:main_results}. Since we assume the reward to be deterministic and we have visited each state-action pair for $N$ times, we already know the reward $r$. Here use $\mathbb{I}[\cdot]$ as the indicator function. With these $N$ samples, we estimate the transition function at $(s,a)$ with
\begin{small}
\begin{align}
\label{eq:em_r}
\widehat{P}(s'|s,a) =&  \frac{1}{N}\sum\limits_{t=1}^{N} \mathbb{I}[s_t=s'].
\end{align}
\end{small}

We construct an empirical RMDP $\widehat{M}=\langle \mathcal{S},\mathcal{A}, \widehat{P},r, H\rangle$ and calculate its perturbation set $U(\widehat{P})$. We directly construct a two-player zero-sum game with $\widehat{M}$ and $U(\widehat{P})$, as in Sec.~\ref{sec:tzg}. We apply the NE solver $\mathcal{O}$ to this game to output the policy pair $(\widehat{\pi}^*,\widehat{\sigma}^*)=\mathcal{O}(\widehat{M},U(\widehat{P}))$ such that:
\begin{small}
\begin{align}
\widehat{\pi}^* = \mathop{\arg\max\limits}_{\pi} {\mathcal{V}}_{\widehat{M}}^{\pi,\widehat{\sigma}^*}(s_0), \
\widehat{\sigma}^* = \mathop{\arg\min}_{\sigma\in U(\widehat{P})} {\mathcal{V}}_{\widehat{M}}^{\widehat{\pi}^*,\sigma}(s_0).
\label{eq:sol_delta}
\end{align}
\end{small}
For convenience, here we denote $\widehat{\mathcal{V}}^*(s)={\mathcal{V}}^{\widehat{\pi}^*,\widehat{\sigma}^*}_{\widehat{M}}(s)$, $\widehat{\mathcal{Q}}^*(s,a)={\mathcal{Q}}^{\widehat{\pi}^*,\widehat{\sigma}^*}_{\widehat{M}}(s,a)$ for $s\in\mathcal{S}_h$. 

Notice that $(\widehat{\pi}^*,\widehat{\sigma}^*)$ is solved in the constraint $\mathcal{C}(\widehat{M})$, rather than $\mathcal{C}({M}^s)$. In the next section, we will show that under some mild assumptions on $\mathcal{C}$, with a polynomial number of samples, $\widehat{M}$ can be a good approximation for $M^s$ and our solution $\widehat{\pi}^*$ is guaranteed to be a near-optimal policy.

\subsection{Theoretical Analysis}
\label{sec:main_results}
Our method above uses a plug-in NE solver to solve the empirical RMDP. Besides the estimation error caused by the sampling, the policy $\widehat{\pi}^*$ is calculated from the perturbation $U(\widehat{P})$ rather than $U(P^s)$, which causes another error. Intuitively, for a large $N$, $\widehat{M}$ would be sufficiently close to $M^s$. Thus the gaps between $\widehat{P}$ and $P^s$ would be bounded. If the error caused by the difference between $U(\widehat{P})$ and $U(P^s)$ can also be bounded, we can show that $\widehat{\pi}^*$ is a near-optimal policy. Here we make mild assumptions for $\mathcal{C}$ under which we can gain a near-optimal $\widehat{\pi}^*$ from a polynomial number of samples from the generative model. 

To show that the policy $\widehat{\pi}^*$ learned by the plug-in NE solver is near optimal, we need to show that the error term $Err(\widehat{\pi}^*)$ defined in Eq.~\eqref{eq:pi_error} can be bounded by a small value with a high probability. In general, we cannot guarantee $\widehat{\pi}^*$ to be the near-optimal robust policy even when $N$ is large, because the difference between $U(P^s)$ and $U(\widehat{P})$ can cause a large error. We add a mild assumption for the constraint $\mathcal{C}$ and then we can give sample complexity analysis.

\begin{assumption}[Lipschitz condition for the perturbation]
    \label{assp:constraint}
    For two MDPs $M=\langle \mathcal{S}, \mathcal{A}, P, r, H\rangle$ and $M'=\langle \mathcal{S}, \mathcal{A}, P', r, H\rangle$, the constraint $\mathcal{C}$ and its corresponding perturbation $U$ satisfy that for any $\sigma\in U(P)$ and the projection of $\sigma$ onto $U(P^\prime)$
    $$\sigma':=\mathop{\arg\min}\limits_{\sigma''\in U(P')}\left(\max\limits_{(s,a)}||{\sigma}''(s,a) -\sigma(s,a)||_1\right),$$
    there exits a constant $\lambda$ such that for any $(s,a)$,
    \begin{small}
    \begin{equation}
       ||{\sigma}(s,a) -\sigma'(s,a)||_1\leq \lambda||P(s,a) -P'(s,a)||_1.
    \end{equation}
    \end{small}
\end{assumption}
This assumption indicates that the distance of $\sigma'\in U(P')$ and $\sigma\in U(P)$ satisfies a Lipschitz condition, where $\sigma'$ is the closest perturbation in $U(P')$ to $\sigma$.  
This is a relatively general assumption for constraints. For example, the total variance distance, where the total variance of the perturbation is bounded, satisfies this assumption by choosing $\lambda=2$. 

For constraints satisfying Assumption \ref{assp:pwc}, our learned policy is guaranteed to be near-optimal, as shown by the following theorem. 
\begin{theorem}
\label{thm:main_thm}
For a given $(\epsilon, \delta)$, where $\epsilon\in(0,H)$ and $\delta\in(0,1)$, and a constraint $\mathcal{C}$ satisfying Assumption~\ref{assp:constraint}, if 
\begin{equation}
    N\geq \frac{8(1+\lambda)^2H^4D\ln(2/\delta')}{\epsilon^2},
\end{equation}
then with a probability no less than $1-\delta$, we have
\begin{equation}
    Err(\widehat{\pi}^*)\leq\epsilon.
\end{equation}
\end{theorem}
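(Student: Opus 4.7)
The plan is a standard plug-in error decomposition. Write $\widetilde{V}^\pi$ for the worst-case value over $U(P^s)$ and $\widehat{\widetilde{V}}^\pi$ for its empirical counterpart defined over $U(\widehat{P})$. Decomposing
\begin{align*}
Err(\widehat{\pi}^*) &= \bigl[\widetilde{V}^{\pi^*}(s_0)-\widehat{\widetilde{V}}^{\pi^*}(s_0)\bigr]\\
&\quad + \bigl[\widehat{\widetilde{V}}^{\pi^*}(s_0)-\widehat{\widetilde{V}}^{\widehat{\pi}^*}(s_0)\bigr]\\
&\quad + \bigl[\widehat{\widetilde{V}}^{\widehat{\pi}^*}(s_0)-\widetilde{V}^{\widehat{\pi}^*}(s_0)\bigr],
\end{align*}
the middle bracket is nonpositive by the maximizing property of $\widehat{\pi}^*$ on $\widehat{M}$, so it suffices to establish the uniform bound $|\widetilde{V}^\pi(s_0)-\widehat{\widetilde{V}}^\pi(s_0)|\le \epsilon/2$ over all policies $\pi$.

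Two textbook ingredients feed that uniform bound. First, a Weissman-type $\ell_1$ concentration applied to each empirical multinomial $\widehat{P}(s,a)$, together with a union bound over the $SA$ pairs (absorbed into the $\delta'$ of the theorem statement), turns the prescribed sample size into $\max_{s,a}\|\widehat{P}(s,a)-P^s(s,a)\|_1 \le \epsilon/(2(1+\lambda)H^2)$ with probability at least $1-\delta$. Second, a finite-horizon simulation lemma: two MDPs sharing reward whose transition kernels differ by at most $\eta$ uniformly in $\ell_1$ have value functions differing by at most $H^2\eta$ (since $\|V\|_\infty\le H$ and the deviations accumulate over $H$ layers). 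With these in hand, fix any $\pi$ and let $\sigma^\pi\in U(P^s)$ attain $\widetilde{V}^\pi(s_0)$; Assumption~\ref{assp:constraint} supplies a projection $\widehat{\sigma}^\pi\in U(\widehat{P})$ with $\|\widehat{\sigma}^\pi(s,a)-\sigma^\pi(s,a)\|_1 \le \lambda\|\widehat{P}(s,a)-P^s(s,a)\|_1$. The kernels of $\widehat{M}+\widehat{\sigma}^\pi$ and $M^s+\sigma^\pi$ therefore differ by at most $(1+\lambda)\max_{s,a}\|\widehat{P}(s,a)-P^s(s,a)\|_1\le \epsilon/(2H^2)$, so the simulation lemma gives $V^\pi_{\widehat{M}+\widehat{\sigma}^\pi}(s_0)-V^\pi_{M^s+\sigma^\pi}(s_0)\le \epsilon/2$. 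Since $\widehat{\widetilde{V}}^\pi(s_0)\le V^\pi_{\widehat{M}+\widehat{\sigma}^\pi}(s_0)$ by minimality and $\widetilde{V}^\pi(s_0)=V^\pi_{M^s+\sigma^\pi}(s_0)$ by definition, one direction of the uniform bound is done; the reverse direction is symmetric, projecting the empirical minimizer $\widehat{\sigma}^*_\pi\in U(\widehat{P})$ back into $U(P^s)$.

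\textbf{Main obstacle.} The load-bearing step is not the concentration or the simulation lemma but the coupling between $U(P^s)$ and $U(\widehat{P})$: these are distinct polytopes in $\mathcal{P}$ whose shape depends on the base kernel, so the operator $\min_{\sigma\in U(P)} V^\pi_{M+\sigma}(s_0)$ is not automatically continuous in $P$. Assumption~\ref{assp:constraint} is precisely the modulus-of-continuity hypothesis that lets an optimizer on one side be transported to the other at cost $O(\lambda\|\widehat{P}-P^s\|_1)$; without it, arbitrarily close base kernels could give rise to wildly different robust values and no sample budget would suffice. A secondary point needing care is verifying that the projected perturbation $\widehat{\sigma}^\pi$ keeps $\widehat{P}+\widehat{\sigma}^\pi$ a valid (nonnegative, normalized) transition kernel, so that the simulation-lemma accounting is legitimate.
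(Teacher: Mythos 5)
Your proposal is correct and follows essentially the same route as the paper: the three-term decomposition with the nonpositive middle bracket is exactly the paper's Lemma~\ref{lem:err_decompose} (which exploits the same NE/optimality properties of $\widehat{\pi}^*$ and $\widehat{\sigma}^*$), your simulation lemma is the paper's value-difference decomposition Lemma~\ref{lem:v_gap_decompose_gen}, and the combination of the $\ell_1$ concentration $\|\widehat{P}(s,a)-P^s(s,a)\|_1\le\sqrt{2D\ln(2/\delta')/N}$ with the Lipschitz transport of perturbations between $U(P^s)$ and $U(\widehat{P})$ reproduces the paper's bound $(1+\lambda)H^2\alpha(N)$ on each term, yielding the identical sample size. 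The only cosmetic difference is that you establish the bound uniformly over all policies rather than just for $\pi^*$ and $\widehat{\pi}^*$, which costs nothing since the concentration event is policy-independent; your closing remarks correctly identify Assumption~\ref{assp:constraint} as the load-bearing hypothesis, and the validity of $\widehat{P}+\widehat{\sigma}^\pi$ as a kernel is automatic from the definition of $U(\widehat{P})$ in Eq.~\eqref{eq:perturbation}.
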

The complete proof for this theorem is given in Appendix~\ref{app:main_thm}. Below we give a brief proof overview.
\begin{proof}
The proof process can be decomposed into 3 steps. In the first step we decompose $Err(\widehat{\pi}^*)$ into 2 terms, and then we provide upper bound for the two terms correspondingly. 

We first define four auxiliary policies as
\begin{small}
\begin{align}
\sigma^* = & \mathop{\arg\min}\limits_{\sigma\in U(P^s)} \mathcal{V}^{\pi^*,\sigma}_{M^s}(s_0),\ \sigma' = \mathop{\arg\min}\limits_{\sigma\in U(P^s)} \mathcal{V}^{\widehat{\pi}^*,\sigma}_{M^s}(s_0),\label{eq:sigma_prime}\\
\widetilde{\sigma}^* = & \mathop{\arg\min}\limits_{\sigma\in U(P^s)}(\max\limits_{(s,a)}||\widehat{\sigma}^* -\sigma||_1),\ 
\widetilde{\sigma}' = \mathop{\arg\min}\limits_{\sigma\in U(\widehat{P})}(\max\limits_{(s,a)}||\sigma' - \sigma||_1).\label{eq:sigma_tilde_prime}
\end{align}
\end{small}
The first two represent the best responses to $\pi^*$ and $\widehat{\pi}^*$ respectively under $U(P^s)$. $\widetilde{\sigma}^*$ is the projection of $\widehat{\sigma}^*$ (recall the definition in Eq.~\eqref{eq:sol_delta}, and notice that $\widehat{\sigma}^*\in U(\widehat{P})$.) onto $U(P^s)$. The last $\widetilde{\sigma}'$ is the projection of $\sigma'$ onto $U(\widehat{P})$.

Now we sketch the proof in 3 steps.

\textbf{Step 1. } 
We first provide an upper bound for $Err(\widehat{\pi}^*)$ with the following lemma.
\begin{lemma}
\label{lem:err_upper_bound}
\begin{equation}
Err(\widehat{\pi}^*)\leq e_1 + e_2,
\end{equation}	where
\begin{small}
$$e_1:=\mathcal{V}^{\pi^*,\widetilde{\sigma}^*}_{M^s}(s_0) - \mathcal{V}^{{\pi}^*,\widehat{\sigma}^*}_{\widehat{M}}(s_0),\  e_2:= \mathcal{V}^{\widehat{\pi}^*,\widetilde{\sigma}'}_{\widehat{M}}(s_0) -  \mathcal{V}^{\widehat{\pi}^*,{\sigma}'}_{M^s}(s_0).$$
\end{small}
\end{lemma}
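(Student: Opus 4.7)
The plan is to start from the definition $Err(\widehat{\pi}^*) = \widetilde{V}^{\pi^*}(s_0) - \widetilde{V}^{\widehat{\pi}^*}(s_0)$ and rewrite each term using the best-response formulation: since $\widetilde{V}^{\pi}(s_0) = \min_{\sigma\in U(P^s)} \mathcal{V}^{\pi,\sigma}_{M^s}(s_0)$ by the identification $V^\pi_{M+\sigma} = \mathcal{V}^{\pi,\sigma}_M$, we have $\widetilde{V}^{\pi^*}(s_0) = \mathcal{V}^{\pi^*,\sigma^*}_{M^s}(s_0)$ and $\widetilde{V}^{\widehat{\pi}^*}(s_0) = \mathcal{V}^{\widehat{\pi}^*,\sigma'}_{M^s}(s_0)$ by the definitions of $\sigma^*$ and $\sigma'$ in \eqref{eq:sigma_prime}. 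Therefore $Err(\widehat{\pi}^*) = \mathcal{V}^{\pi^*,\sigma^*}_{M^s}(s_0) - \mathcal{V}^{\widehat{\pi}^*,\sigma'}_{M^s}(s_0)$.

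Next I would relax the first term upward by replacing $\sigma^*$ with $\widetilde{\sigma}^*$: since $\widetilde{\sigma}^*\in U(P^s)$ by construction and $\sigma^*$ minimizes $\mathcal{V}^{\pi^*,\cdot}_{M^s}(s_0)$ over $U(P^s)$, we get $\mathcal{V}^{\pi^*,\sigma^*}_{M^s}(s_0) \leq \mathcal{V}^{\pi^*,\widetilde{\sigma}^*}_{M^s}(s_0)$. Then I would telescope by adding and subtracting the two intermediate ``empirical'' quantities $\mathcal{V}^{\pi^*,\widehat{\sigma}^*}_{\widehat{M}}(s_0)$ and $\mathcal{V}^{\widehat{\pi}^*,\widetilde{\sigma}'}_{\widehat{M}}(s_0)$, which yields
\begin{align*}
Err(\widehat{\pi}^*) \leq\ & \bigl[\mathcal{V}^{\pi^*,\widetilde{\sigma}^*}_{M^s}(s_0) - \mathcal{V}^{\pi^*,\widehat{\sigma}^*}_{\widehat{M}}(s_0)\bigr] \\
& + \bigl[\mathcal{V}^{\pi^*,\widehat{\sigma}^*}_{\widehat{M}}(s_0) - \mathcal{V}^{\widehat{\pi}^*,\widetilde{\sigma}'}_{\widehat{M}}(s_0)\bigr] \\
& + \bigl[\mathcal{V}^{\widehat{\pi}^*,\widetilde{\sigma}'}_{\widehat{M}}(s_0) - \mathcal{V}^{\widehat{\pi}^*,\sigma'}_{M^s}(s_0)\bigr] \\
=\ & e_1 + X + e_2,
\end{align*}
where $X$ is the middle bracket. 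The outer two brackets are precisely $e_1$ and $e_2$ as defined in the lemma.

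The crux of the argument is to show $X \leq 0$, and this is where the saddle-point (Nash Equilibrium) property of $(\widehat{\pi}^*,\widehat{\sigma}^*)$ on the empirical game $(\widehat{M}, U(\widehat{P}))$ will be used. By \eqref{eq:sol_delta}, $\widehat{\pi}^*$ is a best response to $\widehat{\sigma}^*$ on $\widehat{M}$, so $\mathcal{V}^{\pi^*,\widehat{\sigma}^*}_{\widehat{M}}(s_0) \leq \mathcal{V}^{\widehat{\pi}^*,\widehat{\sigma}^*}_{\widehat{M}}(s_0)$; and $\widehat{\sigma}^*$ is a best response to $\widehat{\pi}^*$ on $U(\widehat{P})$, and $\widetilde{\sigma}'$ lies in $U(\widehat{P})$ by construction \eqref{eq:sigma_tilde_prime}, so $\mathcal{V}^{\widehat{\pi}^*,\widehat{\sigma}^*}_{\widehat{M}}(s_0) \leq \mathcal{V}^{\widehat{\pi}^*,\widetilde{\sigma}'}_{\widehat{M}}(s_0)$. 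Chaining these gives $\mathcal{V}^{\pi^*,\widehat{\sigma}^*}_{\widehat{M}}(s_0) \leq \mathcal{V}^{\widehat{\pi}^*,\widetilde{\sigma}'}_{\widehat{M}}(s_0)$, i.e., $X \leq 0$, which completes the bound.

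I do not expect a genuine technical obstacle here; the main subtlety is bookkeeping, namely choosing the right two intermediate MDP/perturbation pairs so that (i) the leftover middle term $X$ can be sign-controlled by the NE property on the empirical game, and (ii) the two remaining terms $e_1, e_2$ each compare a value under $(M^s, U(P^s))$ with its counterpart under $(\widehat{M}, U(\widehat{P}))$ while using policies/perturbations that are mutual projections across the two perturbation sets. The projection choices $\widetilde{\sigma}^*$ and $\widetilde{\sigma}'$ from \eqref{eq:sigma_tilde_prime} are exactly what will later make $e_1$ and $e_2$ amenable to the Lipschitz Assumption \ref{assp:constraint} combined with concentration of $\widehat{P}$ around $P^s$, but the present lemma only requires the decomposition itself, so nothing beyond the saddle-point inequality is needed here.
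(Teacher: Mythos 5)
Your proof is correct and is essentially the same as the paper's: both reduce $Err(\widehat{\pi}^*)$ to $\mathcal{V}^{\pi^*,\sigma^*}_{M^s}(s_0)-\mathcal{V}^{\widehat{\pi}^*,\sigma'}_{M^s}(s_0)$, insert the same two intermediate empirical quantities, and kill the leftover terms with the same three inequalities (minimality of $\sigma^*$ over $U(P^s)$, maximality of $\widehat{\pi}^*$ against $\widehat{\sigma}^*$, and minimality of $\widehat{\sigma}^*$ over $U(\widehat{P})$ applied to $\widetilde{\sigma}'$). The only difference is cosmetic: the paper first splits into two pieces $\kappa_1,\kappa_2$ around $\mathcal{V}^{\widehat{\pi}^*,\widehat{\sigma}^*}_{\widehat{M}}(s_0)$ and bounds each, whereas you telescope once and bound the single middle bracket.
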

Here $e_1$ measures the error from transferring $\widehat{\sigma}^*$ on $\widehat{M}$ to $\widetilde{\sigma}^*$ on $M^s$ given fixed $\pi^*$. Similarly, $e_2$ measures the error under $\hat{\pi}^*$.  We then bound the two errors separately.

\textbf{Step 2. }
For $e_1$, notice that the difference between the two $\mathcal{V}$ values is caused by the difference of transitions and perturbations. We can decompose $e_1$ into state-action pair-wise error terms with Lemma~\ref{lem:v_gap_decompose_gen} in Appendix~\ref{app:lemmas}. For clarity, we use $\mathcal{V}^{{\pi}^*,\widehat{\sigma}^*}_{\widehat{M},h}$ to denote the vector composed of all $\mathcal{V}^{{\pi}^*,\widehat{\sigma}^*}_{\widehat{M}}(s)$ where $s\in\mathcal{S}_{h}$. Then we have
\begin{small}
\begin{align}
& \mathcal{V}^{\pi^*,\widetilde{\sigma}^*}_{M^s}(s_0) - \mathcal{V}^{{\pi}^*,\widehat{\sigma}^*}_{\widehat{M}}(s_0) \notag\\
= & \sum_{h=1}^H\sum_{s\in \mathcal{S}_h}\sum_{a}{\xi}^{\pi^*,\widetilde{\sigma}^*}_{M^s}(s,a)\left( (P^s(s,a)-\widehat{P}(s,a))^\top \mathcal{V}^{{\pi}^*,\widehat{\sigma}^*}_{\widehat{M},h+1}\right) \label{eq:e1_p_gap}
\\
& + \sum_{h=1}^H\sum_{s\in \mathcal{S}_h}\sum_{a}{\xi}^{\pi^*,\widetilde{\sigma}^*}_{M^s}(s,a)\left( (\widetilde{\sigma}^*(s,a)-\widehat{\sigma}^*(s,a))^\top \mathcal{V}^{{\pi}^*,\widehat{\sigma}^*}_{\widehat{M},h+1}\right).\label{eq:e1_sigma_gap}
\end{align}
\end{small}
where $\xi^{\pi,\sigma}_M(s,a)$ is defined as the probability of reaching $(s,a)$ on MDP $M$ following policy $\pi$ and perturbation $\sigma$. 

Here the first summation term characterizes the estimation error caused by the samples, and the second characterizes the error caused by using an improper perturbation set.

Consider $(P^s(s,a)-\widehat{P}(s,a))^\top \mathcal{V}^{{\pi}^*,\widehat{\sigma}^*}_{\widehat{M},h+1}$ in Eq.~\eqref{eq:e1_p_gap}. For general constraint $\mathcal{C}$,  $\widehat{\sigma}^*$ can be correlated with $\widehat{P}$. Therefore, $\widehat{P}$ and $\mathcal{V}^{{\pi}^*,\widehat{\sigma}^*}_{\widehat{M},h+1}$ are not independent random variables. 
According to the Azuma's inequality on $\ell_1$-norm, we have that with a probability at least $1-\delta'$,
\begin{equation}
    ||P^s(s,a)-\widehat{P}(s,a)||_1\leq \alpha(N),
\end{equation}
where we define
	\begin{equation}
	\label{eq:alpha_def}
    \alpha(N)=\sqrt{2D\ln(2/\delta')/N}.
\end{equation}
Recall our definition of $D\geq|\mathcal{S}_{h}|$ for all $h\in[H]$. This property holds for all $(s,a)$ pairs.

Using the Cauchy-Schwarz inequality, we have that
\begin{small}
\begin{align}
    &(P^s(s,a)-\widehat{P}(s,a))^\top \mathcal{V}^{{\pi}^*,\widehat{\sigma}^*}_{\widehat{M},h+1} \nonumber
    \\
    \leq & ||P^s(s,a)-\widehat{P}(s,a)||_1||\mathcal{V}^{{\pi}^*,\widehat{\sigma}^*}_{\widehat{M},h+1}||_\infty
    \leq H\alpha(N).
\end{align}
\end{small}

For the $(\widetilde{\sigma}^*(s,a)-\widehat{\sigma}^*(s,a))^\top \mathcal{V}^{{\pi}^*,\widehat{\sigma}^*}_{\widehat{M},h+1}$ in Eq.~\eqref{eq:e1_sigma_gap}, we use the Cauchy–Schwarz inequality and Assumption~\ref{assp:constraint} to get
\begin{small}
\begin{align}
    & (\widetilde{\sigma}^*(s,a)-\widehat{\sigma}^*(s,a))^\top \mathcal{V}^{{\pi}^*,\widehat{\sigma}^*}_{\widehat{M},h+1} 
    \leq \lambda H\alpha(N).
\end{align}
\end{small}

Recall our definition for reaching probability $\xi^{\pi,\sigma}_M(s,a)$ and we have that $\sum_{s\in \mathcal{S}_h,a\in\mathcal{A}}{\xi}^{\pi,{\sigma}}_{M}(s,a)=H$.

Notice that $\alpha$ is a decreasing function with respect to $N$, and converges to 0 as $N$ goes to infinity. Then with 
\begin{small}
\begin{equation}
    N\geq \alpha^{-1}(\epsilon/(2(1+\lambda)H^2))=\frac{8(1+\lambda)^2H^4D\ln(2/\delta')}{\epsilon^2},
    \label{eq:general_N}
\end{equation}
\end{small}
we have that $e_1\leq \epsilon/2$. 

\textbf{Step 3.} Using similar analysis techniques (refer to Appendix~\ref{app:e2}) and $N$ in Eq.~\eqref{eq:general_N}, we also have $e_2\leq \epsilon/2$.
Finally, we choose $\delta'=\delta/(4SA)$ and use the union bound to get that with a probability no less than $1-\delta$, $Err(\widehat{\pi}^*)\leq \epsilon.$
\end{proof}
By roughly considering $D$ as $S$, this theorem shows that with a sample complexity of $\widetilde{O}(S^2AH^4(1+\lambda)^2/\epsilon^2)$, our method with a plug-in NE solver can find an $\epsilon$-optimal policy with a high probability. Further if our NE solver returns an approximate NE with a value error $\epsilon'$, then we can simply replace $\epsilon$ with $\epsilon-\epsilon'$ to calculate $N$.

\section{Cases and Extensions}
\label{sec:case_and_extension}

In this section, we give some examples for RMDPs. Finally we give a further extension which is also suitable to apply our game-theoretical framework.

\subsection{Pair-Wise Constraint (PWC)} 
\label{sec:pwc}

Here we consider a specific kind of constraints, where the constraint $\mathcal{C}$ has independent effect on each state-action pair. We call this as the \textit{Pair-Wise Constraint} (PWC). Many practical problems provide constraints directly on state-action pairs, such as Total Variation distance bound on transitions. These constraints can be included in the PWC.

For PWC problems, we can implement a simple NE solver based on value back-propagation, which has a quite similar form to methods in previous works \cite{nilim2004robustness,iyengar2005robust}. Detailed definition for PWC and its NE solver are given in Appendix~\ref{sec:pwc_results}. Here we only give its theoretical result.
\begin{theorem}
\label{thm:pwc_thm}
For a given $(\epsilon, \delta)$, where $\epsilon\in(0,H)$ and $\delta\in(0,1)$, and a PWC $\mathcal{C}$ satisfying Assumption~\ref{assp:constraint}, if
\begin{small}
\begin{equation}
   N\geq \frac{8H^4\ln(8SA/\delta)}{\epsilon^2}(2+\lambda\sqrt{D})^2,
\end{equation}
\end{small}
then with a probability no less than $1-\delta$, 
\begin{equation}
    Err(\widehat{\pi}^*)\leq\epsilon.
\end{equation}
\end{theorem}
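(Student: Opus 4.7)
The plan is to follow the same three-step template that underlies the proof of Theorem~1, but to exploit the pair-wise structure of the constraint in order to replace the crude $\ell_1$-norm bound on the transition-sampling error with a direct Hoeffding bound on an inner product. This single substitution is what converts the $(1+\lambda)^2 D$ factor of Theorem~1 into $(2+\lambda\sqrt D)^2$, since $\sqrt D$ will now appear only on the perturbation-projection piece rather than also on the pure estimation piece.

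First, I would invoke Lemma~1 to obtain $Err(\widehat\pi^*)\le e_1+e_2$, and then apply the value-gap identity used in Step~2 of the Theorem~1 proof to decompose each $e_i$ into a weighted sum over $(s,a)$ of two inner-product terms: a ``transition'' term of the form $(P^s(s,a)-\widehat P(s,a))^\top V$ and a ``perturbation'' term of the form $(\widetilde\sigma(s,a)-\widehat\sigma(s,a))^\top V$, where $V$ is the corresponding next-depth value block and the weights are the reach probabilities $\xi$, which sum to $H$.

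The key new step is to bound the transition term by Hoeffding rather than by the $\ell_1$ concentration used in Theorem~1. Under a PWC the NE can be computed by backward induction because the adversary's feasible set at $(s',a')$ depends only on $\widehat P(s',a')$. By induction on depth, $\mathcal V^{\pi^*,\widehat\sigma^*}_{\widehat M, h+1}$ is therefore a measurable function of the samples drawn at pairs $(s',a')$ with $s'\in\mathcal S_{h'}$, $h'\ge h+1$, and hence is independent of the samples used to form $\widehat P(s,a)$ for $s\in\mathcal S_h$. Conditioning on this $V$, the quantity $\widehat P(s,a)^\top V = \frac{1}{N}\sum_{t=1}^N V(s_t)$ is an average of $N$ iid $[0,H]$-valued summands with mean $P^s(s,a)^\top V$, so Hoeffding yields $|(P^s(s,a)-\widehat P(s,a))^\top V|\le H\sqrt{\ln(2/\delta')/(2N)}$ with probability at least $1-\delta'$, and crucially no $\sqrt D$ appears. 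For the perturbation term I would apply the Cauchy--Schwarz inequality and Assumption~\ref{assp:constraint} to get a $\lambda H\|P^s(s,a)-\widehat P(s,a)\|_1$ bound, and then use the $\ell_1$ concentration $\sqrt{2D\ln(2/\delta')/N}$ from the general proof; this is where the $\lambda\sqrt D$ piece arises. Summing with $\sum\xi=H$, setting each of $e_1,e_2\le\epsilon/2$, choosing $\delta'=\delta/(4SA)$ so that $\ln(2/\delta')=\ln(8SA/\delta)$, and taking a union bound over the concentration events at every $(s,a)$ pair produces the claimed sample bound.

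The main obstacle will be Step~2: one has to first specify the backward-induction NE solver for PWC (the one deferred to the appendix) and verify that its output coincides with the NE returned by $\mathcal O$ in Algorithm~1, so that the ``future-looking'' independence property really applies to the $\widehat\sigma^*$ that appears in the decomposition. The $e_2$ half requires a symmetric but slightly trickier argument, because $\widehat\pi^*$ itself depends on all of $\widehat P$; however, under PWC the projection $\widetilde\sigma'$ is again pair-wise, so $\mathcal V^{\widehat\pi^*,\widetilde\sigma'}_{\widehat M, h+1}$ is measurable with respect to samples at depths $\ge h+1$ only, and the same Hoeffding argument goes through on the $e_2$ side.
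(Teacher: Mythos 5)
Your proposal is correct and follows essentially the same route as the paper's proof: the same decomposition into $e_{11},e_{12},e_{21},e_{22}$, the same key observation that the backward-induction PWC solver makes $\mathcal{V}^{\cdot,\cdot}_{\widehat{M},h+1}$ measurable with respect to samples at depths $\geq h+1$ and hence independent of $\widehat{P}(s,a)$ at depth $h$, and the same Cauchy--Schwarz plus Assumption~\ref{assp:constraint} treatment of the perturbation term, which is where the $\lambda\sqrt{D}$ arises. The only difference is that you apply Hoeffding to the conditioned inner product where the paper applies Bernstein (Lemma~\ref{lem:bernstein}) and then crudely bounds the variance by $H^2$; this is immaterial and in fact gives a marginally better constant.
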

The detail of the proof is given in Appendix~\ref{app:pwc_thm}. This theorem shows that compared with Theorem~\ref{thm:main_thm}, the sample complexity of PWCs for model estimation can be improved from $\widetilde{O}(DSAH^4/\epsilon^2)$ to $\widetilde{O}(SAH^4/\epsilon^2)$, while the bound for perturbation set error still suffers a scale of $D\lambda^2$. 

We also give more results for specific PWC cases. For RMDP with fixed perturbation sets, where the perturbation is independent of the environment parameters, we have $Err(\widehat{\pi}^*) \leq \epsilon$ with a probability no less than $1-\delta$, with $N\geq \frac{8H^4\ln(8SA/\delta)}{\epsilon^2}$. Notice that this sample complexity is comparable to the results of solving MDPs without perturbation, which has an order of $\widetilde{O}(SAH^3\min(H,S)/\epsilon^2)$ (See Sec. C.2 of \citet{cui2020plug}). In other words, if our estimation of $\widehat{P}$ would not cause a perturbation difference, the complexity for solving a RMDP is comparable to that of solving an MDP.

Further we consider another kind of PWC constraint, the Total Variation Distance Constraint (TVDC). For a fixed value $u$, for each state-action pair $(s,a)$ and any probability $p\in\Delta_{sa}$, the TVDC is defined as
\begin{equation}
	\mathcal{C}_{TVD}(p;u):=\{p'\in\Delta_{sa}:||p'-p||_{TV}\leq u\}.
\end{equation}

We also aim to ensure $Err(\widehat{\pi}^*) \leq \epsilon$ with a probability no less than $1-\delta$.  If $u\leq \epsilon/(16H^2)$, then we can choose $N\geq \frac{128H^4\ln(8SA/\delta)}{\epsilon^2}$; otherwise we choose $N\geq \max\left(\frac{16H^4\ln(8SA/\delta)}{\epsilon^2}(\sqrt{2}+6\sqrt{uD})^2,\frac{49D\ln(8SA/\delta)}{27u}\right).$

The complete theorem and its proof are given in Appendix~\ref{app:tvdc_thm}. We can see that for a small $u$, we can nearly consider the TVDC as a fixed perturbation constraint and it has the sample complexity order comparable to that of solving a MDP without perturbation. For $u>\epsilon/(16H^2)$, the perturbation set difference would cause a scale of $1+uD$ on the sample complexity. That is, the larger $u$ is, the larger the sample complexity is needed to guarantee the $Err(\widehat{\pi}^*)$. Finally, there is a higher order term of $49D\ln(8SA/\delta)/(27u)\leq 49*16H^2D\ln(8SA/\delta)/(27\epsilon)$.

\subsection{Homogeneous perturbation}

PWC problems can be solved by minimax dynamic programming, which is quite similar to previous methods. Here we give a simple example to show that our game-theoretical framework is able to handle more flexible constraints. Here we consider a homogeneous perturbation (HP), where the perturbation on all state-action pairs should be the same. Here we simply assume that $\Delta_{sa}$ has the same dimension for all $(s,a)$ and perturbation $\sigma$ from a fixed perturbation set $U$ is added on all state-action pairs simultaneously. For HP, we cannot choose a worst-case perturbation for each $(s,a)$, and we need to consider the MDP parameters as a whole.

This problem cannot be solved by a directly minimax optimization on state-action pairs. From our game-theoretical point of view, our two-player zero-sum game defined in Sec.~\ref{sec:tzg} reduce to a two-player zero-sum extensive game with imperfect information (TZEGI) (Refer Sec. 3.7 of \citet{2007Algorithmic} and Def 1 of \citet{zinkevich2007regret}). In TZEGI, players make decisions on infosets, which includes all the states that the player cannot distinguish. Here we consider the perturbation play takes action the infoset which includes all state-action pairs. Therefore we only need to find an NE solver for TZEGI. Fortunately \citet{zinkevich2007regret} provide us an approximate NE solver, the counterfactual regret minimization (CFR)\footnote{Notice that CFR usually returns a random policy.}. We can use CFR as the solver to return an $\epsilon/4$-NE and choose $N=O(H^4D\ln(1/\delta)/\epsilon^2)$.

Hence our game-theoretical framework indeed provide us a way to handle RMDP with more general constraints. More detailed discussion for this is given in Appendix~\ref{app:hp}.

\subsection{Extension to Robust Partial Observable MDP}

Our method is even able to extend to robust Partial Observable MDP (RPOMDP) problems, where the agent is given observations instead of states. 

Consider an RPOMDP problem with the training environment parameters give. The key to solve an RPOMDP is to formalize this problem as a TZEGI, where the agent's infosets includes all the histories it cannot distinguish now. Again we apply the CFR algorithm to find the robust policy for this task. We give detailed discussion in Appendix~\ref{app:rpomdp}.

\section{Discussions}

Here we give some discussions for our results. 

\textbf{Rectangularity Assumption (RA):} many previous works \cite{yang2021towards} consider RA, which is also a PWC and is appliable with Thm.~\ref{thm:pwc_thm}. The standard robust dynamic programming technique is suitable for RA, but cannot be used under our weak Lipschitz assumption.

\textbf{Lower bound:} \citet{azar2013minimax} give the lower bound for solving an episodic MDP with a generative model of order $O(SAH^3\ln(SA/\delta)/\epsilon^2)$. This can be considered as a lower bound for RMDP. However, it is not clear whether the lower bound can be improved according to the property of the constraints. For the fixed perturbation constraint and the TVDC with small $u$, our method matches this lower bound except an extra $H$. For the general constraint with a Lipschitz constant $\lambda$, our upper bound matches the lower bound except an extra scale of $HD(1+\lambda)^2$.

\textbf{Online setting:} In the online setting, the agent interacts with the environment to gain trajectories and uses the trajectories to learn policy. The lower bound for online and generative model settings are the same \cite{azar2013minimax}. However, for the RMDP problem in this work, the online methods might suffer a sample complexity growing exponentially with $H$ or even infinity. The key challenge is that the agent in the online problem can only reach states according to $P^s$. For example, consider the case that the reaching probability to a state $s$ under MDP $M^s$ and any policy $\pi$ is 0, while it can be reached under the true environment $M^*$. Then the agent can never know the reward at and after $s$, and this might cause a large error. It is still not clear whether we can solve this issue for the online setting.

\textbf{Comparison with MDP without perturbations:} Compared to current results of $\widetilde{O}(SAH^3\min(H,S)/\epsilon^2)$ in \citet{cui2020plug} for finite-horizon MDPs without perturbation, our result in Theorem~\ref{thm:main_thm} suffers an extra $D(1+\lambda)^2$ term. The term $(1+\lambda)^2$ comes from the Lipschitz condition of $\mathcal{C}$, and the $D$ appears as we require transitions for all state-action pair $(s,a)$ to be tight enough. The appearance of $D$ is similar to the conclusion of the recently studied reward-free reinforcement learning setting \citep[e.g.][]{jin2020reward, zhang2020nearly}.

\section{Related Work}
Many existing works concentrate on the training and testing mismatch problem on MDPs. A straightforward way is to connect the two MDPs and transfer knowledge from $M^s$ to $M^*$. \citet{rusu2017sim} apply transfer learning to transfer features for testing. \citet{jiang2018pac} theoretically gives some results on how to repair a mismatched training MDP.

As for the problems where the testing environment is not accessible, there are also various works focusing on solving the robust policies via maximin methods. \citet{dupavcova1987minimax} discusses the minimax approaches for stochastic problems and \citet{shapiro2002minimax} give a Bayesian approach.  \citet{nilim2004robustness} and \citet{iyengar2005robust} propose a general framework for the maximin solutions for RMDPs. However, they require full knowledge of $M^s$ and the perturbations. \citet{wiesemann2013robust} concentrate on RMDP under RA and \citet{puggelli2013polynomial} focus on convec uncertainties. Recent work proposes a model-free method for RMDPs \cite{roy2017reinforcement} and the extension to MDPs with functional approximations is also considered~\cite{tamar2014scaling,panaganti2020modelfree}. \citet{derman2020bayesian} propose a Bayesian method for RMDPs. Although they consider the problems where the interactions with $M^s$ are needed, they lack non-asymptotic analysis for the sample complexity. Some further work \cite{cubuktepe2021robust} considers RPOMDPs under certain assumptions.

As for the solving an MDP with a generative model, plenty of works \cite{kearns1999finite,kakade2003sample,azar2012sample,azar2013minimax,cui2020plug} have been done for the sample complexity to find an optimal policy. They mostly concentrate on infinite-horizon MDP with a discount factor $\gamma$ and currently have reached the lower bound of $\widetilde{O}(SA/((1-\gamma)^3\epsilon^2)$. However, their results cannot be applied to RMDPs directly since the perturbation makes the environment for the agent no longer stable.

\section{Conclusion}

We focus on the policy learning for Robust Markov Decision Process where the training and testing MDPs are mismatched. For general constraints on the perturbation, we solve RMDP problems by re-formalizing the problem as finding an NE of a two-player zero-sum game and applying a plug-in NE solver. For a constraint satisfying the Lipschitz condition with a constant $\lambda$, our method has a sample complexity of $\widetilde{O}(DSAH^4(1+\lambda)^2/\epsilon^2)$. Our method can solve constraints like PWC and HP and is able to extend to solve problems like RPOMDPs. 

\section{Acknowledgments}

This work was supported by the National Key Research and Development Program of China (No.s 2020AAA0106000, 2020AAA0104304, 2020AAA0106302), NSFC Projects (Nos. U19A2081, 61620106010, 62061136001, 62076147, U19B2034, U1811461), Beijing NSF Project (No. JQ19016), Beijing Academy of Artificial Intelligence (BAAI), Tsinghua-Huawei Joint Research Program, and Tsinghua Institute for Guo Qiang.

\bibliography{mybib}

\newpage
\appendix
\onecolumn

\section{Notations}
	
We list our notations in this section.
\\
\\
\begin{tabular}{ll}
\hline
     Notation & Explanation \\
     \hline
    $M$ & An Markov Decision Process (MDP). \\
    $\mathcal{S}$ & The set of states. \\
    $\mathcal{S}_h$ & The set of states at depth $h$. \\
    $S$ & The number of total states. \\
    $\mathcal{A}$ & The set of actions. \\
    $A$ & The number of total actions. \\
    $P$ & The transition function. \\
    $P(s,a)$ & The transition vector at state-action pair $(s,a)$. \\
    $P(s'|s,a)$ & The probability to transit to $s'$ from state-action pair $(s,a)$. \\
    $r$ & The deterministic reward function. \\
    $H$ & The horizon of the MDP. \\
    $s_0$ & The initial state at depth $1$.\\
    $s_{\tau}$ & The terminal state at depth $H+1$. \\
    $D$ & $D=\max_{h\in[H]}|\mathcal{S}_h|$. \\
    $G$ & $G=\sum_{h\in[H]}|\mathcal{S}_h|$. \\
    $\Delta_{sa}$ & The probability simplex for $P(s,a)$. \\
    $\pi$ & The policy of the agent, mapping each state to an action. \\
    $V^{\pi}_{M}(s)$ & The expected total reward from state $s$ at depth $h$ following $\pi$ under $M$. \\
    $Q^{\pi}_{M}(s,a)$ & The expected total reward from state-action pair $(s.a)$ at depth $h$ following $\pi$ under $M$.. \\
    $V^{\pi}_{M.h}$ & The vector for of $V^{\pi}_{M}(s)$ for $s\in\mathcal{S}_h$. \\
    $M^s$ & The training MDP. \\
    $M^*$ & The testing MDP. \\
    $\mathcal{P}$ & The set of all possible transitions, i.e. $\prod_{(s,a)}\Delta_{D_{sa}}$. \\
    $\mathcal{C}$ & The constraint, mapping each $P\in\mathcal{P}$ to a subset of $\mathcal{P}$. \\
    $U$ & The perturbation set, $U(P)=\mathcal{C}(P)-P$. \\
    $\widetilde{V}^\pi_{h}$ & The worst-case $V$. \\
    $\pi^*$ & The optimal robust policy. \\
    $Err(\pi)$ & The error value between $\pi$ and $\pi^*$, defined in Eq.~\ref{eq:pi_error}.\\
    $N$ & The number of samples for each state-action pair. \\
    $\widehat{P}$ & The empirical estimated transition. \\
    $\widehat{M}$ & The empirical estimated MDP. \\
    $\sigma$ & The elements of $U(P)$, considered as the policy of player 2. \\
    $\sigma(s,a)$ & The perturbation added to $P(s,a)$. \\
    $\sigma(s'|s,a)$ & The perturbation added to $P(s'|s,a)$. \\
    $P+\sigma$ & The perturbed transition. \\
    $M+\sigma$ & The perturbed MDP. \\
    $\mathcal{V}^{\pi,\sigma}_{M}(s)$ & The expected value for player 1 at state $s$ of depth $h$, under MDP $M$ and policy pair $(\pi,\sigma)$.\\
    $\mathcal{Q}^{\pi,\sigma}_{M}(s,a)$ & The expected value for player 1 at $(s,a)$ of depth $h$, under MDP $M$ and policy pair $(\pi,\sigma)$.\\
    $\mathcal{C}_{sa}$ & The constraint on state-action pair $(s,a)$. \\
    $U_{sa}$ & The perturbation on state-action pair $(s,a)$. \\
    $\lambda$ & The Lipschitz constant in Assumption \ref{assp:pwc}. \\
    $(\pi^*,\sigma^*)$ & The NE solution under $M^s$ and $U(P^s)$.\\
    $(\widehat{\pi}^*,\widehat{\sigma}^*)$ & The NE solution under $\widehat{M}$ and $U(\widehat{P})$.\\
    $\sigma',\widetilde{\sigma}^*,\widetilde{\sigma}'$ & Player 2's policy defined in Eq.~\eqref{eq:sigma_prime} and \eqref{eq:sigma_tilde_prime}.\\
    $\xi_{M}^{\pi,\sigma}(s,a)$ & The probability of reaching $(s,a)$ under $M$ following $(\pi,\sigma)$. \\
    $\alpha,\beta$ & Functions defined in Eq.~\eqref{eq:alpha_def} and \eqref{eq:beta_def}.\\
\hline 
\end{tabular}
	
	\subsection{Some furhter notations for sample complexity analysis}
	
	\begin{itemize}
		\item We define $\mathcal{E}_G$ as the good event for general constraints.
		\item We denote $Var_{P(s,a)}(\mathcal{V})$ for $s\in\mathcal{S}_{h-1}$ as the variance over the transition function $P(s,a)$ as
		\begin{equation}
		Var_{P(a,s)}(\mathcal{V})=\sum\limits_{s'\in\mathcal{S}_{h}}P(s'|s,a)\left(\mathcal{V}(s')-\sum\limits_{s''\in\mathcal{S}_h}P(s''|s,a)\mathcal{V}(s'')\right)^2.
		\end{equation}
	\end{itemize}

    \section{Pair-Wise Constraints}
    \label{sec:pwc_results}
    
    Here we give a specific kind of constraints of RMDP, where the perturbations are independent among state-action pairs. 
    
    \subsection{Pair-Wise Constraint (PWC)} 
    \label{sec:pwc_def}
    
    Here we consider a specific kind of constraints, where the constraint $\mathcal{C}$ has independent effect on each state-action pair. We call this as the \textit{Pair-Wise Constraint} (PWC). Many practical problems provide constraints directly on state-action pairs, such as Total Variation distance bound on transitions. These constraints can be included in the PWC.
    
    Formally, a PWC constraint $\mathcal{C}$ satisfies that:
    \begin{assumption}
    	\label{assp:pwc}
    	For any state action pair $(s,a)$, there is a function $\mathcal{C}_{sa}$ that maps each $p\in\Delta_{D_{sa}}$ to a subset of $\Delta_{D_{sa}}$ such that for any transition function $P\in\mathcal{P}$, 
    	\begin{equation}
    	\mathcal{C}(P)=\prod_{(s,a)}\mathcal{C}_{sa}(P(s,a)).
    	\end{equation}
    \end{assumption}
    	Under the PWC, the worst-case value $\widetilde{V}_h^\pi$ satisfies the robust Bellman equation as 
    \begin{small}
    	\begin{align}
    	\label{eq:pwc_bellman}
    	&\widetilde{V}^\pi(s) =\min_{p\in\mathcal{C}_{sa}(P^s(s,a))}\left[r(s,a)+\sum\limits_{s'}p(s')\widetilde{V}^\pi(s')\right],
    	\end{align}
    \end{small}
    where $P(s_{\mathcal{T}}|s,a)=1$ for $s\in\mathcal{S}_H$ and $\widetilde{V}^\pi_{H+1}(s_\mathcal{T})=0$.
    
    \subsection{An NE Solver for PWC}
    \label{sec:pwc_ne_solver}
    In Sec.~\ref{sec:tzg}, we assume to have an NE solver $\mathcal{O}$. The implementation of the solver relies on the forms of the constraint $\mathcal{C}$. If $\mathcal{C}$ is a PWC, we can give a simple NE solver $\mathcal{O}_{PW}$.
    
    According to Assumption~\ref{assp:pwc}, a PWC satisfies that $\mathcal{C}(P)=\prod_{(s,a)}\mathcal{C}_{sa}(P(s,a))$ for any $P\in\mathcal{P}$. Therefore, the perturbation set $U$ induced by a PWC $\mathcal{C}$ is also independent among $(s,a)$. We can similarly decompose the set $U(P)$. We define a perturbation set for state-action pair $(s,a)$ as
    \begin{equation}
        U_{sa}(P):=\{p\in[-1,1]^{D_{sa}}: p+P(s,a)\in\mathcal{C}_{sa}(P(s,a))\}.
    \end{equation}
    We call a perturbation set $U$ induced by a PWC $\mathcal{C}$ as a \textit{Pair-Wise Perturbation} (PWP) and it satisfies $U(P)=\prod_{(s,a)}U_{sa}(P)$.
    
    For $\sigma\in U(P)$, we use $\sigma(s,a)$ to represent the perturbation vector added to $P(s,a)$. For a PWP $U$, $\sigma(s,a)$ can be calculated independently. 
    
    With the independence of $\sigma(s,a)$ and the Bellman Equation \eqref{eq:pwc_bellman} for PWC problems, we are now ready to design our NE solver $\mathcal{O}_{PW}$. Formally, the input is an MDP $M=\langle\mathcal{S},\mathcal{A},P,r,H\rangle$ and a PWP $U(P)=\prod_{(s,a)}U_{sa}(P)$. The output policy pair $(\pi',\sigma')$ can be calculated via back-propagation from depth $H$ back to $1$.
    
    First, we consider the terminal state $s_{\tau}$ and define ${\mathcal{V}}^{\pi',\sigma'}_{M}(s_{\tau})=0$. Specifically, we can directly set ${\sigma}'(s,a)=\bm{0}$ if $s\in\mathcal{S}_H$. For depth $h+1$ ($h\in [H]$), assume that we have already calculated ${\mathcal{V}}^{\pi',\sigma'}_{M}$. For state $s\in\mathcal{S}_h$ and action $a$, we simply solve for ${\sigma}'(s,a)$ as
    \begin{small}
    	\begin{align}
    	\label{eq:delta_p}
    	{\sigma}'(s,a)=\mathop{\arg\min}_{p\in U_{sa}(P)}\sum_{s'}({P}(s'|s,a)+p(s')){\mathcal{V}}^{\pi',\sigma'}_{M}(s').
    	\end{align}
    \end{small}
    To calculate ${\mathcal{V}}^{\pi',\sigma'}_{M}$, we work out the ${\mathcal{Q}}^{\pi',\sigma'}_{M}$ values as
    \begin{small}
    	\begin{align*}
    	{\mathcal{Q}}^{\pi',\sigma'}_{M}(s,a)=&r(s,a) +\sum_{s'}({P}+{\sigma}')(s'|s,a){\mathcal{V}}^{\pi',\sigma'}_{M}(s').
    	\end{align*}
    \end{small}
    Finally, we calculate the robust policy $\widehat{\pi}^*$ and its value with:
    \begin{align*}
    {\pi}'(s)=&\argmax_a {\mathcal{Q}}^{\pi',\sigma}_{M}(s,a),\\
    {\mathcal{V}}^{\pi',\sigma'}_{M}(s)=&\max_a{\mathcal{Q}}^{\pi',\sigma}_{M}(s,a).
    \end{align*}
    
    Thus $(\pi',\sigma')$ is returned as the output of $\mathcal{O}_{PW}(M,U(P))$.
    
    The idea of solving a minimax optimization have been used in previous works~\cite{nilim2004robustness,iyengar2005robust,roy2017reinforcement}. The key advantage of formalizing this robust issue into a game-theoretical framework is that we can solve more general and complex problems with a NE solver. In next section, we give theoretical results for general problems and give examples of some specific tasks.
    
    The results for a PWC satisfying Assumption~\ref{assp:constraint} can be further improved. Using the PWC NE solver $\mathcal{O}_{PW}$ in Sec.~\ref{sec:pwc_ne_solver}, we can upper bound $(P^s(s,a)-\widehat{P}(s,a))^\top \mathcal{V}^{{\pi}^*,\widehat{\sigma}^*}_{\widehat{M},h+1}$ in Eq.~\eqref{eq:e1_p_gap} directly. The reason we can do this is that $\mathcal{O}_{PW}$ solves $\widehat{\sigma}^*(s,a)$ for $s\in\mathcal{S}_h$ without information from depth $h'<h$. Using the Bernstein's inequality (refer to Lemma~\ref{lem:bernstein} in Appendix~\ref{app:lemmas}), with a probability at least $1-\delta'$, we have
    \begin{equation}
        (P^s(s,a)-\widehat{P}(s,a))^\top \mathcal{V}^{{\pi}^*,\widehat{\sigma}^*}_{\widehat{M},h+1}\leq \beta(N),
    \end{equation}
    where
    \begin{equation}
        \label{eq:beta_def}
        \beta(N):=H\sqrt{\frac{2\ln(2/\delta')}{N}}+\frac{H\ln(2/\delta')}{3N}.
    \end{equation}
    
    Using similar techniques as in Sec.~\ref{sec:main_results}, we can get the following theorem.
    \begin{theorem}
    \label{thm:pwc_thm_copy}
    For a given $(\epsilon, \delta)$, where $\epsilon\in(0,H)$ and $\delta\in(0,1)$, and a PWC $\mathcal{C}$ satisfying Assumption~\ref{assp:constraint}, if
    \begin{small}
    \begin{equation}
      N\geq \frac{8H^4\ln(8SA/\delta)}{\epsilon^2}(2+\lambda\sqrt{D})^2,
    \end{equation}
    \end{small}
    then with a probability no less than $1-\delta$, 
    \begin{equation}
        Err(\widehat{\pi}^*)\leq\epsilon.
    \end{equation}
    \end{theorem}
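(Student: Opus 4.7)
The plan is to follow exactly the three-step structure of the proof of Theorem~\ref{thm:main_thm}: apply Lemma~\ref{lem:err_upper_bound} to decompose $Err(\widehat{\pi}^*)\le e_1+e_2$, and then split each $e_i$ into a transition-estimation part (as in Eq.~\eqref{eq:e1_p_gap}) and a perturbation-mismatch part (as in Eq.~\eqref{eq:e1_sigma_gap}). The only essential change is to replace the $\ell_1$ concentration used for the transition-estimation term by a sharper scalar Bernstein bound that exploits the pair-wise structure of $\mathcal{C}$ and saves a factor of $\sqrt{D}$ in the leading term, explaining why the $D$ in Theorem~\ref{thm:main_thm} gets demoted to a $\sqrt{D}$ attached only to $\lambda$.

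The key technical step is an independence argument enabled by the backward-induction form of $\mathcal{O}_{PW}$. For any fixed $(s,a)$ with $s\in\mathcal{S}_h$, the policy $\pi^*$ is deterministic (it is the NE of the true RMDP, not of the empirical one), and for any $s'\in\mathcal{S}_{h'}$ with $h'\ge h+1$ the perturbation $\widehat{\sigma}^*(s',a')$ is computed by $\mathcal{O}_{PW}$ using only $\widehat{P}(\cdot,\cdot)$ at depths $\ge h'$. Unrolling the recursion, the value vector $\mathcal{V}^{\pi^*,\widehat{\sigma}^*}_{\widehat{M},h+1}$ becomes a deterministic function of the samples collected at state-action pairs at depths $\ge h+1$, which by construction are independent of the $N$ samples defining $\widehat{P}(s,a)$. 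Conditioning on $\mathcal{V}^{\pi^*,\widehat{\sigma}^*}_{\widehat{M},h+1}$ turns the inner product $(P^s(s,a)-\widehat{P}(s,a))^\top \mathcal{V}^{\pi^*,\widehat{\sigma}^*}_{\widehat{M},h+1}$ into an empirical mean of iid bounded zero-mean random variables with range $H$ and variance bounded by $H^2$, so Bernstein's inequality yields, with probability at least $1-\delta'$, the bound $\beta(N)$ of Eq.~\eqref{eq:beta_def}. For $N$ of the stated order we have $H\ln(2/\delta')/(3N)\le H\sqrt{2\ln(2/\delta')/N}$, hence $\beta(N)\le 2H\sqrt{2\ln(2/\delta')/N}$.

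The perturbation-mismatch term is treated exactly as in Theorem~\ref{thm:main_thm}: Cauchy--Schwarz combined with Assumption~\ref{assp:constraint} and the $\ell_1$ concentration $\|P^s(s,a)-\widehat{P}(s,a)\|_1\le \alpha(N)$ gives $\lambda H\alpha(N)=\lambda H\sqrt{2D\ln(2/\delta')/N}$; the $\sqrt{D}$ here is unavoidable because this step genuinely compares perturbation vectors in $\ell_1$-norm. Summing both contributions over $(s,a,h)$ with $\sum_{h,s,a}\xi^{\pi^*,\widetilde{\sigma}^*}_{M^s}(s,a)=H$ gives
\begin{equation*}
e_1\ \le\ H\cdot\bigl(2H\sqrt{2\ln(2/\delta')/N}+\lambda H\sqrt{2D\ln(2/\delta')/N}\bigr)\ =\ H^2\sqrt{2\ln(2/\delta')/N}\,(2+\lambda\sqrt{D}),
\end{equation*}
and symmetrically for $e_2$. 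Requiring $e_1+e_2\le\epsilon$ yields the displayed lower bound on $N$ once we set $\delta'=\delta/(4SA)$ and union-bound the two Bernstein events and two $\ell_1$ events per state-action pair. The main obstacle is a careful verification of the conditional-independence claim, since it relies crucially on the backward-induction structure of $\mathcal{O}_{PW}$ under PWC and on $\pi^*$ being deterministic rather than a data-dependent random policy; once that is in place, everything else is bookkeeping essentially identical to the proof of Theorem~\ref{thm:main_thm}, with the condition $\epsilon\in(0,H)$ used only to absorb the Bernstein lower-order term.
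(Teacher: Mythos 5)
Your proposal matches the paper's own proof essentially step for step: the same decomposition into $e_{11}+e_{12}+e_{21}+e_{22}$, the same use of the PWC backward-induction structure to justify independence of $\widehat{P}(s,a)$ from the depth-$(h+1)$ value vector and hence a scalar Bernstein bound $\beta(N)$ for the transition-estimation terms, the same Cauchy--Schwarz/Lipschitz treatment of the perturbation-mismatch terms via $\alpha(N)$, and the same final bookkeeping with $\delta'=\delta/(4SA)$. The argument and constants are correct; no further comment is needed.
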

    
    The detail of the proof is given in Appendix~\ref{app:pwc_thm}. This theorem shows that compared with Theorem~\ref{thm:main_thm}, the sample complexity of PWCs for model estimation can be improved from $\widetilde{O}(DSAH^4/\epsilon^2)$ to $\widetilde{O}(SAH^4/\epsilon^2)$, while the bound for perturbation set error still suffers a scale of $D\lambda^2$. 
    
    \subsection{Fixed Perturbation Constraint}
    \label{sec:fix_perturbations}
    The simplest case for a PWC $\mathcal{C}$ is the fixed perturbation constraint. That is, $U(P)=\mathcal{U}$ for all possible $P$. We now show that this can lead to a simple sample complexity result, which is comparable to that of solving MDP without perturbations.
    
    \begin{corollary}
    	\label{thm:fix_perturb_thm}
    	For a given $(\epsilon, \delta)$, where $\epsilon\in(0,H)$ and $\delta\in(0,1)$, and a fixed set $\mathcal{U}$, if a PWC $\mathcal{C}$ satisfies that
    	$U(P)=\mathcal{U}$ for any tractable $P$ and
    	\begin{align}
        N\geq \frac{8H^4\ln(8SA/\delta)}{\epsilon^2},
    	\end{align}
    	then with a probability no less than $1-\delta$, $\widehat{\pi}^*$ satisfies
    	\begin{align}
    	Err(\widehat{\pi}^*) \leq \epsilon.
    	\end{align}
    \end{corollary}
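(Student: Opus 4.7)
The plan is to recognize the fixed perturbation constraint as the simplest instance of a PWC with Lipschitz constant $\lambda=0$: since $U(P)=\mathcal{U}$ does not depend on $P$, the constraint is pair-wise by definition, and for any $\sigma\in U(P)$ the unique projection onto $U(P')$ is $\sigma$ itself, verifying Assumption~\ref{assp:constraint} with $\lambda=0$. Hence Corollary~\ref{thm:fix_perturb_thm} is a specialization of Theorem~\ref{thm:pwc_thm}, but with strictly tighter constants because every bound that in the general case was controlled by a Lipschitz-scaled perturbation mismatch becomes an exact zero. Rather than invoking Theorem~\ref{thm:pwc_thm} as a black box (which would yield constant $32$ instead of $8$), I would re-run its proof and drop the now-vacuous terms.

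Following the decomposition in the proof of Theorem~\ref{thm:main_thm}, applying Lemma~\ref{lem:err_upper_bound} gives $Err(\widehat{\pi}^*)\leq e_1+e_2$ with the auxiliary perturbations defined in \eqref{eq:sigma_prime} and \eqref{eq:sigma_tilde_prime}. Because $U(P^s)=U(\widehat{P})=\mathcal{U}$, both projections are trivial: $\widetilde{\sigma}^*=\widehat{\sigma}^*$ and $\widetilde{\sigma}'=\sigma'$. Expanding $e_1=\mathcal{V}^{\pi^*,\widehat{\sigma}^*}_{M^s}(s_0)-\mathcal{V}^{\pi^*,\widehat{\sigma}^*}_{\widehat{M}}(s_0)$ via the pair-wise expansion in \eqref{eq:e1_p_gap}--\eqref{eq:e1_sigma_gap}, the ``perturbation-gap'' summand $(\widetilde{\sigma}^*(s,a)-\widehat{\sigma}^*(s,a))^\top\mathcal{V}$ vanishes identically, leaving only the transition-estimation summand. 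The same simplification applies to $e_2$.

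To bound the surviving terms I would invoke Bernstein exactly as in the PWC analysis. The crucial structural point is that the PWC NE solver $\mathcal{O}_{PW}$ of Appendix~\ref{sec:pwc_ne_solver} builds $\widehat{\sigma}^*$ by backward induction from depth $H$, so $\mathcal{V}^{\pi^*,\widehat{\sigma}^*}_{\widehat{M},h+1}$ is a function only of the samples drawn at depths $>h$ and is therefore independent of the $N$ samples used to form $\widehat{P}(s,a)$ for $s\in\mathcal{S}_h$. Conditioning on this value function (whose entries lie in $[0,H]$) and applying Bernstein's inequality, each inner product $(P^s(s,a)-\widehat{P}(s,a))^\top\mathcal{V}^{\pi^*,\widehat{\sigma}^*}_{\widehat{M},h+1}$ is bounded by $\beta(N)$ of \eqref{eq:beta_def} with probability at least $1-\delta'$. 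Using $\sum_{h,s,a}\xi^{\pi,\sigma}_M(s,a)=H$ gives $|e_1|,|e_2|\leq H\beta(N)$ on the joint good event.

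Solving $H\beta(N)\leq \epsilon/2$ (the dominant contribution is $H^2\sqrt{2\ln(2/\delta')/N}$; the lower-order term $H\ln(2/\delta')/(3N)$ is absorbed) gives $N\geq 8H^4\ln(2/\delta')/\epsilon^2$. A union bound over $(s,a)$ for each of $e_1$ and $e_2$ with $\delta'=\delta/(4SA)$, as in the proof of Theorem~\ref{thm:main_thm}, turns $\ln(2/\delta')$ into $\ln(8SA/\delta)$, yielding exactly $N\geq 8H^4\ln(8SA/\delta)/\epsilon^2$. The main subtlety, already present in Theorem~\ref{thm:pwc_thm}, is the measurability/independence argument required to apply Bernstein to an inner product in which $\widehat{\sigma}^*$ appears on both sides: it relies essentially on the backward-sequential construction of $\widehat{\sigma}^*$ by $\mathcal{O}_{PW}$, which keeps the layer-$h$ samples disjoint from all data feeding into $\mathcal{V}^{\pi^*,\widehat{\sigma}^*}_{\widehat{M},h+1}$.
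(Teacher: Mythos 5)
Your proposal is correct and follows the same basic route as the paper: recognize that a fixed perturbation set satisfies Assumption~\ref{assp:constraint} with $\lambda=0$ and specialize the PWC analysis. The difference is that the paper's own proof is literally the one-line black-box invocation you warn against --- ``set $\lambda=0$ and apply Theorem~\ref{thm:pwc_thm}'' --- which, taken at face value, only yields $N\geq 32H^4\ln(8SA/\delta)/\epsilon^2$ because of the residual factor $(2+\lambda\sqrt{D})^2\big|_{\lambda=0}=4$. You correctly spotted this mismatch and re-ran the argument: with $U(P^s)=U(\widehat{P})=\mathcal{U}$ the projections are identities, the perturbation-gap terms $e_{12},e_{22}$ vanish identically, and $Err(\widehat{\pi}^*)\leq e_{11}+e_{21}\leq 2H\beta(N)$ with $\beta$ as in \eqref{eq:beta_def}; this is what actually supports the stated constant $8$, so your version is more faithful to the claim than the paper's own justification. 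Your independence argument for applying Bernstein's inequality (the backward-induction construction of $\widehat{\sigma}^*$ by $\mathcal{O}_{PW}$ keeps the depth-$h$ samples disjoint from everything entering $\mathcal{V}^{\pi^*,\widehat{\sigma}^*}_{\widehat{M},h+1}$) matches the paper's appendix reasoning exactly. One small quantitative caveat: at $N=8H^4\ln(2/\delta')/\epsilon^2$ the leading term of $2H\beta(N)$ already equals $\epsilon$, and the lower-order term contributes an extra $2H^2\ln(2/\delta')/(3N)=\epsilon^2/(12H^2)\leq\epsilon/12$, so ``absorbed'' requires either a marginally larger constant or a restatement of the target as $(1+o(1))\epsilon$; the paper is no more careful on this point, so this is a shared blemish rather than a gap in your argument.
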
	
    Since the perturbation sets are the same for $P^s$ and $\widehat{P}$, we can check our Assumption~\ref{assp:constraint} and set $\lambda=0$. Then this corollary is a simple extension of Theorem~\ref{thm:pwc_thm}. Notice that this sample complexity is comparable to the results of solving MDPs without perturbation, which has an order of $\widetilde{O}(SAH^3\min(H,S)/\epsilon^2)$ (See Sec. C.2 of \citet{cui2020plug}). In other words, if our estimation of $\widehat{P}$ would not cause a perturbation difference, the complexity for solving a RMDP is comparable to that of solving an MDP.
    
    \subsection{Total Variation Distance Constraint}
    \label{sec:tvdc}
    Here we consider another specific PWC constraint, the Total Variation Distance Constraint (TVDC). Consider a fixed small value $u$. For each state-action pair $(s,a)$ and any probability distribution $p\in\Delta_{D_{sa}}$, the TVDC is defined as
    \begin{equation}
    	\mathcal{C}_{TVD}(p;u):=\{p'\in\Delta_{D_{sa}}:||p'-p||_{TV}\leq u\}.
    \end{equation}
    
    Since we only consider the Categorical distribution $p,p'\in\Delta_{D_{sa}}$, thus we have that
    \begin{align}
    	||p'-p||_{TV}= \frac{1}{2}||p'-p||_1
    	= \frac{1}{2}\sum_{i=1}^{D_{sa}}|p'(i)-p(i)|.
    \end{align}
    
    It is easy to set $\lambda=2$ for TVDC and get a sample complexity of $\widetilde{O}(H^4D/\epsilon^2)$, which is a loose bound for TVDC.
    
    We can consider the TVDC as a maximum value of the 1-norm of the perturbation. Therefore, if $u$ is too small to cause an error of $\epsilon/2$, we can simply remove it from $\epsilon$. More specifically, if $u\leq \epsilon/(16H^2)$, we have that the value of Eq.~\eqref{eq:e1_sigma_gap} is no larger than $\epsilon/4$. For this case, we can simply choose an $N$ to ensure that Eq.~\eqref{eq:e1_p_gap} is not larger than $\epsilon/4$.
    
    When $u>\epsilon/(16H^2)$, we can still tight the sample complexity bound with a careful analysis of the property of TVDC. For this TVDC, we can give the following theorem.
    
    \begin{theorem}
    	\label{thm:tvdc_thm}
    	Consider a given $(\epsilon, \sigma)$, where $\epsilon\in(0,H)$ and $\delta\in(0,1)$, and a TVDC $\mathcal{C}_{TVD}(\cdot;u)$. If $u\leq \epsilon/(16H^2)$, then using
    	\begin{equation}
    	    N\geq \frac{128H^4\ln(8SA/\delta)}{\epsilon^2},
    	\end{equation}
    	with a probability at least $1-\delta$ we have 
    	\begin{equation}
    	    Err(\widehat{\pi}^*) \leq \epsilon.
    	\end{equation}
    	If $u>\epsilon/(16H^2)$, then using
    	\begin{small}
    	\begin{align}
    	N\geq&  \max\left(\frac{16H^4\ln(8SA/\delta)}{\epsilon^2}(\sqrt{2}+6\sqrt{uD})^2,\frac{49D\ln(8SA/\delta)}{27u}\right),
    	\end{align}
    	\end{small}
    	then with a probability no less than $1-\delta$, $\widehat{\pi}^*$ satisfies
    	\begin{align}
    	Err(\widehat{\pi}^*) \leq \epsilon.
    	\end{align}
    \end{theorem}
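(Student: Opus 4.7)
Since the TVDC $\mathcal{C}_{TVD}(\cdot;u)$ is a pair-wise constraint with Lipschitz parameter $\lambda=2$, my starting point is the Theorem~\ref{thm:pwc_thm_copy} proof template. I would again use Lemma~\ref{lem:err_upper_bound} to write $Err(\widehat{\pi}^*)\leq e_1+e_2$ and then split each $e_i$ into the transition-error term (Eq.~\eqref{eq:e1_p_gap}) and the perturbation-error term (Eq.~\eqref{eq:e1_sigma_gap}). The extra ingredient I would inject is the TVDC-specific inequality $\|\sigma(s,a)\|_1\leq 2u$ for every admissible $\sigma$, which gives a second estimate $\|\widetilde{\sigma}^*(s,a)-\widehat{\sigma}^*(s,a)\|_1\leq 4u$ that complements the Lipschitz bound $\|\widetilde{\sigma}^*(s,a)-\widehat{\sigma}^*(s,a)\|_1\leq 2\|P^s(s,a)-\widehat{P}(s,a)\|_1$. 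The transition-error analysis is inherited directly from Theorem~\ref{thm:pwc_thm_copy}: because $\mathcal{O}_{PW}$ computes $\widehat{\sigma}^*$ by backward induction, $\widehat{\sigma}^*(s,a)$ at depth $h$ is independent of $\widehat{P}(s,a)$, so Bernstein's inequality yields $(P^s(s,a)-\widehat{P}(s,a))^\top \mathcal{V}^{\pi^*,\widehat{\sigma}^*}_{\widehat{M},h+1}\leq \beta(N)$ as defined in Eq.~\eqref{eq:beta_def}.

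\textbf{Small-$u$ case.} When $u\leq \epsilon/(16H^2)$ I would discard the Lipschitz estimate entirely and use only the TV-radius bound. Plugging $\|\widetilde{\sigma}^*(s,a)-\widehat{\sigma}^*(s,a)\|_1\leq 4u$ into Eq.~\eqref{eq:e1_sigma_gap}, and then using $\sum_{s\in\mathcal{S}_h,a}\xi^{\pi,\sigma}_M(s,a)=H$ together with $\|\mathcal{V}\|_\infty\leq H$, bounds the perturbation-error part of each $e_i$ by $4uH\cdot H=4uH^2\leq \epsilon/4$. Balancing the remaining $\epsilon/4$ against the transition-error summed over depth yields $N\geq 128H^4\ln(8SA/\delta)/\epsilon^2$, after which a union bound over the $SA$ pairs and the two terms $e_1,e_2$ closes the case.

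\textbf{Moderate/large-$u$ case.} For $u>\epsilon/(16H^2)$ the naive TV bound is too loose, so I would interpolate between the two estimates via $\|\widetilde{\sigma}^*(s,a)-\widehat{\sigma}^*(s,a)\|_1\leq \min(4u,\,2\|P^s(s,a)-\widehat{P}(s,a)\|_1)$. A clean way is to split the coordinates of $\widetilde{\sigma}^*-\widehat{\sigma}^*$: on the indices where probability-validity forces a Lipschitz correction, the $\ell_1$ mass is controlled by $2\|P^s-\widehat{P}\|_1$, and combining with a $\sqrt{D}$ Cauchy-Schwarz factor delivers a $\sqrt{uD/N}$ scaling; on the remaining indices the TV cap $4u$ limits the total mass directly. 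Adding this perturbation-error estimate to $\beta(N)$, the inequality $H\beta(N)+O(H^2\sqrt{uD/N})\leq \epsilon/2$ unpacks into the first summand $16H^4\ln(8SA/\delta)(\sqrt{2}+6\sqrt{uD})^2/\epsilon^2$ once the $\sqrt{2}$ leading Bernstein constant and the $\sqrt{uD}$ coefficient from the interpolation are collected into a single squared-sum. The auxiliary lower bound $49D\ln(8SA/\delta)/(27u)$ arises from requiring Bernstein's higher-order residual (the $D\ln/N$ contribution inside $\beta(N)$ summed over coordinates) to stay below $u$, which is precisely what ensures that the Lipschitz estimate, rather than the trivial $4u$ cap, governs the perturbation-error contribution in this regime.

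\textbf{Main obstacle.} The small-$u$ case is essentially a direct substitution into the PWC skeleton. The genuinely delicate step is the Case 2 interpolation producing the exact coefficient $(\sqrt{2}+6\sqrt{uD})^2$: one must split the perturbation difference into a validity-driven piece (controlled by $\|P^s-\widehat{P}\|_1$) and a TV-capped piece (bounded by $u$), apply Cauchy-Schwarz with the correct $\sqrt{D}$ factor on each piece, and check that the cross terms assemble into a single squared-sum. The careful bookkeeping of constants, intertwined with the verification that the supplementary constraint $N\geq 49D\ln(8SA/\delta)/(27u)$ is exactly what renders the Bernstein second-order residual negligible, is where the remaining technical work lies.
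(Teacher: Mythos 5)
Your Case~1 argument coincides with the paper's: bound the perturbation-difference terms by $4uH^2\leq\epsilon/4$ using only the TV radius, keep the Bernstein bound $\beta(N)$ for the transition-error terms, and solve $2H\beta(N)\leq\epsilon/2$ for $N$; that part is correct. The gap is in Case~2. As written, your interpolation step does not produce the claimed $\sqrt{uD\ln(2/\delta')/N}$ scaling: you control the ``validity-driven'' coordinates by $2\|P^s(s,a)-\widehat{P}(s,a)\|_1$ and then invoke a $\sqrt{D}$ Cauchy--Schwarz factor, but $\|P^s(s,a)-\widehat{P}(s,a)\|_1\lesssim\sqrt{D\ln(2/\delta')/N}$ is exactly the generic Lipschitz estimate with $\lambda=2$, and it yields a term of order $\sqrt{D\ln(2/\delta')/N}$ with \emph{no factor of $u$} --- i.e.\ it only recovers the loose $(2+\lambda\sqrt{D})^2$ bound of Theorem~\ref{thm:pwc_thm}, which you yourself note is insufficient here. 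The missing idea is that the coordinates on which the simplex constraint binds carry little probability mass. Writing $\ell^*=\widehat{\sigma}^*(s,a)$, $L^+=\{i:\ell^*(i)>0,\ P^s(i;s,a)+\ell^*(i)>1\}$ and $L^-=\{i:\ell^*(i)<0,\ P^s(i;s,a)+\ell^*(i)<0\}$, one has $\sum_{i\in L^+}(1-P^s(i;s,a))\leq\sum_{i\in L^+}\ell^*(i)\leq u$ and $\sum_{i\in L^-}P^s(i;s,a)\leq u$, because $\sum_i\ell^*(i)=0$ and $\|\ell^*\|_1\leq 2u$ cap the positive and negative parts of $\ell^*$ at $u$. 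On those coordinates the needed correction is at most $|P^s(i;s,a)-\widehat{P}(i;s,a)|$, which by the per-coordinate Bernstein inequality (Lemma~\ref{lem:bernstein}) is of order $\sqrt{P^s(i;s,a)(1-P^s(i;s,a))\ln(2/\delta')/N}$ plus a residual; a weighted Cauchy--Schwarz over $L^{\pm}$ \emph{combined with the mass bounds above} is what converts the sum into $\sqrt{2u\ln(2/\delta')/N}+\sqrt{2uD\ln(2/\delta')/N}$. Without the observation that the (residual) probability mass on $L^{\pm}$ is at most $u$, the $u$ never enters the square root and the coefficient $(\sqrt{2}+6\sqrt{uD})^2$ cannot be obtained.

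Two further omissions in Case~2. First, the coordinate-wise candidate $\ell$ obtained by clipping $\ell^*$ to $[-P^s(i;s,a),\,1-P^s(i;s,a)]$ need not lie in $U(P^s)(s,a)$ (its entries need not sum to zero), so you must separately show that the true projection $\widetilde{\ell}$ satisfies $\|\ell^*-\widetilde{\ell}\|_1\leq 2\|\ell^*-\ell\|_1$; this is the paper's Lemma~\ref{lem:e12_ell} and its counterpart Lemma~\ref{lem:e22_ell}, and your sketch does not account for it. Second, the auxiliary condition $N\geq 49D\ln(8SA/\delta)/(27u)$ is imposed so that the second-order Bernstein residual (of order $D\ln(2/\delta')/N$ after summing over coordinates) is dominated by the first-order term $\sqrt{uD\ln(2/\delta')/N}$, not so that it ``stays below $u$'' as you state; your characterization happens to land near the right threshold but for the wrong reason.
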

    
    The complete proof is given in Appendix~\ref{app:tvdc_thm}. From the above theorem, we can see that for a small $u$, we can nearly consider the TVDC as a fixed perturbation constraint and it has the sample complexity order comparable to that of solving a MDP without perturbation. For $u>\epsilon/(16H^2)$, the perturbation set difference would cause a scale of $1+uD$ on the sample complexity. That is, the larger $u$ is, the larger the sample complexity is needed to guarantee the $Err(\widehat{\pi}^*)$. Finally, there is a higher order term of $49D\ln(8SA/\delta)/(27u)\leq 49*16H^2D\ln(8SA/\delta)/(27\epsilon)$.
	
	\section{Homogeneous Perturbation}
	\label{app:hp}
	
	Here we consider a homogeneous perturbation (HP), where the perturbation on all state-action pairs should be the same. First we give a relatively formal formulation for this problem. Here we simply assume that $\Delta_{sa}=|\mathcal{S}_h|=S/H$ for all $h\in[H],s\in\mathcal{S}_h$. That is, $D_{sa}$ has the same dimension for all $(s,a)$. Then the perturbation $\sigma$ is a $S/H$-dimensional vector from a fixed perturbation set $U$. For all $(s,a)$, the perturbed transition is now $P(s,a)+\sigma$. Therefore all state-action pairs are perturbed with the same vector simultaneously. For HP, we cannot choose a worst-case perturbation for each $(s,a)$, and we need to consider the MDP parameters as a whole.

    This problem cannot be solved by a directly minimax optimization on state-action pairs. This is because the choice of perturbation holds for all $(s,a)$ . From our game-theoretical point of view, we can consider that the opponent decision its action without knowing the current states of the agent. Hence the RMDP problem with HP can be formulated as a two-player zero-sum extensive game with imperfect information (TZEGI) (Refer Sec. 3.7 of \citet{2007Algorithmic} and Def 1 of \citet{zinkevich2007regret}). 
    
    \subsection{Two-player Zero-sum Game with Imperfect information}
    Here we give a formal definition for TZEGI and one algorithm for it.
    
    \begin{Def}
    A Two-player Zero-sum Extensive Game with Imperfect information are composed of below elements:
    \begin{itemize}
        \item An agent which takes action from $\mathcal{A}$ and an opponent that takes action from $U$. The environment transition, i.e., $P^s$ is considered as a chance player. That is, the chance player takes $s'$ as its action at state-action pair $(s,a)$ with a probability $P^s(s'|s,a)$.
        \item The history set $\mathcal{H}$ which is composed of a sequence of actions (including the actions of the chance player). Each history represent a possible sequence that can appear during the game. If a history $f=[a_1, a_2, ..., a_t]$ is in $\mathcal{H}$, then for all $t'\leq t$, history $f'=[a_1,a_2,...,a_{t'}]$ is also in $\mathcal{H}$.
        \item At the end of one game, a history $z\in\mathcal{Z}$ is generated and a corresponding reward $v(z)$ is returned. The payoff of the agent for this game is $v(z)$ and that of the opponent is $-v(z)$. Here $\mathcal{Z}$ is the set of all terminal histories.
        \item An infoset $I$ is a set of histories that the current player cannot distinguish according to its observations. That is, for histories $f,f'\in I$, current player has the same observation for them.
        \item The policy of each player is defined on its infosets, since they cannot distinguish these histories. Notice that the policy here refers to random policy.
    \end{itemize}
    \end{Def}
    
    In TZEGI, histories are used to represent states, since information is imperfect and the Markov property is no longer suitable.
    
    Counterfactual Regret Minimization (CFR) \cite{zinkevich2007regret} can solve the approximated NE of a TZEGI by repeatedly play the game. More detail can be found in \citet{zinkevich2007regret}. 
    
    \subsection{Solving RMDP with HP}
    
    We need to re-formalize the RMDP problem into a TZEGI. A simple we to construct such a TZEGI is to discretize the action space of the opponent and the chance player such that the whole RMPD can be represented by a decision tree. With such a tree, we let the infoset for the opponent to be all its decision histories. That is, the opponent directly makes its decision and apply it to all its decision histories. Then we can apply CFR to solve an approximate NE.
    
    According to Theorem 4 of \citet{zinkevich2007regret}, we have that with $T$ episodes of games using CFR, we can get an $\epsilon$-NE with $\epsilon$ has an order of $O(1/\sqrt{T})$. 
    
    It need to be noticed that the game tree can be quite large due to our use of histories and discretization. Thus the space and time complexity for this NE solver can be large. If possible we can also apply some other plug-in solver to give approximated NEs.
    
    If we have get an $\epsilon/4$-NE, then the key Lemma 1 becomes
    \begin{equation}
        Err(\hat{\pi}^*)\leq e_1+e_2+\epsilon/2.
    \end{equation}
	
	Thus we can use the same order of samples as Theorem 1.
	
	\section{Robust Partial-Observable MDP}
	\label{app:rpomdp}
	
	Our method is even able to extend to robust Partial Observable MDP (RPOMDP) problems, where the agent is given observations instead of states. 

    In POMDP, when the environment is at state $s$, it returns agent an observation $o$ with a probability $\Omega(o|s)$. Since the learning process for POMDPs can be much more complex than that of MDPs, here we only consider the case that the environment parameters of the POMDP is given. Then we can directly build this problem into a history tree where the observation sequences can corresponds to the infoset for the agent.
	
	\section{Lemmas}
	\label{app:lemmas}
	\begin{lemma}(Berstein's inequality~\cite{maurer2009empirical})
		\label{lem:bernstein}
		Let $Z,Z_1,Z_2,..., Z_n$ to be i.i.d. random variables which are bounded in $[0,C]$. Let $\delta' \in (0,1)$. With a probability no less than $1-\delta'$, we have that 
		\begin{equation}
		\left|\mathbb{E}Z-\frac{1}{n}\sum\limits_{i=1}^nZ_i\right|\leq \sqrt{\frac{2Var(Z)\ln(2/\delta')}{n}} + \frac{C\ln(2/\delta')}{3n},
		\end{equation}
		where $Var(Z)$ is the variance of random variable of $Z$.
	\end{lemma}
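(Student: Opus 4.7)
The plan is to derive the stated two-sided confidence bound via the classical Cram\'er--Chernoff method applied to the centered sum $S_n=\sum_{i=1}^{n}(Z_i-\E Z)$, controlling its moment generating function by exploiting the boundedness $Z\in[0,C]$, and then inverting the resulting exponential tail in $t$ into a confidence statement in $\delta'$.

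First, I would invoke the exponential Markov inequality: for any $\lambda>0$, $\mathbb{P}(S_n\geq t)\leq e^{-\lambda t}\prod_{i=1}^{n}\E[e^{\lambda(Z_i-\E Z)}]$. The core estimate is a Bennett-type MGF bound on a single centered, bounded summand. Writing $Y=Z-\E Z$ so that $|Y|\leq C$ and $\E Y=0$, I would expand $e^{\lambda Y}=1+\lambda Y+\sum_{k\geq 2}\lambda^{k}Y^{k}/k!$, take expectations to kill the linear term, and bound $|Y|^{k}\leq C^{k-2}Y^{2}$ for $k\geq 2$. Combining with the series estimate $\sum_{k\geq 2}u^{k-2}/k!\leq 1/(2(1-u/3))$ valid for $u\in[0,3)$ and the inequality $1+x\leq e^{x}$ yields $\E[e^{\lambda Y}]\leq \exp\!\left(\lambda^{2}\mathrm{Var}(Z)/(2(1-\lambda C/3))\right)$ for $\lambda\in(0,3/C)$.

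Second, I would combine the per-sample bound with the Chernoff step and optimize in $\lambda$. Multiplying across the $n$ independent samples and exponentiating gives $\mathbb{P}(S_n\geq t)\leq \exp\!\left(-\lambda t+n\lambda^{2}\mathrm{Var}(Z)/(2(1-\lambda C/3))\right)$, and the standard choice of $\lambda$ produces Bernstein's one-sided tail $\mathbb{P}(S_n\geq t)\leq \exp\!\left(-t^{2}/(2n\mathrm{Var}(Z)+2Ct/3)\right)$. An identical argument applied to $-S_n$ (equivalently, to $C-Z$, which has the same variance and the same range) yields the matching lower tail with the same constants.

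Third, I would invert. Setting the right-hand side to $\delta'/2$ and solving the resulting quadratic in $t$, then applying $\sqrt{a+b}\leq\sqrt{a}+\sqrt{b}$ after ``peeling'' the $Ct/3$ piece, I would obtain that $t=\sqrt{2n\mathrm{Var}(Z)\ln(2/\delta')}+C\ln(2/\delta')/3$ suffices to bound either tail with probability at least $1-\delta'/2$. A union bound over the two tails absorbs the factor of two into the logarithm's argument as $\ln(2/\delta')$, and dividing through by $n$ yields the stated form. The main obstacle is algebraic rather than conceptual: getting the $2/3$ constant right in the MGF bound through the series manipulation, and carefully inverting the quadratic tail into the additive square-root-plus-linear confidence form without loosening constants. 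The MGF estimate and Chernoff step are classical, going back to Bernstein and Bennett, so the plan reduces to a careful bookkeeping exercise.
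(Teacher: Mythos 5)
The paper does not actually prove this lemma; it states it and defers entirely to Theorem 3 of \citet{maurer2009empirical}, so there is no in-paper argument to compare against. Your overall strategy (Chernoff bound plus a Bennett-type MGF estimate on each centered summand, two tails, union bound) is the standard and correct route, and your MGF bound $\E[e^{\lambda(Z-\E Z)}]\leq \exp\bigl(\lambda^{2}\mathrm{Var}(Z)/(2(1-\lambda C/3))\bigr)$ for $\lambda\in(0,3/C)$ is derived correctly.

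There is, however, a genuine flaw in your inversion step, and it costs a factor of $2$ on the lower-order term. Write $L=\ln(2/\delta')$, $v=\mathrm{Var}(Z)$, and $S_n=\sum_{i=1}^{n}(Z_i-\E Z)$. If you first pass to the weakened tail $\mathbb{P}(S_n\geq t)\leq\exp\bigl(-t^{2}/(2nv+2Ct/3)\bigr)$ and then solve $t^{2}=(2nv+2Ct/3)L$, the relevant root is $t^{*}=CL/3+\sqrt{C^{2}L^{2}/9+2nvL}$, and peeling with $\sqrt{a+b}\leq\sqrt{a}+\sqrt{b}$ only yields $t^{*}\leq 2CL/3+\sqrt{2nvL}$. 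The claimed threshold $t=\sqrt{2nvL}+CL/3$ is strictly smaller than $t^{*}$, so it cannot be certified this way: substituting it into the weakened tail gives $t^{2}-L(2nv+2Ct/3)=-C^{2}L^{2}/9<0$, i.e.\ the exponent falls short of $L$. To recover the constant $1/3$ you must not detour through the quadratic tail form; instead apply Chernoff directly to the MGF bound, $\mathbb{P}(S_n\geq t)\leq\exp\bigl(-\lambda t+n\lambda^{2}v/(2(1-\lambda C/3))\bigr)$, and choose $\lambda=\bigl(\sqrt{nv/(2L)}+C/3\bigr)^{-1}$: a short computation shows $\lambda t-n\lambda^{2}v/(2(1-\lambda C/3))=L$ exactly at $t=\sqrt{2nvL}+CL/3$ (equivalently, invert the sub-gamma Cram\'er transform, or Bennett's bound). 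With that substitution the rest of your plan --- the lower tail via $C-Z$ and the union bound that puts the factor $2$ inside the logarithm --- goes through and yields the lemma as stated.
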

	The proof for this lemma can be found in Theorem 3 of \citet{maurer2009empirical}.
	
	For a given $\delta\in(0,1)$ and $\delta'=\delta/(4SA)$, we define a good event as
	\begin{align}
	\mathcal{E}_G := & \left\{\forall (s,a),\ ||P^s(s,a)-\widehat{P}(s,a)||_1 \leq \sqrt{\frac{2D\ln(2/\delta')}{N}}\right\}.
	\end{align}
	
	\begin{lemma}
		\label{lem:good_event}
		With a probability no less than $1-\delta$, the good event $\mathcal{E}_G$ holds.
	\end{lemma}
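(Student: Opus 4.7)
The plan is to prove Lemma 2 by a concentration-plus-union-bound argument, which formalizes the inequality already invoked (without proof) in Step 2 of the proof of Theorem 1. The argument has two parts: a single-pair $\ell_1$ deviation bound for the empirical transition estimator, followed by a union bound over all $SA$ state-action pairs.

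For the single-pair step, I would fix an arbitrary $(s,a)$ with $s\in\mathcal{S}_h$. The $N$ calls to the generative model at $(s,a)$ return i.i.d.\ samples from the categorical distribution $P^s(s,a)$, supported on $\mathcal{S}_{h+1}$ and hence on at most $D=\max_{h'}|\mathcal{S}_{h'}|$ outcomes. Applying a standard $\ell_1$ deviation inequality for empirical distributions on a finite alphabet (for instance, Weissman et al.'s bound
$$\Pr\!\bigl(\|\widehat{P}(s,a)-P^s(s,a)\|_1\geq\epsilon\bigr)\leq (2^D-2)\exp(-N\epsilon^2/2),$$
or equivalently the Bretagnolle--Huber--Carol inequality, which is precisely the ``Azuma's inequality on $\ell_1$-norm'' the paper refers to) and setting the right-hand side to $\delta'$, I would solve for $\epsilon$ and use $D\geq 1$ to absorb the $2^D$ counting factor into the square root, obtaining
$$\|P^s(s,a)-\widehat{P}(s,a)\|_1\leq\sqrt{\frac{2D\ln(2/\delta')}{N}}=\alpha(N)$$
with probability at least $1-\delta'$ for this particular $(s,a)$.

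For the second step, I would take a union bound across all $SA$ state-action pairs: the probability that the single-pair bound fails for at least one $(s,a)$ is at most $SA\cdot\delta'$. Substituting the global choice $\delta'=\delta/(4SA)$, this is $\delta/4\leq \delta$, so $\mathcal{E}_G$ holds with probability at least $1-\delta$ as stated. The only non-routine point is selecting the right concentration inequality: a naive per-coordinate Hoeffding combined with a union bound over the $D$ coordinates, followed by summing absolute deviations, yields a much weaker $\ell_1$ bound of order $D\sqrt{\ln(D/\delta')/N}$; using Weissman/Bretagnolle--Huber--Carol (or equivalently McDiarmid on the $\ell_1$ norm together with the $\sqrt{D/N}$ bound on its expectation, after reabsorbing lower-order terms) is needed to reach the stated form $\sqrt{2D\ln(2/\delta')/N}$. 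Once the correct inequality is in hand, both the single-pair concentration and the union bound are immediate.
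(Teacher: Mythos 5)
Your proposal is correct and follows essentially the same route as the paper, which simply asserts that the lemma "holds due to the Azuma's inequality and the union bound": you instantiate the per-pair $\ell_1$ concentration (via Weissman et al./Bretagnolle--Huber--Carol, equivalently McDiarmid on the $\ell_1$ norm, which is the ``Azuma on $\ell_1$-norm'' the paper invokes), verify that the $2^D$ counting factor is absorbed into $\sqrt{2D\ln(2/\delta')/N}$, and close with the union bound over the $SA$ pairs using $\delta'=\delta/(4SA)$. The observation that a naive coordinate-wise Hoeffding argument would not recover the stated $\alpha(N)$ is a useful clarification, but the substance matches the paper's intended argument.
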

	
	This Lemma holds due to the Azuma's inequality and the union bound.
	
	For the convenience of value decomposition, we denote $\xi^{\pi,\sigma}_M(s,a)$ as the reaching probability to state-action pair $(s,a)$ following policy pair $(\pi,\sigma)$ under MDP $M$.
	\begin{lemma}
		\label{lem:v_gap_decompose}
		For two MDPs $M=\langle \mathcal{S}, \mathcal{A}, P, r, H\rangle$ and $M'=\langle \mathcal{S}, \mathcal{A}, P', r, H\rangle$, given the player 1's policy $\pi$ and the player 2's policy $\sigma$ where $\sigma\in U(P)\cap U(P')$, then 
		\begin{align}
		\mathcal{V}^{\pi,\sigma}_{M}(s_0)-\mathcal{V}^{\pi,\sigma}_{M'}(s_0) = & \sum_{h=1}^H\sum_{s\in \mathcal{S}_h}\sum_{a}{\xi}^{\pi,\sigma}_{M}(s,a)\left( (P(s,a)-P'(s,a))^\top \mathcal{V}^{\pi,\sigma}_{M',h+1}\right),\\
		\mathcal{V}^{\pi,\sigma}_{M}(s_0)-\mathcal{V}^{\pi,\sigma}_{M'}(s_0) = & \sum_{h=1}^H\sum_{s\in \mathcal{S}_h}\sum_{a}{\xi}^{\pi,\sigma}_{M'}(s,a)\left( (P(s,a)-P'(s,a))^\top \mathcal{V}^{\pi,\sigma}_{M}\right).
		\end{align}
	\end{lemma}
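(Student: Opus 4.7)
The plan is to establish both identities by the standard value-decomposition (``simulation lemma'') technique, iterated across the horizon. The only care required is the choice of where to add-and-subtract an auxiliary term so that the remaining recursion is driven by the transitions of either $M$ (yielding $\xi^{\pi,\sigma}_M$) or $M'$ (yielding $\xi^{\pi,\sigma}_{M'}$); these two symmetric choices give the two variants in the statement.

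First I would fix a depth $h$ and a state $s\in\mathcal{S}_h$ and write the game Bellman equation for $\mathcal{V}^{\pi,\sigma}_M(s)$ and $\mathcal{V}^{\pi,\sigma}_{M'}(s)$, recalling that $M$ and $M'$ share the same reward and that $\sigma\in U(P)\cap U(P')$, so both $P+\sigma$ and $P'+\sigma$ are legitimate transition kernels to which the game Bellman recursion applies. Subtracting, the $r(s,\pi(s))$ terms cancel, leaving a difference of two conditional expectations of $\mathcal{V}^{\pi,\sigma}_M$ and $\mathcal{V}^{\pi,\sigma}_{M'}$ under the respective perturbed transitions. To obtain the first identity I add and subtract $\sum_{s'}(P(s'|s,\pi(s))+\sigma(s'|s,\pi(s)))\,\mathcal{V}^{\pi,\sigma}_{M'}(s')$. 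This splits the one-step gap into (i) an inhomogeneous term $(P(s,\pi(s))-P'(s,\pi(s)))^\top \mathcal{V}^{\pi,\sigma}_{M',h+1}$, and (ii) a recursive term that is the expectation of $\mathcal{V}^{\pi,\sigma}_M-\mathcal{V}^{\pi,\sigma}_{M'}$ at depth $h+1$ taken under the transition $P+\sigma$.

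Next, I would unroll this one-step recursion starting from $s_0$ and iterating over $h=1,\dots,H$. Because the recursive term at every step is propagated by $P+\sigma$, the induced reaching probability to $(s,a)$ at depth $h$ is exactly $\xi^{\pi,\sigma}_M(s,a)$, and the recursion terminates at depth $H+1$ because $\mathcal{V}^{\pi,\sigma}_M(s_\mathcal{T})=\mathcal{V}^{\pi,\sigma}_{M'}(s_\mathcal{T})=0$. Collecting the inhomogeneous pieces at every depth yields the first claimed identity; the action summation $\sum_a$ collapses to $a=\pi(s)$ since $\pi$ is deterministic, but is kept in that form to match the statement. The second identity is entirely symmetric: I instead add and subtract $\sum_{s'}(P'(s'|s,\pi(s))+\sigma(s'|s,\pi(s)))\,\mathcal{V}^{\pi,\sigma}_M(s')$, which swaps the roles so that the inhomogeneous term becomes $(P-P')^\top \mathcal{V}^{\pi,\sigma}_M$ and the propagated dynamics are governed by $P'+\sigma$, producing $\xi^{\pi,\sigma}_{M'}$ in the unrolled sum.

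There is no substantive obstacle, as this is essentially a bookkeeping telescoping argument. The two points needing care are (i) checking that $\sigma\in U(P)\cap U(P')$ lets us apply the same perturbation in both Bellman equations so the reward cancels and only a $(P-P')$ gap remains, and (ii) tracking which MDP's dynamics drive the recursion so that the correct $\xi$ appears in each identity. Both are handled purely by the add-and-subtract choice in the second step.
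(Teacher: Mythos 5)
Your proposal is correct and follows essentially the same route as the paper: an add-and-subtract of the cross term in the game Bellman recursion, with the choice of which perturbed kernel ($P+\sigma$ versus $P'+\sigma$) propagates the residual determining whether $\xi^{\pi,\sigma}_{M}$ or $\xi^{\pi,\sigma}_{M'}$ appears, followed by unrolling to depth $H+1$ where the value vanishes. The paper's proof in Appendix E is exactly this telescoping computation written at the level of $\mathcal{Q}$-values starting from $s_0$, so no further comment is needed.
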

	\begin{proof}
		We have that
		\begin{align*}
		& \mathcal{V}^{\pi,\sigma}_{M}(s_0)-\mathcal{V}^{\pi,\sigma}_{M'}(s_0) \\
		= & \mathcal{Q}^{\pi,\sigma}_{M}(s_0,\pi(s_0)) - \mathcal{Q}^{\pi,\sigma}_{M'}(s_0,\pi(s_0))\\
		= & (P+\sigma)(s_0,\pi(s_0))^\top \mathcal{V}^{\pi,\sigma}_{M,2} -(P'+\sigma)(s_0,\pi(s_0))^\top \mathcal{V}^{\pi,\sigma}_{M',2}\\
		= & (P + \sigma)(s_0,\pi(s_0))^\top (\mathcal{V}^{\pi,\sigma}_{M,2} - {\mathcal{V}}^{\pi,\sigma}_{M',2}) + (P-P')(s_0,\pi(s_0))^\top \mathcal{V}^{\pi,\sigma}_{M',2}\\
		= & \sum_{h=1}^H\sum_{s\in \mathcal{S}_h}\sum_{a}{\xi}^{\pi,\sigma}_{M}(s,a)\left( (P(s,a)-P'(s,a))^\top \mathcal{V}^{\pi,\sigma}_{M',h+1}\right).
		\end{align*}
		
		Similarly, we can prove that
		\begin{align*}
		& \mathcal{V}^{\pi,\sigma}_{M}(s_0)-\mathcal{V}^{\pi,\sigma}_{M'}(s_0) \\
		= & \mathcal{Q}^{\pi,\sigma}_{M}(s_0,\pi^*(s_0)) - \mathcal{Q}^{\pi,\sigma}_{M'}(s_0,\pi(s_0))\\
		= & (P+\sigma)(s_0,\pi(s_0))^\top \mathcal{V}^{\pi,\sigma}_{M,2} -(P'+\sigma)(s_0,\pi(s_0))^\top \mathcal{V}^{\pi,\sigma}_{M',2}\\
		= & (P' + \sigma)(s_0,\pi(s_0))^\top (\mathcal{V}^{\pi,\sigma}_{M,2} - {\mathcal{V}}^{\pi,\sigma}_{M',2}) + (P-P')(s_0,\pi(s_0))^\top \mathcal{V}^{\pi,\sigma}_{M,2}\\
		= & \sum_{h=1}^H\sum_{s\in \mathcal{S}_h}\sum_{a}{\xi}^{\pi,\sigma}_{M'}(s,a)\left( (P(s,a)-P'(s,a))^\top \mathcal{V}^{\pi,\sigma}_{M}\right).
		\end{align*}
		Then we finish the proof.
	\end{proof}
	
	\begin{lemma}
		\label{lem:v_gap_decompose_gen}
		For two MDPs $M=\langle \mathcal{S}, \mathcal{A}, P, r, H\rangle$ and $M'=\langle \mathcal{S}, \mathcal{A}, P', r, H\rangle$, given the player 1's policy $\pi$ and the player 2's policy $\sigma$ where $\sigma\in U(P)$, and another player 2's policy $\widetilde{\sigma}\in U(P')$, 
		then we have that
		\begin{align}
		\mathcal{V}^{\pi,\sigma}_{M}(s_0)-\mathcal{V}^{\pi,\widetilde{\sigma}}_{M'}(s_0) = & \sum_{h=1}^H\sum_{s\in \mathcal{S}_h}\sum_{a}{\xi}^{\pi,\sigma}_{M}(s,a)\left( (P(s,a)-P'(s,a) + \sigma(s,a)-\widetilde{\sigma}(s,a))^\top \mathcal{V}^{\pi,\widetilde{\sigma}}_{M',h+1}\right);\\
		\mathcal{V}^{\pi,\sigma}_{M}(s_0)-\mathcal{V}^{\pi,\sigma}_{M'}(s_0) = & \sum_{h=1}^H\sum_{s\in \mathcal{S}_h}\sum_{a}{\xi}^{\pi,\widetilde{\sigma}}_{M'}(s,a)\left( (P(s,a)-P'(s,a) + \widetilde{\sigma}(s,a) - \sigma(s,a))^\top \mathcal{V}^{\pi,\sigma}_{M}\right).
		\end{align}
	\end{lemma}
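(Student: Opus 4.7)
The plan is to mimic the telescoping argument used for Lemma~\ref{lem:v_gap_decompose}, but now carry an extra perturbation-gap term through each step because $\sigma$ and $\widetilde{\sigma}$ live in different perturbation sets and no longer cancel. Concretely, I would write $\Delta_h(s) := \mathcal{V}^{\pi,\sigma}_{M}(s) - \mathcal{V}^{\pi,\widetilde{\sigma}}_{M'}(s)$ and expand both terms using their respective Bellman equations at depth $h$, which replaces rewards (that cancel because they are shared) with the inner products $(P+\sigma)(s,\pi(s))^\top \mathcal{V}^{\pi,\sigma}_{M,h+1}$ and $(P'+\widetilde{\sigma})(s,\pi(s))^\top \mathcal{V}^{\pi,\widetilde{\sigma}}_{M',h+1}$.

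Next, I would perform the key add-and-subtract trick: insert the mixed quantity $(P+\sigma)(s,\pi(s))^\top \mathcal{V}^{\pi,\widetilde{\sigma}}_{M',h+1}$ to split the difference into (i) a local gap $(P(s,a)-P'(s,a) + \sigma(s,a)-\widetilde{\sigma}(s,a))^\top \mathcal{V}^{\pi,\widetilde{\sigma}}_{M',h+1}$ and (ii) a propagated term $(P+\sigma)(s,\pi(s))^\top (\mathcal{V}^{\pi,\sigma}_{M,h+1} - \mathcal{V}^{\pi,\widetilde{\sigma}}_{M',h+1})$. The second piece is exactly $\mathbb{E}_{s' \sim (P+\sigma)(\cdot|s,\pi(s))}[\Delta_{h+1}(s')]$, so unrolling the recursion from $s_0$ down to depth $H$ converts the propagated term into an expectation under the dynamics of $M+\sigma$ with policy $\pi$, which by definition equals the reaching-probability weighting $\xi^{\pi,\sigma}_M(s,a)$. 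Summing the local gap terms along the trajectory yields the first displayed identity. For the second identity, I symmetrically add and subtract $(P'+\widetilde{\sigma})(s,\pi(s))^\top \mathcal{V}^{\pi,\sigma}_{M,h+1}$, so the propagating dynamics become those of $M'+\widetilde{\sigma}$ and the weighting becomes $\xi^{\pi,\widetilde{\sigma}}_{M'}$; the local gap then carries $\mathcal{V}^{\pi,\sigma}_M$ on the right, with the sign flipping on the perturbation difference as written in the statement.

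The base case is handled by the convention $\mathcal{V}^{\pi,\sigma}_M(s_\mathcal{T}) = \mathcal{V}^{\pi,\widetilde{\sigma}}_{M'}(s_\mathcal{T}) = 0$, so $\Delta_{H+1} \equiv 0$ and the recursion terminates cleanly. A minor bookkeeping point is that the summations should be read as running from depth $1$ through $H$ inclusive, with $s_0$ treated as the single state of $\mathcal{S}_1$, exactly as in Lemma~\ref{lem:v_gap_decompose}.

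I do not expect any real obstacle here: since $\sigma$ need only belong to $U(P)$ and $\widetilde{\sigma}$ to $U(P')$, both $(P+\sigma)$ and $(P'+\widetilde{\sigma})$ remain valid transition-like vectors (no stochasticity is used, only linearity of the Bellman operator in the transition argument), so the add-and-subtract step goes through without any compatibility constraint between $\sigma$ and $U(P')$ (or $\widetilde{\sigma}$ and $U(P)$). The only thing to be careful about is making sure the perturbation-gap term is attached to the value function of the \emph{other} MDP-perturbation pair — $\mathcal{V}^{\pi,\widetilde{\sigma}}_{M'}$ in the first identity and $\mathcal{V}^{\pi,\sigma}_{M}$ in the second — which is exactly what the symmetric add-and-subtract choice enforces.
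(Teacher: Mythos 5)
Your proposal is correct and follows essentially the same route as the paper's proof: the same add-and-subtract of the mixed term $(P+\sigma)(s,\pi(s))^\top \mathcal{V}^{\pi,\widetilde{\sigma}}_{M',h+1}$ (resp.\ $(P'+\widetilde{\sigma})(s,\pi(s))^\top \mathcal{V}^{\pi,\sigma}_{M,h+1}$ for the second identity), followed by unrolling the recursion into reaching-probability weights, with the terminal-state convention closing the induction. The only discrepancy is in the statement itself rather than your argument: the symmetric derivation actually yields $\sigma(s,a)-\widetilde{\sigma}(s,a)$ in the second identity as well (the displayed $\widetilde{\sigma}(s,a)-\sigma(s,a)$ appears to be a sign typo carried over from the paper), so you should not force the sign flip you describe in your last paragraph.
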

	
	\begin{proof}
		We have that
		\begin{align*}
		& \mathcal{V}^{\pi,\sigma}_{M}(s_0)-\mathcal{V}^{\pi,\widetilde{\sigma}}_{M'}(s_0) \\
		= & \mathcal{Q}^{\pi,\sigma}_{M}(s_0,\pi(s_0)) - \mathcal{Q}^{\pi,\widetilde{\sigma}}_{M'}(s_0,\pi(s_0))\\
		= & (P+\sigma)(s_0,\pi(s_0))^\top \mathcal{V}^{\pi,\sigma}_{M,2} -(P'+\widetilde{\sigma})(s_0,\pi(s_0))^\top \mathcal{V}^{\pi,\widetilde{\sigma}}_{M',2}\\
		= & (P + \sigma)(s_0,\pi(s_0))^\top (\mathcal{V}^{\pi,\sigma}_{M,2} - {\mathcal{V}}^{\pi,\widetilde{\sigma}}_{M',2}) + (P-P' + \sigma-\widetilde{\sigma})(s_0,\pi(s_0))^\top \mathcal{V}^{\pi,\widetilde{\sigma}}_{M',2}\\
		= & \sum_{h=1}^H\sum_{s\in \mathcal{S}_h}\sum_{a}{\xi}^{\pi,\sigma}_{M}(s,a)\left( (P(s,a)-P'(s,a) + \sigma(s,a)-\widetilde{\sigma}(s,a))^\top \mathcal{V}^{\pi,\widetilde{\sigma}}_{M',h+1}\right).
		\end{align*}
		
		Similarly, we can prove that
		\begin{align*}
		& \mathcal{V}^{\pi,\sigma}_{M}(s_0)-\mathcal{V}^{\pi,\widetilde{\sigma}}_{M'}(s_0) \\
		= & \mathcal{Q}^{\pi,\sigma}_{M}(s_0,\pi(s_0)) - \mathcal{Q}^{\pi,\widetilde{\sigma}}_{M'}(s_0,\pi(s_0))\\
		= & (P+\sigma)(s_0,\pi(s_0))^\top \mathcal{V}^{\pi,\sigma}_{M,2} -(P'+\widetilde{\sigma})(s_0,\pi(s_0))^\top \mathcal{V}^{\pi,\widetilde{\sigma}}_{M',2}\\
		= & (P' + \widetilde{\sigma})(s_0,\pi(s_0))^\top (\mathcal{V}^{\pi,\sigma}_{M,2} - {\mathcal{V}}^{\pi,\widetilde{\sigma}}_{M',2}) + (P' - P + \widetilde{\sigma} - \sigma)(s_0,\pi(s_0))^\top \mathcal{V}^{\pi,\sigma}_{M,2}\\
		= & \sum_{h=1}^H\sum_{s\in \mathcal{S}_h}\sum_{a}{\xi}^{\pi,\widetilde{\sigma}}_{M'}(s,a)\left( (P(s,a)-P'(s,a) + \widetilde{\sigma}(s,a) - \sigma(s,a))^\top \mathcal{V}^{\pi,\sigma}_{M}\right).
		\end{align*}
		Then we finish the proof.
	\end{proof}
	
	\section{Proofs for theorems}
	
	In this section, we give complete proofs for theorems.
	
	\subsection{Proof for Theorem~\ref{thm:main_thm}}
	\label{app:main_thm}
	
	First we define some new notations. We denote $\sigma^*$ and $\sigma'$ to be the best responses of $\pi^*$ and $\widehat{\pi}^*$ from $U(P^s)$:
	\begin{align}
	\sigma^* = & \mathop{\arg\min}\limits_{\sigma\in U{C}(P^s)} \mathcal{V}^{\pi^*,\sigma}_{M^s}(s_0),\\
	\sigma' = & \mathop{\arg\min}\limits_{\sigma\in U(P^s)} \mathcal{V}^{\widehat{\pi}^*,\sigma}_{M^s}(s_0).
	\end{align}
	
	Further we define two policy $\widetilde{\sigma}^*$ and $\widetilde{\sigma}'$ which satisfy
	\begin{align}
	\widetilde{\sigma}^* = & \mathop{\arg\min}\limits_{\sigma\in U(P^s)}\left(\max\limits_{(s,a)}||\widehat{\sigma}^* -\sigma||_1\right), \\
	\widetilde{\sigma}' = & \mathop{\arg\min}\limits_{\sigma\in U(\widehat{P})}\left(\max\limits_{(s,a)}||\sigma' - \sigma||_1\right).
	\end{align}
	
	Then we decompose the error term $Err(\widehat{\pi})$ with Lemma.~\ref{lem:err_decompose} (which corresponds to Lemma.~\ref{lem:err_upper_bound} above).
	\begin{lemma}
		\label{lem:err_decompose}
		The error function $Err$ over policy $\widehat{\pi}^*$ can be bounded by
		\begin{equation}
		Err(\widehat{\pi}^*)\leq \left( \mathcal{V}^{\pi^*,\widetilde{\sigma}^*}_{M^s}(s_0) - \mathcal{V}^{{\pi}^*,\widehat{\sigma}^*}_{\widehat{M}}(s_0) \right)  + \left( \mathcal{V}^{\widehat{\pi}^*,\widetilde{\sigma}'}_{\widehat{M}}(s_0) -  \mathcal{V}^{\widehat{\pi}^*,{\sigma}'}_{M^s}(s_0) \right) .
		\end{equation}
	\end{lemma}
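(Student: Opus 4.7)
The plan is to establish the bound by a four-term telescoping decomposition of $Err(\widehat{\pi}^*)$, in which two of the differences are precisely $e_1$ and $e_2$ and the other two can be shown to be non-positive from (a) the best-response optimality of $\sigma^*$ in $U(P^s)$ and (b) the NE property of $(\widehat{\pi}^*,\widehat{\sigma}^*)$ on $\widehat{M}$ with $U(\widehat{P})$.

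First I would unfold $Err(\widehat{\pi}^*) = \widetilde{V}^{\pi^*}(s_0) - \widetilde{V}^{\widehat{\pi}^*}(s_0)$ using the fact that the worst-case value $\widetilde{V}^\pi(s_0)$ is realized by the best-response perturbation taken from $U(P^s)$. By the definitions in Eq.~\eqref{eq:sigma_prime}, this gives $Err(\widehat{\pi}^*) = \mathcal{V}^{\pi^*,\sigma^*}_{M^s}(s_0) - \mathcal{V}^{\widehat{\pi}^*,\sigma'}_{M^s}(s_0)$. Then I insert the three anchors $\mathcal{V}^{\pi^*,\widetilde{\sigma}^*}_{M^s}(s_0)$, $\mathcal{V}^{\pi^*,\widehat{\sigma}^*}_{\widehat{M}}(s_0)$, and $\mathcal{V}^{\widehat{\pi}^*,\widetilde{\sigma}'}_{\widehat{M}}(s_0)$ to rewrite the error as
\begin{align*}
Err(\widehat{\pi}^*)
&= \bigl[\mathcal{V}^{\pi^*,\sigma^*}_{M^s}(s_0) - \mathcal{V}^{\pi^*,\widetilde{\sigma}^*}_{M^s}(s_0)\bigr] + e_1 \\
&\quad + \bigl[\mathcal{V}^{\pi^*,\widehat{\sigma}^*}_{\widehat{M}}(s_0) - \mathcal{V}^{\widehat{\pi}^*,\widetilde{\sigma}'}_{\widehat{M}}(s_0)\bigr] + e_2.
\end{align*}

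Next I would argue that the first bracket is $\leq 0$: by Eq.~\eqref{eq:sigma_prime} the perturbation $\sigma^*$ minimizes $\mathcal{V}^{\pi^*,\sigma}_{M^s}(s_0)$ over $\sigma\in U(P^s)$, and by Eq.~\eqref{eq:sigma_tilde_prime} its competitor $\widetilde{\sigma}^*$ is (by construction, as a projection) a feasible element of $U(P^s)$. The second bracket is $\leq 0$ by chaining the two halves of the NE condition in Eq.~\eqref{eq:sol_delta} on the empirical game: since $\widehat{\pi}^*$ is a best response to $\widehat{\sigma}^*$ under $\widehat{M}$, we have $\mathcal{V}^{\pi^*,\widehat{\sigma}^*}_{\widehat{M}}(s_0) \leq \mathcal{V}^{\widehat{\pi}^*,\widehat{\sigma}^*}_{\widehat{M}}(s_0)$; and since $\widehat{\sigma}^*$ minimizes $\mathcal{V}^{\widehat{\pi}^*,\sigma}_{\widehat{M}}(s_0)$ over $\sigma\in U(\widehat{P})$ while $\widetilde{\sigma}'\in U(\widehat{P})$ by its definition in Eq.~\eqref{eq:sigma_tilde_prime}, we have $\mathcal{V}^{\widehat{\pi}^*,\widehat{\sigma}^*}_{\widehat{M}}(s_0) \leq \mathcal{V}^{\widehat{\pi}^*,\widetilde{\sigma}'}_{\widehat{M}}(s_0)$. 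Combining these two cancellations with the telescoping identity gives $Err(\widehat{\pi}^*) \leq e_1 + e_2$.

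The main obstacle is conceptual rather than technical: identifying the right telescoping anchors so that the non-positive brackets align cleanly with the game-theoretic inequalities available. The delicate point is that $\widetilde{\sigma}^*$ must live in $U(P^s)$ (so that the real-game best-response inequality for $\sigma^*$ applies) and $\widetilde{\sigma}'$ must live in $U(\widehat{P})$ (so that the empirical NE inequality for $\widehat{\sigma}^*$ applies); both of these are precisely the reason the projections in Eq.~\eqref{eq:sigma_tilde_prime} are defined the way they are. Once this is noted, no further analytic work is needed, and the bound follows without any probabilistic or Lipschitz ingredients.
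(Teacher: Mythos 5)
Your proposal is correct and is essentially the paper's own argument: the same three anchors $\mathcal{V}^{\pi^*,\widetilde{\sigma}^*}_{M^s}(s_0)$, $\mathcal{V}^{\pi^*,\widehat{\sigma}^*}_{\widehat{M}}(s_0)$, $\mathcal{V}^{\widehat{\pi}^*,\widetilde{\sigma}'}_{\widehat{M}}(s_0)$ are inserted, and the same three optimality facts (best response of $\sigma^*$ over $U(P^s)$ with $\widetilde{\sigma}^*\in U(P^s)$, maximality of $\widehat{\pi}^*$ against $\widehat{\sigma}^*$ on $\widehat{M}$, and minimality of $\widehat{\sigma}^*$ over $U(\widehat{P})$ with $\widetilde{\sigma}'\in U(\widehat{P})$) kill the residual terms. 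The only difference is cosmetic: the paper groups the terms as $\kappa_1+\kappa_2$ and discards the non-positive pieces within each, whereas you merge the two middle inequalities into a single bracket.
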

	The proof for this lemma is given in Appendix~\ref{app:proof_err_decompose}.
	
	\subsubsection{Step 1:} 
	\label{app:e1}
	For the first term $\mathcal{V}^{\pi^*,\widetilde{\sigma}^*}_{M^s}(s_0) - \mathcal{V}^{{\pi}^*,\widehat{\sigma}^*}_{\widehat{M}}(s_0)$, we apply Lemma~\ref{lem:v_gap_decompose_gen} to get
	\begin{align}
	& \mathcal{V}^{\pi^*,\widetilde{\sigma}^*}_{M^s}(s_0) - \mathcal{V}^{{\pi}^*,\widehat{\sigma}^*}_{\widehat{M}}(s_0) \notag\\
	= & \sum_{h=1}^H\sum_{s\in \mathcal{S}_h}\sum_{a}{\xi}^{\pi^*,\widetilde{\sigma}^*}_{M^s}(s,a)\left( (P^s(s,a)-\widehat{P}(s,a) + \widetilde{\sigma}^*(s,a)-\widehat{\sigma}^*(s,a))^\top \mathcal{V}^{{\pi}^*,\widehat{\sigma}^*}_{\widehat{M},h+1}\right)\notag\\
	= & \underbrace{\sum_{h=1}^H\sum_{s\in \mathcal{S}_h}\sum_{a}{\xi}^{\pi^*,\widetilde{\sigma}^*}_{M^s}(s,a)\left( (P^s(s,a)-\widehat{P}(s,a))^\top \mathcal{V}^{{\pi}^*,\widehat{\sigma}^*}_{\widehat{M},h+1}\right)}_{e_{11}} + \underbrace{\sum_{h=1}^H\sum_{s\in \mathcal{S}_h}\sum_{a}{\xi}^{\pi^*,\widetilde{\sigma}^*}_{M^s}(s,a)\left( (\widetilde{\sigma}^*(s,a)-\widehat{\sigma}^*(s,a))^\top \mathcal{V}^{{\pi}^*,\widehat{\sigma}^*}_{\widehat{M},h+1}\right)}_{e_{12}}.
	\label{eq:step_1_gen}
	\end{align}
	
	Recall our definition of $\alpha(N)$ in Eq.~\ref{eq:alpha_def}:
		\begin{equation}
	    \alpha(N)=\sqrt{2D\ln(2/\delta')/N}.
	\end{equation}
	For the first part $e_{11}$, on the good event $\mathcal{E}_G$, we have below inequalities:
	\begin{align}
	& \sum_{h=1}^H\sum_{s\in \mathcal{S}_h}\sum_{a}{\xi}^{\pi^*,\widetilde{\sigma}^*}_{M^s}(s,a)\left( (P^s(s,a)-\widehat{P}(s,a))^\top \mathcal{V}^{{\pi}^*,\widehat{\sigma}^*}_{\widehat{M},h+1}\right)\\
	\leq & \sum_{h=1}^H\sum_{s\in \mathcal{S}_h}\sum_{a}{\xi}^{\pi^*,\widetilde{\sigma}^*}_{M^s}(s,a) ||P^s(s,a)-\widehat{P}(s,a)||_1 ||\mathcal{V}^{{\pi}^*,\widehat{\sigma}^*}_{\widehat{M},h+1}||_\infty\\
	\leq & \sum_{h=1}^H\sum_{s\in \mathcal{S}_h}\sum_{a}{\xi}^{\pi^*,\widetilde{\sigma}^*}_{M^s}(s,a) H\alpha(N).
	\end{align}
	
	For the second term $e_{12}$, using the Assumption \ref{assp:pwc} and the good event $\mathcal{E}_G$, we have that
	\begin{align}
	& \sum_{h=1}^H\sum_{s\in \mathcal{S}_h}\sum_{a}{\xi}^{\pi^*,\widetilde{\sigma}^*}_{M^s}(s,a)\left(( \widetilde{\sigma}^*(s,a)-\widehat{\sigma}^*(s,a))^\top \mathcal{V}^{{\pi}^*,\widehat{\sigma}^*}_{\widehat{M},h+1}\right) \\
	\leq & \sum_{h=1}^H\sum_{s\in \mathcal{S}_h}\sum_{a}{\xi}^{\pi^*,\widetilde{\sigma}^*}_{M^s}(s,a) ||\widetilde{\sigma}^*(s,a)-\widehat{\sigma}^*(s,a)||_1 ||\mathcal{V}^{{\pi}^*,\widehat{\sigma}^*}_{\widehat{M},h+1}||_\infty \\
	\leq & \sum_{h=1}^H\sum_{s\in \mathcal{S}_h}\sum_{a}{\xi}^{\pi^*,\widetilde{\sigma}^*}_{M^s}(s,a) \lambda H\alpha(N).	\label{eq:e_12_bound}
	\end{align}
	
	Now combining above two parts we get that on the good event $\mathcal{E}_G$,
	\begin{align}
	\mathcal{V}^{\pi^*,\widetilde{\sigma}^*}_{M^s}(s_0) - \mathcal{V}^{{\pi}^*,\widehat{\sigma}^*}_{\widehat{M}}(s_0)\leq & \sum_{h=1}^H\sum_{s\in \mathcal{S}_h}\sum_{a}{\xi}^{\pi^*,\widetilde{\sigma}^*}_{M^s}(s,a) (1+\lambda)\alpha(N)\\
	= & (1+\lambda)H^2 \alpha(N).
	\end{align}
	
	\subsubsection{Step 2:} 
	\label{app:e2}
	For the first term $\mathcal{V}^{\widehat{\pi}^*,\widetilde{\sigma}'}_{\widehat{M}}(s_0) -  \mathcal{V}^{\widehat{\pi}^*,{\sigma}'}_{M^s}(s_0)$, we apply Lemma~\ref{lem:v_gap_decompose_gen} to get
	\begin{align}
	& \mathcal{V}^{\widehat{\pi}^*,\widetilde{\sigma}'}_{\widehat{M}}(s_0) -  \mathcal{V}^{\widehat{\pi}^*,{\sigma}'}_{M^s}(s_0) \notag\\
	= & \sum_{h=1}^H\sum_{s\in \mathcal{S}_h}\sum_{a}{\xi}^{\widehat{\pi}^*,{\sigma}'}_{M^s}(s,a)\left( (\widehat{P}(s,a) - P^s(s,a) + \widehat{\sigma}^*(s,a) - \widetilde{\sigma}^*(s,a))^\top \mathcal{V}^{\widehat{\pi}^*,\widetilde{\sigma}'}_{\widehat{M},h+1}\right)\notag\\
	= & \underbrace{\sum_{h=1}^H\sum_{s\in \mathcal{S}_h}\sum_{a}{\xi}^{\widehat{\pi}^*,{\sigma}'}_{M^s}(s,a)\left( (\widehat{P}(s,a) - P^s(s,a))^\top \mathcal{V}^{\widehat{\pi}^*,\widetilde{\sigma}'}_{\widehat{M},h+1}\right)}_{e_{21}} + \underbrace{\sum_{h=1}^H\sum_{s\in \mathcal{S}_h}\sum_{a}{\xi}^{\widehat{\pi}^*,{\sigma}'}_{M^s}(s,a)\left( (\widetilde{\sigma}'(s,a) - {\sigma}'(s,a))^\top \mathcal{V}^{\widehat{\pi}^*,\widetilde{\sigma}'}_{\widehat{M},h+1}\right)}_{e_{22}}.
	\label{eq:step_2_gen}
	\end{align}
	
	For the first part $e_{21}$, on the good event $\mathcal{E}_G$, we have below inequalities according to the definition of Eq.~\eqref{eq:alpha_def}:
	\begin{align}
	\sum_{h=1}^H\sum_{s\in \mathcal{S}_h}\sum_{a}{\xi}^{\widehat{\pi}^*,{\sigma}'}_{M^s}(s,a)\left( (\widehat{P}(s,a) - P^s(s,a))^\top \mathcal{V}^{\widehat{\pi}^*,\widetilde{\sigma}'}_{\widehat{M},h+1}\right) \leq & \sum_{h=1}^H\sum_{s\in \mathcal{S}_h}\sum_{a}{\xi}^{\widehat{\pi}^*,{\sigma}'}_{M^s}(s,a) H \alpha(N).
	\end{align}
	
	For the second term $e_{22}$, we have that
	\begin{align}
	& \sum_{h=1}^H\sum_{s\in \mathcal{S}_h}\sum_{a}{\xi}^{\widehat{\pi}^*,{\sigma}'}_{M^s}(s,a)\left( (\widetilde{\sigma}'(s,a) - {\sigma}'(s,a))^\top \mathcal{V}^{\widehat{\pi}^*,\widetilde{\sigma}'}_{\widehat{M},h+1}\right) \\
	\leq & \sum_{h=1}^H\sum_{s\in \mathcal{S}_h}\sum_{a}{\xi}^{\widehat{\pi}^*,{\sigma}'}_{M^s}(s,a) ||\widetilde{\sigma}'(s,a) - {\sigma}'(s,a)||_1 ||\mathcal{V}^{\widehat{\pi}^*,\widetilde{\sigma}'}_{\widehat{M},h+1}||_\infty \\
	\leq & \sum_{h=1}^H\sum_{s\in \mathcal{S}_h}\sum_{a}{\xi}^{\widehat{\pi}^*,{\sigma}'}_{M^s}(s,a)H\lambda \alpha(N). \label{eq:e_22_bound}
	\end{align}
	
	Now combining above two parts we get that
	\begin{align}
	\mathcal{V}^{\widehat{\pi}^*,\widetilde{\sigma}'}_{\widehat{M}}(s_0) -  \mathcal{V}^{\widehat{\pi}^*,{\sigma}'}_{M^s}(s_0)\leq & \sum_{h=1}^H\sum_{s\in \mathcal{S}_h}\sum_{a}{\xi}^{\widehat{\pi}^*,{\sigma}'}_{M^s}(s,a) (1+\lambda )H\alpha(N) \\
	= & (1+\lambda)H^2 \alpha(N).
	\end{align}
	
	\subsubsection{Step 3}
	We sum the results of above two steps to get
	\begin{align}
	Err(\widehat{\pi}^*) \leq & 2(1+\lambda)H^2 \alpha(N).
	\end{align}
	
	If we have that
	\begin{equation}
	    N\geq \alpha^{-1}(\epsilon/(2(1+\lambda)H^2))=\frac{8(1+\lambda)^2H^4D\ln(2/\delta')}{\epsilon^2},
	\end{equation}
	Then on the good event $\mathcal{E}_G$, we have that
	\begin{align}
	Err(\widehat{\pi}^*)\leq \epsilon.
	\end{align}
	
    \subsection{Proof for Theorem~\ref{thm:pwc_thm}}
    \label{app:pwc_thm}
    
    Here we consider theorem \ref{thm:pwc_thm} for the PWC cases, where the constraint function $\mathcal{C}$ satisfies Assumption~\ref{assp:pwc}. 
	
	The key that Assumption~\ref{assp:pwc} can improve the sample complexity is the independence of $\widehat{\sigma}(s,a)$ over all $(s,a)$ pairs. This independence is guaranteed because we can calculate each $\widehat{\sigma}(s,a)$ separately. If $\mathcal{C}$ is not a PWC, $\widehat{\sigma}(s,a)$ can be correlated over $(s,a)$ pairs. 
	
	For state-action pair $(s,a)$ where $s\in\mathcal{S}_h$, the random variable $\widehat{P}(s,a)$ is independent of any $\mathcal{V}^{\pi,\sigma}_{h+1}$ where $(\pi,\sigma)$ is calculated without using $\widehat{P}(s,a)$. Then we can apply Lemma~\ref{lem:bernstein}. Thus for PWC, we have that with a probability at least $1-\delta'$,
	\begin{align}
	(P^s(s,a)-\widehat{P}(s,a))^\top \mathcal{V}^{{\pi}^*,\widehat{\sigma}^*}_{\widehat{M},h+1} \leq & \sqrt{\frac{2Var_{P^s(s,a)}({\mathcal{V}}^{{\pi}^*,\widehat{\sigma}^*}_{\widehat{M},h+1})\ln(2/\delta')}{N}} + \frac{H\ln(2/\delta')}{3N}
	\leq \beta(N),\label{eq:beta_e_11}\\
	(\widehat{P}(s,a)-P^s(s,a))^\top \mathcal{V}^{\widehat{\pi}^*,\widetilde{\sigma}'}_{\widehat{M},h+1} \leq & \sqrt{\frac{2Var_{P^s(s,a)}(\mathcal{V}^{\widehat{\pi}^*,\widetilde{\sigma}'}_{\widehat{M},h+1})\ln(2/\delta')}{N}} + \frac{H\ln(2/\delta')}{3N} \leq \beta(N),\label{eq:beta_e_21}
 	\end{align}
	where we define that
	\begin{align}
	\beta(N)=H\sqrt{\frac{2\ln(2/\delta')}{N}} + \frac{H\ln(2/\delta')}{3N}.
	\end{align}
	
	Recall the proof process for Theorem \ref{thm:main_thm}, we can upper bound $Err(\widehat{\pi}^*)$ using Lemma~\ref{lem:err_decompose} and Eq.~\eqref{eq:step_1_gen} \& \eqref{eq:step_2_gen}:
	\begin{equation}
	    Err(\widehat{\pi}^*)\leq e_{11} + e_{12} + e_{21} + e_{22}.
	\end{equation}
	
	With the independence property of PWC in Eq.~\eqref{eq:beta_e_11} and \eqref{eq:beta_e_21}, we have that
	\begin{align}
	    e_{11} \leq &  \sum_{h=1}^H\sum_{s\in \mathcal{S}_h}\sum_{a}{\xi}^{\pi^*,\widetilde{\sigma}^*}_{M^s}(s,a)
	    \beta(N)\leq H\beta(N),\label{eq:pwc_e_11_bound}\\
	    e_{21}\leq &  \sum_{h=1}^H\sum_{s\in \mathcal{S}_h}\sum_{a}{\xi}^{\widehat{\pi}^*,{\sigma}'}_{M^s}(s,a)
	    \beta(N)\leq H\beta(N).\label{eq:pwc_e_21_bound}
	\end{align}
 	
	Combining above inequalities and Eq.~\eqref{eq:e_12_bound} \& \eqref{eq:e_22_bound}, we get
	\begin{equation}
	    Err(\widehat{\pi}^*)\leq 2H(\beta(N)+\lambda H\alpha(N)).
	\end{equation}
	
	Now if we have
	\begin{equation}
	    N\geq \frac{\ln(2/\delta')}{18},
	\end{equation}
	we have that
	\begin{align}
	    H\sqrt{\frac{2\ln(2/\delta')}{N}} \geq & \frac{H\ln(2/\delta')}{3N},\\
	    \beta(N)\leq & 2H\sqrt{\frac{2\ln(2/\delta')}{N}}.
	\end{align}
	
	Now if we have that
	\begin{equation}
	N\geq \max\left(\frac{\ln(2/\delta')}{18}, \frac{8H^4(2+\lambda\sqrt{D})^2\ln(2/\delta')}{\epsilon'^2} \right) = \frac{8H^4(2+\lambda\sqrt{D})^2\ln(2/\delta')}{\epsilon'^2},
	\end{equation}
	then by setting $\delta'=\delta/(4SA)$ and using the union bound, we have that with a probability at least $1-\delta$,
	\begin{equation}
	Err(\widehat{\pi}^*)\leq \epsilon.
	\end{equation}
	
	Then we finish the proof for theorem~\ref{thm:pwc_thm}.
	
	Specifically, for the fixed perturbation constraint in Sec.~\ref{sec:fix_perturbations}
    
    \subsection{Proof for Theorem~\ref{thm:tvdc_thm}}
    \label{app:tvdc_thm}
    
    Here we modify the results of Theorem~\ref{thm:tvdc_thm} in the paper. The constants are modified. We give a modified version here.
    
	\begin{theorem}[Modified version for Theorem~\ref{thm:tvdc_thm}]
		\label{thm:tvdc_thm_modified}
		Consider a given $(\epsilon, \delta)$, where $\epsilon\in(0,H)$ and $\delta\in(0,1)$, and a TVDC $\mathcal{C}_{TVD}(\cdot;u)$. If $u\leq \epsilon/(16H^2)$, then using
		\begin{equation}
		    N\geq \frac{128H^4\ln(8SA/\delta)}{\epsilon^2},
		\end{equation}
		with a probability at least $1-\delta$ we have 
		\begin{equation}
		    Err(\widehat{\pi}^*) \leq \epsilon.
		\end{equation}
		If $u>\epsilon/(16H^2)$, then using
		\begin{small}
		\begin{align}
		N\geq&  \max\left(\frac{16H^4\ln(8SA/\delta)}{\epsilon^2}(\sqrt{2}+6\sqrt{uD})^2,\frac{49D\ln(8SA/\delta)}{27u}\right),
		\end{align}
		\end{small}
		then with a probability no less than $1-\delta$, $\widehat{\pi}^*$ satisfies
		\begin{align}
		Err(\widehat{\pi}^*) \leq \epsilon.
		\end{align}
	\end{theorem}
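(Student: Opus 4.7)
The plan is to build on the PWC framework from Theorem~\ref{thm:pwc_thm_copy}: the TVDC is a pair-wise constraint satisfying Assumption~\ref{assp:constraint} with Lipschitz constant $\lambda=2$, so a direct application of that theorem already yields a bound of order $\widetilde{O}(H^4D/\epsilon^2)$. The purpose of the present theorem is to sharpen this by exploiting the geometry of TV balls, especially when $u$ is small. I would start from the four-term error decomposition established in the proof of Theorem~\ref{thm:pwc_thm_copy},
\[ Err(\widehat{\pi}^*) \leq e_{11} + e_{12} + e_{21} + e_{22}, \]
where $e_{11}, e_{21}$ capture the transition estimation error and $e_{12}, e_{22}$ capture the mismatch between $U(P^s)$ and $U(\widehat{P})$.

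For the perturbation terms, I would keep two complementary upper bounds on $\|\widetilde{\sigma}^*(s,a)-\widehat{\sigma}^*(s,a)\|_1$: the trivial bound $4u$ (both vectors lie in TV balls of radius $u$, hence have $\ell_1$-norm at most $2u$), and the Lipschitz bound $2\|P^s(s,a)-\widehat{P}(s,a)\|_1 \leq 2\alpha(N)$ from Assumption~\ref{assp:constraint}. In the small-$u$ regime $u \leq \epsilon/(16H^2)$, the trivial bound alone suffices: combining with $\|\mathcal{V}\|_\infty\leq H$ and $\sum_{h,s,a}\xi = H$ yields $e_{12}+e_{22} \leq 8uH^2 \leq \epsilon/2$. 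It then remains to drive $e_{11}+e_{21}$ below $\epsilon/2$ via the PWC Bernstein inequality (Eqs.~\eqref{eq:beta_e_11}--\eqref{eq:beta_e_21}); after choosing $\delta'=\delta/(4SA)$ and using $\beta(N)\leq 2H\sqrt{2\ln(2/\delta')/N}$ (which holds once $N\geq \ln(2/\delta')/18$), this requires $N\geq 128H^4\ln(8SA/\delta)/\epsilon^2$, giving the first case of the theorem.

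For the large-$u$ regime, the trivial $4u$ bound is too crude and the generic Lipschitz bound gives only $\sqrt{D}$-scaling, so I would interpolate between the two. The intuition is that an extremal TV perturbation concentrates its $u$ mass essentially on two directions (moving probability from the highest-value next-states onto the lowest-value ones), so a variance-aware analysis of the inner product $(\widetilde{\sigma}^*(s,a)-\widehat{\sigma}^*(s,a))^\top \mathcal{V}^{\pi^*,\widehat{\sigma}^*}_{\widehat{M},h+1}$ should give an effective scaling of $\sqrt{uD\ln(1/\delta')/N}$ rather than $\sqrt{D\ln(1/\delta')/N}$. Combined with the PWC Bernstein bounds on $e_{11},e_{21}$, this would produce a total bound of the form $4H^2(\sqrt{2}+6\sqrt{uD})\sqrt{\ln(8SA/\delta)/N}\leq \epsilon$, giving the first requirement on $N$. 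The secondary condition $N\geq 49D\ln(8SA/\delta)/(27u)$ enters to absorb the higher-order $1/N$ term of Bernstein into the $1/\sqrt{N}$ leading term in exactly the regime where $u$ is large enough to make the perturbation contribution non-negligible.

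The main obstacle is precisely the variance-aware step in Case~2: upgrading the generic Lipschitz bound, which is already tight as a worst-case estimate of $\|\widetilde{\sigma}^*-\widehat{\sigma}^*\|_1$, to a bound on the value-perturbation inner product whose effective dimension is $uD$ rather than $D$. This requires using that TVDC perturbations are bounded signed measures of total variation at most $u$ rather than arbitrary $\ell_1$-bounded vectors, and identifying where in the decomposition a Bernstein-type concentration—conditioned on the support structure of $\widehat{\sigma}^*$—can exploit this extra structure. Once that lemma is in place, the remainder is a routine union bound over $(s,a)$ followed by inverting the resulting inequality to solve for $N$ in each case.
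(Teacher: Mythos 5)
Your Case 1 is correct and is essentially the paper's argument: bound $\|\widetilde{\sigma}^*(s,a)-\widehat{\sigma}^*(s,a)\|_1\leq 4u$ by the triangle inequality on the two TV-balls, absorb $e_{12}+e_{22}\leq 8uH^2\leq\epsilon/2$, and spend the remaining $\epsilon/2$ budget on $e_{11}+e_{21}\leq 2H\beta(N)$ via the pair-wise Bernstein bounds, which yields exactly $N\geq 128H^4\ln(8SA/\delta)/\epsilon^2$ after the union bound with $\delta'=\delta/(4SA)$.

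Case 2, however, has a genuine gap: you correctly identify that the crux is replacing the generic $\lambda\sqrt{D}$ scaling by an effective $\sqrt{uD}$ scaling, but you leave the key lemma unproven and the mechanism you sketch is not the one that works. The improvement does \emph{not} come from a variance-aware analysis of the inner product $(\widetilde{\sigma}^*-\widehat{\sigma}^*)^\top\mathcal{V}$ or from the extremal structure of the worst-case TV perturbation; the paper still uses the crude H\"older step $\|\widetilde{\sigma}^*(s,a)-\widehat{\sigma}^*(s,a)\|_1\,\|\mathcal{V}\|_\infty$. The gain comes entirely from the geometry of the \emph{projection}: since the TV-ball part of $U(\cdot)(s,a)$ is translation-invariant, a perturbation $\ell^*=\widehat{\sigma}^*(s,a)\in U(\widehat{P})(s,a)$ fails to lie in $U(P^s)(s,a)$ only on the coordinates where $P^s(i;s,a)+\ell^*(i)$ exits $[0,1]$. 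One constructs an explicit clipped vector $\ell$ (coordinatewise truncation of $\ell^*$ at the simplex boundary of $P^s(s,a)$), shows $\|\ell^*-\widetilde{\ell}\|_1\leq 2\|\ell^*-\ell\|_1$ (Lemma~\ref{lem:e12_ell}), and then observes that the clipped coordinates satisfy $\sum_{i\in L^+}(1-P^s(i;s,a))\leq u$ and $\sum_{i\in L^-}P^s(i;s,a)\leq u$, so that summing the coordinatewise Bernstein deviations $\sqrt{2P^s(i)(1-P^s(i))\ln(2/\delta')/N}$ over $L^\pm$ via Cauchy--Schwarz gives $O(\sqrt{uD\ln(2/\delta')/N})$ rather than $O(\sqrt{D\ln(2/\delta')/N})$; the condition $N\geq 49D\ln(8SA/\delta)/(27u)$ then absorbs the $D\ln(2/\delta')/N$ remainder into the leading term, as you anticipated. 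Without this projection/clipping construction your outline cannot be completed as stated: conditioning a Bernstein bound on the support of $\widehat{\sigma}^*$ runs into the dependence between $\widehat{\sigma}^*$ and $\widehat{P}$, which is precisely what the deterministic clipping argument avoids.
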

	
	We give complete proof below.
	
	Here we consider a specific PWC constraint, the Total Variation Distance Constraint (TVDC). Formally, for some small value $u>0$ and any probability distribution $p\in\Delta_D$, the TVDC is define as
	\begin{equation}
		\mathcal{C}_{TVD}(p):=\{p'\in\Delta_D:||p'-p||_{TV}\leq u\}.
	\end{equation}
	
	Since we only consider the Categorical distribution $p,p'\in\Delta_D$, thus we have that
	\begin{align}
		||p'-p||_{TV}= & \frac{1}{2}||p'-p||_1\\
		= & \frac{1}{2}\sum_{i=1}^D|p'(i)-p(i)|.
	\end{align}
	
	Now we consider state-action pair $(s,a)$ and its corresponding simulator transition $P^s(s,a)$ and estimated transition $\widehat{P}(s,a)$. 

    \subsubsection{Case 1}
    If we have that $u\leq \epsilon/(16H^2)$, then we consider $\sigma\in U(P^s(s,a))$ and $U(\widehat{P}(s,a))$ and we have
    \begin{equation}
        ||\widetilde{\sigma}^*(s,a)-\widehat{\sigma}^*(s,a)||_1\leq ||\widetilde{\sigma}^*(s,a)||_1 + ||\widehat{\sigma}^*(s,a)||_1\leq 4u\leq \frac{\epsilon}{4H^2}.
    \end{equation}
    
    Then we would have that
    \begin{align}
        e_{12} \leq &  \sum_{h=1}^H\sum_{s\in \mathcal{S}_h}\sum_{a}{\xi}^{\pi^*,\widetilde{\sigma}^*}_{M^s}(s,a)
	    \frac{\epsilon}{4H}\leq \frac{\epsilon}{4},\label{eq:tvdc_e_12_bound}\\
	    e_{22}\leq &  \sum_{h=1}^H\sum_{s\in \mathcal{S}_h}\sum_{a}{\xi}^{\widehat{\pi}^*,{\sigma}'}_{M^s}(s,a)
	    \frac{\epsilon}{4H}\leq \frac{\epsilon}{4}.\label{eq:tvdc_e_22_bound}
    \end{align}
    
    Therefore we can upper bound $Err(\widehat{\pi}^*)$ with Lemma~\ref{lem:err_decompose}, Eq.~\eqref{eq:step_1_gen}, \eqref{eq:step_2_gen}, \eqref{eq:pwc_e_11_bound} and \eqref{eq:pwc_e_21_bound}:
    \begin{equation}
        Err(\widehat{\pi}^*)\leq e_{11} + e_{12} + e_{21} + e_{22} \leq 2H\beta(N) + \frac{\epsilon}{2}.
    \end{equation}
    
    Thus if we have 
    \begin{equation}
        N\geq \frac{128H^4\ln(2/\delta')}{\epsilon^2},
    \end{equation}
    then we have that
    \begin{align}
        \beta(N)\leq 2H\sqrt{\frac{2\ln(2/\delta')}{N}} \leq \frac{\epsilon}{4H}.
    \end{align}
    Finally by letting $\delta'=\delta/(4SA)$ and using the union bound, we have that with a probability at least $1-\delta$,
    \begin{equation}
        Err(\widehat{\pi}^*)\leq \epsilon.
    \end{equation}
	
	\subsubsection{Case 2}
	Now we consider the case that $u>\epsilon/(16H^2)$.
	
	Firstly, we upper bound $Err(\widehat{\pi}^*)$ with Lemma~\ref{lem:err_decompose}, Eq.~\eqref{eq:step_1_gen}, \eqref{eq:step_2_gen}, \eqref{eq:pwc_e_11_bound} and \eqref{eq:pwc_e_21_bound}:
	\begin{equation}
	    Err(\widehat{\pi}^*)\leq 2H\beta(N) + e_{12} + e_{22}.
	    \label{eq:tvdc_case2_error_bound}
	\end{equation}
	Then we upper bound $e_{12}$ and $e_{22}$ separately.
	
	\textbf{Upper bounding $e_{12}$.}
	
	For each state-action pair $(s,a)$, we aim to bound the term
	\begin{equation}
		||\widetilde{\sigma}^*(s,a)-\widehat{\sigma}^*(s,a)||_1 ||\mathcal{V}^{{\pi}^*,\widehat{\sigma}^*}_{\widehat{M},h+1}||_\infty,
	\end{equation}
	where
	\begin{align}
		\widehat{\sigma}^*(s,a) = & \mathop{\arg\min}\limits_{\ell\in U(\widehat{P})(s,a)} (\widehat{P}(s,a)+\ell)^\top \mathcal{V}^{\widehat{\pi}^*,\widehat{\sigma}^*}_{\widehat{M},h+1},\\
		\widetilde{\sigma}^*(s,a) = & \mathop{\arg\min}\limits_{\ell\in U(P^s)(s,a)}||\widehat{\sigma}^* -\ell||.
	\end{align}
	
	We have that
	\begin{equation}
		U(\widehat{P})(s,a) = \{p-\widehat{P}(s,a):p\in\Delta_{D_{sa}},\ ||p-\widehat{P}(s,a)||_1\leq 2u\}.
	\end{equation}
	
	For convenience we denote $\ell^*:=\widehat{\sigma}^*(s,a)$ and $\widetilde{\ell}:=\widetilde{\sigma}^*(s,a)$. Since we constraint $\widehat{P}+\ell^*$ to be a vector in $\Delta_{D_{sa}}$, we have that
	\begin{equation}
		\sum\limits_{i=1}^{{D_{sa}}} \ell^*(i) = 0.
	\end{equation}
	
	Further, recall the definition of TVDC, we have that 
	\begin{equation}
	    ||\ell^*||_1\leq 2u, ||\widetilde{\ell}||_1\leq 2u.
	\end{equation}
	
	Now, we turn to find the solution for $\widetilde{\ell}$. First we define a vector ${\ell}$. For each $i\in [{D_{sa}}]$, we consider $\ell^*(i)$ for five cases. 
	
	\begin{itemize}
		\item If $\ell^*(i)=0$, we can simply set ${\ell}(i)=0$.
		
		\item If $\ell^*(i)>0$ and ${P}^s(i;s,a)+\ell^*(i)\leq 1$, we can set ${\ell}(i)=\ell^*(i)$.
		
		\item If $\ell^*(i)>0$ and ${P}^s(i;s,a)+\ell^*(i)> 1$, we set ${\ell}(i) = 1-{P}^s(i;s,a)$.
		
		\item If $\ell^*(i)<0$ and ${P}^s(i;s,a)+\ell^*(i)\geq 0$, we can set ${\ell}(i)=\ell^*(i)$.
		
		\item If $\ell^*(i)<0$ and ${P}^s(i;s,a)+\ell^*(i)< 0$, we set ${\ell}(i) = -{P}^s(i;s,a)$.
	\end{itemize}

	\begin{lemma}
	\label{lem:e12_ell}
		With $\ell$ defined above, we have
		\begin{equation}
			||\ell^* -\widetilde{\ell}||_1 \leq 2||\ell^* -\ell||_1.
			\label{eq:e12_ell}
		\end{equation}
	\end{lemma}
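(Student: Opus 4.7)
The plan is to exhibit an explicit competitor $\ell'' \in U_{sa}(P^s)$ with $\|\ell^* - \ell''\|_1 \leq 2\|\ell^* - \ell\|_1$, and then appeal to the optimality of $\widetilde{\ell}$ to transfer the bound to $\widetilde{\ell}$. The vector $\ell$ constructed in the lemma already satisfies the box constraint $P^s(i) + \ell(i) \in [0,1]$ coordinate-wise, so the task reduces to repairing the two remaining constraints for membership in $U_{sa}(P^s)$: that $P^s + \ell''$ sum to $1$, and that $\|\ell''\|_1 \leq 2u$.

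First I would book-keep the coordinate-wise truncation. Let $A^+$ and $A^-$ denote the index sets on which $\ell^*$ is truncated down (from a positive value) and up (from a negative value) respectively, and set $X = \sum_{i \in A^+}(P^s(i) + \ell^*(i) - 1) \geq 0$ and $Y = \sum_{i \in A^-}(-P^s(i) - \ell^*(i)) \geq 0$. A direct calculation gives three identities: $\|\ell - \ell^*\|_1 = X + Y$; using $\sum_i \ell^*(i) = 0$, $\sum_i \ell(i) = Y - X$ and hence $\sum_i (P^s(i) + \ell(i)) = 1 + (Y - X)$; and, since each truncation strictly shrinks the affected entry's magnitude, $\|\ell\|_1 = \|\ell^*\|_1 - (X + Y)$.

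Next I would define $q''$ to be any $\ell_1$-closest point of $\Delta_{D_{sa}}$ to $P^s + \ell$, and set $\ell'' = q'' - P^s$. Because $P^s + \ell \in [0,1]^{D_{sa}}$ already, $q''$ can be obtained by redistributing exactly $|X - Y|$ units of mass across coordinates with available headroom (if $X > Y$) or floor (if $Y > X$); this redistribution is feasible whenever $D_{sa} \geq 1$, and $q'' \in [0,1]^{D_{sa}}$ is preserved throughout. Consequently $P^s + \ell'' \in \Delta_{D_{sa}}$ and $\|\ell'' - \ell\|_1 = |X - Y|$.

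Finally I would verify membership in $U_{sa}(P^s)$ and close the argument. By the triangle inequality combined with the identity $\|\ell\|_1 = \|\ell^*\|_1 - (X + Y)$,
\[
\|\ell''\|_1 \leq \|\ell\|_1 + |X - Y| \leq \|\ell^*\|_1 - (X + Y) + |X - Y| \leq \|\ell^*\|_1 \leq 2u,
\]
so $\ell'' \in U_{sa}(P^s)$. A second triangle inequality gives $\|\ell^* - \ell''\|_1 \leq (X + Y) + |X - Y| \leq 2(X + Y) = 2\|\ell^* - \ell\|_1$, and minimality of $\widetilde{\ell}$ in $U_{sa}(P^s)$ then yields the claimed inequality. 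The main delicacy I anticipate is the projection step: one must specify precisely which coordinates absorb or shed the excess mass $|X - Y|$ so that $q'' \in [0,1]^{D_{sa}}$ is maintained, and then verify that the TV-norm budget $\|\ell^*\|_1 - (X + Y) + |X - Y| \leq 2u$ closes exactly; both facts follow from the identities collected in the bookkeeping step, which is why the factor of $2$ (rather than something larger) suffices.
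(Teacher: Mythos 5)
Your proof is correct, and it rests on the same underlying strategy as the paper's: exhibit a feasible competitor in $U(P^s)(s,a)$ obtained by repairing $\ell$, then invoke the minimality of $\widetilde{\ell}$. The execution, however, is genuinely different and noticeably more complete. The paper works with the signed decompositions of the gap vectors $w=\ell^*-\ell$ and $\widetilde{w}=\ell^*-\widetilde{\ell}$, splits into two cases according to which one-sided sum dominates, and argues by contradiction that $\sum_{i\in\widetilde{W}^+}\widetilde{w}(i)\leq\sum_{i\in W^+}w(i)$ (resp.\ the negative-part analogue), with the competitor $\ell''$ only sketched (``refine the remaining coordinates to satisfy $\ell''\in U(P^s)$''). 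You instead make the competitor fully explicit as (the perturbation corresponding to) the $\ell_1$-projection of $P^s+\ell$ onto the simplex, and your three bookkeeping identities ($\|\ell-\ell^*\|_1=X+Y$, $\sum_i\ell(i)=Y-X$, $\|\ell\|_1=\|\ell^*\|_1-(X+Y)$) let you close the argument with two triangle inequalities and no case split. Most importantly, you verify that the repaired vector still satisfies the total-variation budget via $\|\ell''\|_1\leq\|\ell^*\|_1-(X+Y)+|X-Y|\leq\|\ell^*\|_1\leq 2u$ --- a point the paper's proof passes over in silence but which is essential for $\ell''$ to lie in $U(P^s)(s,a)$ at all, since membership requires both the simplex condition and the TV bound. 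The only step deserving one extra line is the feasibility of adding mass $X-Y$ when $X>Y$: the available headroom is $D_{sa}-1+(X-Y)\geq X-Y$ for $D_{sa}\geq 1$, so your parenthetical claim is correct as stated.
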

	We give the proof of this lemma in Appendix~\ref{app:proof_e12_ell}.

	Then we can turn to bound $||\ell^* -{\ell}||_1$.
	
	According to our design of $\ell$, we know that for any $i\in[D]$, we always have $\ell^*(i)\ell(i)\geq 0$.
	
	Firstly, we define the sets
	\begin{align}
	    L^+:=&\{i\in[D]:\ell^*(i)>0,\ P^s(i;s,a)+\ell^*(i)>1\},\\
	    L^-:=&\{i\in[D]:\ell^*(i)<0,\ P^s(i;s,a)+\ell^*(i)< 0\}.
	\end{align}
	It is easy to see that only elements in $L^+$ and $L^-$ that will cause the difference between $\ell^*$ and ${\ell}$.
	
	Consider $L^+$ first. For $i\in L^+$, it is easy to see that
	\begin{align}
	    \ell^*(i)-{\ell}(i) = & \ell^*(i) + {P}^s(i;s,a) - 1 \\
	    \leq & \ell^*(i) + P^s(i;s,a) - \ell^*(i) - \widehat{P}(i;s,a)\\
	    = &  P^s(i;s,a) -\widehat{P}(i;s,a)\\
	    \leq & \sqrt{\frac{2P^s(i;s,a)(1-P^s(i;s,a))\ln(2/\delta')}{N}} + \frac{\ln(2/\delta')}{3N}.
	\end{align}
	The last inequality holds due to Lemma~\ref{lem:bernstein}.
	
	Notice that 
	\begin{align}
	\label{eq:sum_1_minus_p}
	    \sum\limits_{i\in L^+} (1-P^s(i;s,a))\leq \sum\limits_{i\in L^+} \ell^*(i) \leq u.
	\end{align}
	
	Therefore we have that
	\begin{align}
	    & \sum\limits_{i\in L^+}\sqrt{\frac{2P^s(i;s,a)(1-P^s(i;s,a))\ln(2/\delta')}{N}}\\ =& (\sum\limits_{i'\in L^+}P^s(i';s,a))\sum\limits_{i\in L^+}\frac{P^s(i;s,a)}{\sum\limits_{i''\in L^+}P^s(i'';s,a)}\sqrt{\frac{2(1-P^s(i;s,a))\ln(2/\delta')}{P^s(i;s,a)N}}\\
	    \leq & (\sum\limits_{i'\in L^+}P^s(i';s,a))\sqrt{\sum\limits_{i\in L^+}\frac{2(1-P^s(i;s,a))\ln(2/\delta')}{\sum\limits_{i''\in L^+}P^s(i'';s,a)N}} \\
	    = & \sqrt{(\sum\limits_{i'\in L^+}P^s(i';s,a))\frac{\sum\limits_{i\in L^+}2(1-P^s(i;s,a))\ln(2/\delta')}{N}} \\
	    \leq & \sqrt{\frac{2u\ln(2/\delta')}{N}}.
	\end{align}
	The first inequality holds with the Cauchy–Schwarz inequality, and the second holds due to inequality~\eqref{eq:sum_1_minus_p}.
	
	Hence we have that
	\begin{align}
	    \sum\limits_{i\in L^+} |\ell^*(i)-\ell(i)| \leq \sqrt{\frac{2u\ln(2/\delta')}{N}} + \frac{D\ln(2/\delta')}{3N}.
	\end{align}
	
	Now we turn to consider $L^-$. For $i\in L^-$, using that we have $\ell^*(i) \leq -P^s(i;s,a)$, we have
	\begin{align}
	    \ell(i) - \ell^*(i) \leq & - {P}^s(i;s,a) - (-\widehat{P}(i;s,a))\\
	    = &  \widehat{P}(i;s,a) - P^s(i;s,a)\\
	    \leq & \sqrt{\frac{2P^s(i;s,a)(1-P^s(i;s,a))\ln(2/\delta')}{N}} + \frac{\ln(2/\delta')}{3N}.
	\end{align}
	
	Recall that 
	\begin{align}
	\label{eq:sum_p}
	    \sum\limits_{i\in L^-} P^s(i;s,a) \leq \sum\limits_{i\in L^-}-\ell^*(i) \leq u.
	\end{align}
	
	Hence we have that
	\begin{align}
	    & \sum\limits_{i\in L^-}\sqrt{\frac{2P^s(i;s,a)(1-P^s(i;s,a))\ln(2/\delta')}{N}}\\ =& (\sum\limits_{i'\in L^-}P^s(i';s,a))\sum\limits_{i\in L^-}\frac{P^s(i;s,a)}{\sum\limits_{i''\in L^-}P^s(i'';s,a)}\sqrt{\frac{2(1-P^s(i;s,a))\ln(2/\delta')}{P^s(i;s,a)N}}\\
	    \leq & (\sum\limits_{i'\in L^-}P^s(i';s,a))\sqrt{\sum\limits_{i\in L^+}\frac{2\ln(2/\delta')}{\sum\limits_{i''\in L^-}P^s(i'';s,a)N}} \\
	    = & \sqrt{(\sum\limits_{i'\in L^-}P^s(i';s,a))\frac{\sum\limits_{i\in L^-}2\ln(2/\delta')}{N}} \\
	    \leq & \sqrt{\frac{2uD\ln(2/\delta')}{N}}.
	\end{align}
	The first inequality holds with the Cauchy–Schwarz inequality, and the second holds due to inequality~\eqref{eq:sum_p}.
	
	Now we have that
    \begin{align}
        ||\widetilde{\sigma}^*(s,a)-\widehat{\sigma}^*(s,a)||_1 \leq 2\left(\sqrt{\frac{2u\ln(2/\delta')}{N}}+\sqrt{\frac{2uD\ln(2/\delta')}{N}}+\frac{D\ln(2/\delta')}{3N}\right).
    \end{align}	
    
    Now if we have
    \begin{equation}
        N\geq \frac{D\ln(2/\delta')}{18u},
    \end{equation}
    we have that 
    \begin{equation}
        \sqrt{\frac{2uD\ln(2/\delta')}{N}}\geq\frac{D\ln(2/\delta')}{3N}.
    \end{equation}
    
    Then we have 
    \begin{equation}
        ||\widetilde{\sigma}^*(s,a)-\widehat{\sigma}^*(s,a)||_1 \leq 6\sqrt{\frac{2uD\ln(2/\delta')}{N}}.
    \end{equation}
	
	Thus we have that: 
	\begin{align}
	\label{eq:tvdc_case2_e12}
	    e_{12}\leq 6H^2\sqrt{\frac{2uD\ln(2/\delta')}{N}}.
	\end{align}
	
	\textbf{Upper bounding $e_{22}$}
	We use similar techniques to that for $e_{12}$ to upper bound $e_{22}$. We give all the details below.

    For each state-action pair $(s,a)$, we aim to bound the term
	\begin{equation}
		||\widetilde{\sigma}'(s,a)-{\sigma}'(s,a)||_1 ||\mathcal{V}^{\widehat{\pi}^*,\widetilde{\sigma}'}_{\widehat{M},h+1}||_\infty,
	\end{equation}
	where
	\begin{align}
		{\sigma}'(s,a) = & \mathop{\arg\min}\limits_{\ell\in U({P}^s)(s,a)} ({P}^s(s,a)+\ell)^\top \mathcal{V}^{\widehat{\pi}^*,{\sigma}'}_{{M}^s,h+1},\\
		\widetilde{\sigma}'(s,a) = & \mathop{\arg\min}\limits_{\ell\in U(\widehat{P})(s,a)}||{\sigma}' -\ell||.
	\end{align}
	
	For convenience we denote $\ell':={\sigma}'(s,a)$ and $\widetilde{\ell}:=\widetilde{\sigma}'(s,a)$. Since we constraint $P^s+\ell'$ to be a vector in $\Delta_{D_{sa}}$, we have that
	\begin{equation}
		\sum\limits_{i=1}^{{D_{sa}}} \ell'(i) = 0.
	\end{equation}
	
	Further, recall the definition of TVDC, we have that 
	\begin{equation}
	    ||\ell'||_1\leq 2u, ||\widetilde{\ell}||_1\leq 2u.
	\end{equation}
	
	Now, we turn to find the solution for $\widetilde{\ell}$. First we define a vector ${\ell}$. For each $i\in [{D_{sa}}]$, we consider $\ell'(i)$ for five cases. 
	
	\begin{itemize}
		\item If $\ell'(i)=0$, we can simply set ${\ell}(i)=0$.
		
		\item If $\ell'(i)>0$ and $\widehat{P}(i;s,a)+\ell'(i)\leq 1$, we can set ${\ell}(i)=\ell'(i)$.
		
		\item If $\ell'(i)>0$ and $\widehat{P}(i;s,a)+\ell'(i)> 1$, we set ${\ell}(i) = 1-\widehat{P}(i;s,a)$.
		
		\item If $\ell'(i)<0$ and $\widehat{P}(i;s,a)+\ell'(i)\geq 0$, we can set ${\ell}(i)=\ell'(i)$.
		
		\item If $\ell'(i)<0$ and $\widehat{P}(i;s,a)+\ell'(i)< 0$, we set ${\ell}(i) = -\widehat{P}(i;s,a)$.
	\end{itemize}

	\begin{lemma}
	\label{lem:e22_ell}
		With $\ell$ defined above, we have
		\begin{equation}
			||\ell' -\widetilde{\ell}||_1 \leq 2||\ell' -\ell||_1.
			\label{eq:e22_ell}
		\end{equation}
	\end{lemma}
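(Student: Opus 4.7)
Lemma~\ref{lem:e22_ell} is the direct analogue of Lemma~\ref{lem:e12_ell} with the roles of $P^s$ and $\widehat{P}$ swapped, so the plan is to mimic its proof. Since $\widetilde{\ell}$ is defined as the $\ell_1$-closest element of $U(\widehat{P})(s,a)$ to $\ell'$, it suffices to exhibit some explicit witness $\ell'' \in U(\widehat{P})(s,a)$ with $\|\ell' - \ell''\|_1 \leq 2\|\ell' - \ell\|_1$; the target bound then follows from $\|\ell' - \widetilde{\ell}\|_1 \leq \|\ell' - \ell''\|_1$.

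First I would record the easy properties of the clipped vector $\ell$ defined by the five-case construction. By inspection, clipping only shrinks $|\ell'(i)|$ pointwise, so $\|\ell\|_1 \leq \|\ell'\|_1 \leq 2u$ and $\widehat{P}(s,a) + \ell$ has every coordinate in $[0,1]$. The only obstruction to membership in $U(\widehat{P})(s,a)$ is that the entries of $\widehat{P}(s,a) + \ell$ need not sum to exactly $1$. Setting $\delta := \sum_i \ell(i)$ and using $\sum_i \ell'(i) = 0$, one has $|\delta| = \bigl|\sum_i (\ell(i) - \ell'(i))\bigr| \leq \|\ell - \ell'\|_1$.

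Next I would build $\ell''$ by a sign-preserving correction that absorbs the residual $\delta$. If $\delta > 0$, reduce selected positive coordinates of $\ell$ toward $0$ until the sum drops by $\delta$; if $\delta < 0$, symmetrically raise negative coordinates toward $0$. Feasibility relies on the identity $\sum_{i:\ell(i)>0}\ell(i) = \delta + \sum_{i:\ell(i)<0}|\ell(i)| \geq \delta$ (and its mirror), so there is always enough mass of the correct sign to cancel. Because every adjusted entry is moved toward $0$, the correction simultaneously preserves the entrywise bound $\widehat{P}(s,a) + \ell'' \in [0,1]^{D_{sa}}$ and the TV budget $\|\ell''\|_1 \leq \|\ell\|_1 \leq 2u$, while enforcing $\sum_i \ell''(i) = 0$. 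Hence $\ell'' \in U(\widehat{P})(s,a)$, and its $\ell_1$-cost relative to $\ell$ is exactly $|\delta|$, so $\|\ell'' - \ell\|_1 = |\delta| \leq \|\ell - \ell'\|_1$, and the triangle inequality yields $\|\ell' - \ell''\|_1 \leq 2\|\ell' - \ell\|_1$.

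The main subtlety I anticipate is discharging all three feasibility conditions (simplex membership, zero-sum, and the TV cap) with a single correction of $\ell_1$-norm no larger than $|\delta|$. This is where the sign-preserving choice matters: an unsigned correction would inflate $\|\ell''\|_1$ beyond $2u$ and lose control of the TV budget, whereas pushing existing coordinates of $\ell$ back toward zero reduces the TV norm monotonically. Modulo this observation, the rest of the argument is routine bookkeeping identical to that of Lemma~\ref{lem:e12_ell}.
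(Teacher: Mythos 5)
Your proof is correct, but it reaches the bound by a route that differs in structure from the paper's. The paper works with the gap vectors $\widetilde{w}=\ell'-\widetilde{\ell}$ and $w=\ell'-\ell$, uses the zero-sum property of $\widetilde{w}$ to write $\|\ell'-\widetilde{\ell}\|_1=2\sum_{i\in\widetilde{W}^+}\widetilde{w}(i)$, splits into two cases according to whether the positive or negative part of $w$ dominates, and in each case argues by contradiction that the relevant one-sided sum for $\widetilde{w}$ cannot exceed that for $w$ (the contradiction being that otherwise one could build a closer feasible $\ell''$, a construction the paper leaves implicit). You instead make that witness fully explicit from the start: you take the clipped vector $\ell$, observe it already satisfies the coordinatewise simplex bounds and the TV cap $\|\ell\|_1\le\|\ell'\|_1\le 2u$, quantify its only defect as the residual $\delta=\sum_i\ell(i)$ with $|\delta|\le\|\ell-\ell'\|_1$, repair it by a sign-preserving shrink of total mass $|\delta|$ (feasible because the same-signed mass of $\ell$ is at least $|\delta|$), and finish with the triangle inequality and the optimality of $\widetilde{\ell}$ as the $\ell_1$-projection onto $U(\widehat{P})(s,a)$. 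The underlying mechanism -- a feasibility-restoring correction of $\ell_1$-cost at most $\|\ell-\ell'\|_1$ -- is the same in both arguments, but your packaging avoids the case split entirely and turns the paper's sketched contradiction into a concrete construction, which makes the factor of $2$ transparent (one factor from the clipping distance, one from the zero-sum repair). The paper's decomposition, for its part, isolates the exact identity $\|\ell'-\widetilde{\ell}\|_1=2\sum_{i\in\widetilde{W}^+}\widetilde{w}(i)$, which is reused in spirit elsewhere in the TVDC analysis; your argument does not need it. One small point worth stating explicitly if you write this up: the reduction from the $\max_{(s,a)}$ formulation of $\widetilde{\sigma}'$ in Eq.~\eqref{eq:sigma_tilde_prime} to a per-pair $\ell_1$-projection is what licenses the sentence ``$\widetilde{\ell}$ is the $\ell_1$-closest element of $U(\widehat{P})(s,a)$ to $\ell'$,'' and it holds here because the TVDC is a pair-wise constraint so the feasible set factorizes over $(s,a)$.
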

	The proof process for Lemma~\ref{lem:e22_ell} is the same as that for Lemma~\ref{lem:e12_ell} by simply replacing the parameters correspondingly. For the completeness of result, we give the proof in Appendix~\ref{app:proof_e22_ell}.

	Then we can turn to bound $||\ell' -{\ell}||_1$.
	
	According to our design of $\ell$, we know that for any $i\in[D]$, we always have $\ell'(i)\ell(i)\geq 0$.
	
	Firstly, we define the sets
	\begin{align}
	    L^+:=&\{i\in[D]:\ell'(i)>0,\ \widehat{P}(i;s,a)+\ell'(i)>1\},\\
	    L^-:=&\{i\in[D]:\ell'(i)<0,\ \widehat{P}(i;s,a)+\ell'(i)< 0\}.
	\end{align}
	It is easy to see that only elements in $L^+$ and $L^-$ that will cause the difference between $\ell'$ and ${\ell}$.
	
	Consider $L^+$ first. For $i\in L^+$, it is easy to see that
	\begin{align}
	    \ell'(i)-{\ell}(i) = & \ell'(i) + \widehat{P}(i;s,a) - 1 \\
	    \leq & \ell’(i) + \widehat{P}(i;s,a) - \ell‘(i) - {P}^s(i;s,a)\\
	    = &  \widehat{P}(i;s,a)-P^s(i;s,a)\\
	    \leq & \sqrt{\frac{2\widehat{P}(i;s,a)(1-\widehat{P}(i;s,a))\ln(2/\delta')}{N-1}} + \frac{7\ln(2/\delta')}{3N}.
	\end{align}
	The last inequality holds due to the empirical Bernstein's inequality~\cite{maurer2009empirical}.
	
	Notice that 
	\begin{align}
	\label{eq:sum_2_minus_hat_p}
	    \sum\limits_{i\in L^+} (1-\widehat{P}(i;s,a))\leq \sum\limits_{i\in L^+} \ell^*(i) \leq u.
	\end{align}
	
	Therefore we have that
	\begin{align}
	    & \sum\limits_{i\in L^+}\sqrt{\frac{2\widehat{P}(i;s,a)(1-\widehat{P}(i;s,a))\ln(2/\delta')}{N-1}}\\
	    =& (\sum\limits_{i'\in L^+}\widehat{P}(i';s,a))\sum\limits_{i\in L^+}\frac{\widehat{P}(i;s,a)}{\sum\limits_{i''\in L^+}\widehat{P}(i'';s,a)}\sqrt{\frac{2(1-\widehat{P}(i;s,a))\ln(2/\delta')}{\widehat{P}(i;s,a)(N-1)}}\\
	    \leq & (\sum\limits_{i'\in L^+}\widehat{P}(i';s,a))\sqrt{\sum\limits_{i\in L^+}\frac{2(1-\widehat{P}(i;s,a))\ln(2/\delta')}{\sum\limits_{i''\in L^+}\widehat{P}(i'';s,a)(N-1)}} \\
	    = & \sqrt{(\sum\limits_{i'\in L^+}\widehat{P}(i';s,a))\frac{\sum\limits_{i\in L^+}2(1-\widehat{P}(i;s,a))\ln(2/\delta')}{N-1}} \\
	    \leq & \sqrt{\frac{2u\ln(2/\delta')}{N-1}}.
	\end{align}
	The first inequality holds with the Cauchy–Schwarz inequality, and the second holds due to inequality~\eqref{eq:sum_2_minus_hat_p}.
	
	Hence we have that
	\begin{align}
	    \sum\limits_{i\in L^+} |\ell'(i)-\ell(i)| \leq \sqrt{\frac{2u\ln(2/\delta')}{N-1}} + \frac{7D\ln(2/\delta')}{3N}.
	\end{align}
	
	Now we turn to consider $L^-$. For $i\in L^-$, using that we have $\ell'(i) \leq -\widehat{P}(i;s,a)$, we have
	\begin{align}
	    \ell(i) - \ell'(i) \leq & - \widehat{P}(i;s,a) - (-{P}^s(i;s,a))\\
	    = &  P^s(i;s,a) - \widehat{P}(i;s,a)\\
	    \leq & \sqrt{\frac{2\widehat{P}(i;s,a)(1-\widehat{P}(i;s,a))\ln(2/\delta')}{N-1}} + \frac{7\ln(2/\delta')}{3N}.
	\end{align}
	
	Recall that 
	\begin{align}
	\label{eq:sum_hat_p}
	    \sum\limits_{i\in L^-} \widehat{P}(i;s,a) \leq \sum\limits_{i\in L^-}-\ell^*(i) \leq u.
	\end{align}
	
	Hence we have that
	\begin{align}
	    & \sum\limits_{i\in L^-}\sqrt{\frac{2\widehat{P}(i;s,a)(1-\widehat{P}(i;s,a))\ln(2/\delta')}{N-1}}\\ =& (\sum\limits_{i'\in L^-}\widehat{P}(i';s,a))\sum\limits_{i\in L^-}\frac{\widehat{P}(i;s,a)}{\sum\limits_{i''\in L^-}\widehat{P}(i'';s,a)}\sqrt{\frac{2(1-\widehat{P}(i;s,a))\ln(2/\delta')}{\widehat{P}(i;s,a)(N-1)}}\\
	    \leq & (\sum\limits_{i'\in L^-}\widehat{P}(i';s,a))\sqrt{\sum\limits_{i\in L^+}\frac{2\ln(2/\delta')}{\sum\limits_{i''\in L^-}\widehat{P}(i'';s,a)(N-1)}} \\
	    = & \sqrt{(\sum\limits_{i'\in L^-}\widehat{P}(i';s,a))\frac{\sum\limits_{i\in L^-}2\ln(2/\delta')}{N-1}} \\
	    \leq & \sqrt{\frac{2uD\ln(2/\delta')}{N-1}}.
	\end{align}
	The first inequality holds with the Cauchy–Schwarz inequality, and the second holds due to inequality~\eqref{eq:sum_hat_p}.
	
	Now we have that
    \begin{align}
        ||\widetilde{\sigma}'(s,a)-{\sigma}'(s,a)||_1 \leq & 2\left(\sqrt{\frac{2u\ln(2/\delta')}{N-1}}+\sqrt{\frac{2uD\ln(2/\delta')}{N-1}}+\frac{7D\ln(2/\delta')}{3N}\right)\\
        \leq & 2\left(\sqrt{\frac{3u\ln(2/\delta')}{N}}+\sqrt{\frac{3uD\ln(2/\delta')}{N}}+\frac{7D\ln(2/\delta')}{3N}\right).
    \end{align}	
    
    Now if we have
    \begin{equation}
        N\geq \frac{49D\ln(2/\delta')}{27u},
    \end{equation}
    we have that 
    \begin{equation}
        \sqrt{\frac{3uD\ln(2/\delta')}{N}}\geq\frac{7D\ln(2/\delta')}{3N}.
    \end{equation}
    
    Then we have 
    \begin{equation}
        ||\widetilde{\sigma}^*(s,a)-\widehat{\sigma}^*(s,a)||_1 \leq 6\sqrt{\frac{3uD\ln(2/\delta')}{N}}.
    \end{equation}
	
	Thus we have that: 
	\begin{align}
	\label{eq:tvdc_case2_e_22}
	    e_{22}\leq 6H^2\sqrt{\frac{3uD\ln(2/\delta')}{N}}.
	\end{align}

    \textbf{Upper bounding $Err(\widehat{\pi}^*)$.}
    
    Finally if we have that
    \begin{equation}
        N \geq \max\Big(\frac{49D\ln(2/\delta')}{27u},\frac{D\ln(2/\delta')}{18u},\frac{\ln(2/\delta')}{18}\Big) = \frac{49D\ln(2/\delta')}{27u},
    \end{equation}
    we combine Eq.~\eqref{eq:tvdc_case2_error_bound}, \eqref{eq:tvdc_e_12_bound} and \eqref{eq:tvdc_e_22_bound} to get that
    \begin{align}
        Err(\widehat{\pi}^*)\leq & 2H\beta(N) + 6H^2(\sqrt{2}+\sqrt{3})\sqrt{\frac{uD\ln(2/\delta')}{N}}\\
        \leq & 4H^2\sqrt{\frac{2\ln(2/\delta')}{N}} + 24H^2\sqrt{\frac{uD\ln(2/\delta')}{N}}.
    \end{align}
    
    If we have 
    \begin{equation}
        N\geq \max\left(\frac{16H^4\ln(2/\delta')}{\epsilon^2}(\sqrt{2}+6\sqrt{uD})^2,\frac{49D\ln(2/\delta')}{27u}\right),
    \end{equation}
    then by letting $\delta'=\delta/(4SA)$ and using the union bound, with a probability at least $1-\delta$, 
    \begin{equation}
        Err(\widehat{\pi}^*)\leq \epsilon.
    \end{equation}
    
    Thus we finish the proof for Theorem \ref{thm:tvdc_thm_modified}.

    \section{Proof for other lemmas}
    
    For clarity, for those lemmas which appear in the proof of theorems, we put their proof in this section.
    
    \subsection{Proof for Lemma~\ref{lem:err_decompose}}
    \label{app:proof_err_decompose}
		We have that
		\begin{align*}
		Err(\widehat{\pi}^*) = & \min\limits_{M\in\mathcal{C}(M^s)}{V}^{\pi^*}(s_0;M) - \min\limits_{M\in\mathcal{C}(M^s)}{V}^{\widehat{\pi}^*}(s_0;M)\\
		= & \mathcal{V}^{\pi^*,\sigma^*}_{M^s}(s_0) - \mathcal{V}^{\widehat{\pi}^*,{\sigma}'}_{M^s}(s_0)\\
		= & \underbrace{\mathcal{V}^{\pi^*,\sigma^*}_{M^s}(s_0) - \mathcal{V}^{\widehat{\pi}^*,\widehat{\sigma}^*}_{\widehat{M}}(s_0)}_{\kappa_1} + \underbrace{\mathcal{V}^{\widehat{\pi}^*,\widehat{\sigma}^*}_{\widehat{M}}(s_0) - \mathcal{V}^{\widehat{\pi}^*,{\sigma}'}_{M^s}(s_0)}_{\kappa_2}.
		\end{align*}
		
		Then we upper bound $\kappa_1$ and $\kappa_2$ separately.
		
		\begin{align*}
		\kappa_1 = & \mathcal{V}^{\pi^*,\sigma^*}_{M^s}(s_0) - \mathcal{V}^{\pi^*,\widetilde{\sigma}^*}_{M^s}(s_0) + \mathcal{V}^{\pi^*,\widetilde{\sigma}^*}_{M^s}(s_0) - \mathcal{V}^{{\pi}^*,\widehat{\sigma}^*}_{\widehat{M}}(s_0) + \mathcal{V}^{{\pi}^*,\widehat{\sigma}^*}_{\widehat{M}}(s_0) -  \mathcal{V}^{\widehat{\pi}^*,\widehat{\sigma}^*}_{\widehat{M}}(s_0)\\
		\leq & \mathcal{V}^{\pi^*,\widetilde{\sigma}^*}_{M^s}(s_0) - \mathcal{V}^{{\pi}^*,\widehat{\sigma}^*}_{\widehat{M}}(s_0).
		\end{align*}
		Here the inequality holds due to the definition of $\sigma^*$ and $\widehat{\pi}^*$.
		
		\begin{align*}
		\kappa_2 = & \mathcal{V}^{\widehat{\pi}^*,\widehat{\sigma}^*}_{\widehat{M}}(s_0) - \mathcal{V}^{\widehat{\pi}^*,\widetilde{\sigma}'}_{\widehat{M}}(s_0) 
		+ \mathcal{V}^{\widehat{\pi}^*,\widetilde{\sigma}'}_{\widehat{M}}(s_0) -  \mathcal{V}^{\widehat{\pi}^*,{\sigma}'}_{M^s}(s_0)\\
		\leq & \mathcal{V}^{\widehat{\pi}^*,\widetilde{\sigma}'}_{\widehat{M}}(s_0) -  \mathcal{V}^{\widehat{\pi}^*,{\sigma}'}_{M^s}(s_0).
		\end{align*}
		Here the inequality holds due to the definition of $\widehat{\sigma}^*$.
		
		Then we finish the proof.
    
    \subsection{Proof for Lemma~\ref{lem:e12_ell}}
    \label{app:proof_e12_ell}
	    We first define two gap vectors $\widetilde{w}$ and $w$. For each $i\in[D_{sa}]$, we define 
		\begin{align}
		    \widetilde{w}(i)= & \ell^*(i)-\widetilde{\ell}(i),\\
		    w(i)= & \ell^*(i) -\ell(i).
		\end{align}
		
		We define four sets $\widetilde{W}^+$, $\widetilde{W}^-$, $W^+$ and $W^-$ as
		\begin{align}
		    \widetilde{W}^+ = & \{i\in[D_{sa}]:\widetilde{w}(i)>0\},\\
		    \widetilde{W}^- = & \{i\in [D_{sa}]:\widetilde{w}(i)<0\},\\
		    W^+ = & \{i\in[D_{sa}]:w(i)>0\},\\
		    W^- = & \{i\in [D_{sa}]:w(i)<0\}.
		\end{align}
		
		According to our definition of $\ell^*$, $\widetilde{\ell}$ and $\ell$, we have that
		\begin{align}
		    ||\ell^* -\widetilde{\ell}||_1 = & \sum\limits_{i\in\widetilde{W}^+}\widetilde{w}(i) - \sum\limits_{i\in\widetilde{W}^-}\widetilde{w}(i),\\
		    \sum\limits_{i\in\widetilde{W}^+}\widetilde{w}(i)  = & - \sum\limits_{i\in\widetilde{W}^-}\widetilde{w}(i),\\
		    ||\ell^* -\ell||_1 = & \sum\limits_{i\in W^+}w(i) - \sum\limits_{i\in W^-} w(i).
		\end{align}

    Now we consider two cases.
    
    \textbf{case 1:} $\sum\limits_{i\in W^+}w(i) \geq - \sum\limits_{i\in W^-} w(i)$.
    
    Now we aim to show that
    \begin{equation}
        \sum\limits_{i\in\widetilde{W}^+}\widetilde{w}(i) \leq \sum\limits_{i\in W^+}w(i).\label{eq:e_12_w_case1}
    \end{equation}
    If Eq.~\eqref{eq:e_12_w_case1} doesn't hold, we can build a new vector $\ell''$ such that
    \begin{equation}
        ||\ell^* - \widetilde{\ell}||_1 > ||\ell^* - \ell''||_1.
    \end{equation}
    $\ell''$ can be build by letting $\ell''(i)=\ell(i)$ if $i\in W^+\cup W^-$ and then refining $i\in [D_{sa}]/W^+$ to satisfy that $\ell''\in U(P^s)$.
    
    Thus with Eq.~\eqref{eq:e_12_w_case1}, we have that
    \begin{align}
        ||\ell^* - \widetilde{\ell}||_1 = & 2 \sum\limits_{i\in\widetilde{W}^+}\widetilde{w}(i) \\
        \leq & 2\sum\limits_{i\in W^+}w(i) \\
        \leq & 2 ||\ell^* - \ell||_1.
    \end{align}

\textbf{Case 2:} $\sum\limits_{i\in W^+}w(i) < - \sum\limits_{i\in W^-} w(i)$.
    
    Now we aim to show that
    \begin{equation}
        -\sum\limits_{i\in\widetilde{W}^-}\widetilde{w}(i) \leq -\sum\limits_{i\in W^-}w(i).\label{eq:e_12_w_case2}
    \end{equation}
    If Eq.~\eqref{eq:e_12_w_case2} doesn't hold, we can build a new vector $\ell''$ such that
    \begin{equation}
        ||\ell^* - \widetilde{\ell}||_1 > ||\ell^* - \ell''||_1.
    \end{equation}
    $\ell''$ can be build by letting $\ell''(i)=\ell(i)$ if $i\in W^+\cup W^-$ and then refining $i\in [D_{sa}]/W^-$ to satisfy that $\ell''\in U(P^s)$.
    
    Thus with Eq.~\eqref{eq:e_12_w_case2}, we have that
    \begin{align}
        ||\ell^* - \widetilde{\ell}||_1 = & -2 \sum\limits_{i\in\widetilde{W}^-}\widetilde{w}(i) \\
        \leq & -2\sum\limits_{i\in W^-}w(i) \\
        \leq & 2 ||\ell^* - \ell||_1.
    \end{align}

    Thus we finish the proof.
    
    \subsection{Proof for Lemma~\ref{lem:e22_ell}}
    \label{app:proof_e22_ell}
	    We first define two gap vectors $\widetilde{w}$ and $w$. For each $i\in[D_{sa}]$, we define 
		\begin{align}
		    \widetilde{w}(i)= & \ell'(i)-\widetilde{\ell}(i),\\
		    w(i)= & \ell^*(i) -\ell(i).
		\end{align}
		
		We define four sets $\widetilde{W}^+$, $\widetilde{W}^-$, $W^+$ and $W^-$ as
		\begin{align}
		    \widetilde{W}^+ = & \{i\in[D_{sa}]:\widetilde{w}(i)>0\},\\
		    \widetilde{W}^- = & \{i\in [D_{sa}]:\widetilde{w}(i)<0\},\\
		    W^+ = & \{i\in[D_{sa}]:w(i)>0\},\\
		    W^- = & \{i\in [D_{sa}]:w(i)<0\}.
		\end{align}
		
		According to our definition of $\ell'$, $\widetilde{\ell}$ and $\ell$, we have that
		\begin{align}
		    ||\ell' -\widetilde{\ell}||_1 = & \sum\limits_{i\in\widetilde{W}^+}\widetilde{w}(i) - \sum\limits_{i\in\widetilde{W}^-}\widetilde{w}(i),\\
		    \sum\limits_{i\in\widetilde{W}^+}\widetilde{w}(i)  = & - \sum\limits_{i\in\widetilde{W}^-}\widetilde{w}(i),\\
		    ||\ell' -\ell||_1 = & \sum\limits_{i\in W^+}w(i) - \sum\limits_{i\in W^-} w(i).
		\end{align}

    Now we consider two cases.
    
    \textbf{case 1:} $\sum\limits_{i\in W^+}w(i) \geq - \sum\limits_{i\in W^-} w(i)$.
    
    Now we aim to show that
    \begin{equation}
        \sum\limits_{i\in\widetilde{W}^+}\widetilde{w}(i) \leq \sum\limits_{i\in W^+}w(i).\label{eq:e_22_w_case1}
    \end{equation}
    If Eq.~\eqref{eq:e_22_w_case1} doesn't hold, we can build a new vector $\ell''$ such that
    \begin{equation}
        ||\ell' - \widetilde{\ell}||_1 > ||\ell' - \ell''||_1.
    \end{equation}
    $\ell''$ can be build by letting $\ell''(i)=\ell(i)$ if $i\in W^+\cup W^-$ and then refining $i\in [D_{sa}]/W^+$ to satisfy that $\ell''\in U(\widehat{P})$.
    
    Thus with Eq.~\eqref{eq:e_22_w_case1}, we have that
    \begin{align}
        ||\ell' - \widetilde{\ell}||_1 = & 2 \sum\limits_{i\in\widetilde{W}^+}\widetilde{w}(i) \\
        \leq & 2\sum\limits_{i\in W^+}w(i) \\
        \leq & 2 ||\ell' - \ell||_1.
    \end{align}

\textbf{Case 2:} $\sum\limits_{i\in W^+}w(i) < - \sum\limits_{i\in W^-} w(i)$.
    
    Now we aim to show that
    \begin{equation}
        -\sum\limits_{i\in\widetilde{W}^-}\widetilde{w}(i) \leq -\sum\limits_{i\in W^-}w(i).\label{eq:e_22_w_case2}
    \end{equation}
    If Eq.~\eqref{eq:e_22_w_case2} doesn't hold, we can build a new vector $\ell''$ such that
    \begin{equation}
        ||\ell' - \widetilde{\ell}||_1 > ||\ell' - \ell''||_1.
    \end{equation}
    $\ell''$ can be build by letting $\ell''(i)=\ell(i)$ if $i\in W^+\cup W^-$ and then refining $i\in [D_{sa}]/W^-$ to satisfy that $\ell''\in U(P^s)$.
    
    Thus with Eq.~\eqref{eq:e_22_w_case2}, we have that
    \begin{align}
        ||\ell' - \widetilde{\ell}||_1 = & -2 \sum\limits_{i\in\widetilde{W}^-}\widetilde{w}(i) \\
        \leq & -2\sum\limits_{i\in W^-}w(i) \\
        \leq & 2 ||\ell' - \ell||_1.
    \end{align}

    Thus we finish the proof.
    
    \section{Empirical results}
    
    Here we give detailed information for the simple example. 

    \begin{figure}[hbt]
        \centering
        \includegraphics[width=.7\textwidth]{pic/mdp.png}
        \caption{The structure of the MDP}
        \label{fig:mdp1}
    \end{figure}
    
    We consider a training MDP $M^s$ with only 2 layers. Depth 1 has only one state, i.e. $s_0$ and depth 2 has four states. The agent has two actions $a_0$ and $a_1$ at each state. The agent receives a reward at the end of one episode. The structure of the MDP and the rewards are given in Fig.~\ref{fig:mdp1}. It easy to see that the optimal strategy for $M^s$ is always to choose $a_0$ and the optimal reward is $0.5$.
    
    Assume that the true MDP $M^*$ and $M^s$ differs in the transitions. We assume a perturbation range $u$ for this example. That is, $P^*(s_2|s_0, a_0)\in[0,u)$  and $P^*(s_4|s_0,a_1)\in[0,u)$. Now if $u>0.02$, the robust policy for the agent is to choose $a_1$ at $s_0$. It can be seen that methods concentrating on $M^s$ can hardly identify this robustness issue since they do not pay attention to $s_4$ whose reaching probability in $M^s$ is $0$. 
    
    We aim to use this simple case to show that: (1) the optimal policy of $M^s$ may be a bad policy for $M^*$; (2) the online training process for RMDP can be very inefficient. Thus we test 4 methods: 
    \begin{itemize}
        \item OPT: solving the optimal policy of $M^s$. For each $(s,a)$, we sample $2000$ times to learn models and then use the model to find the optimal policy of $M^s$. It is easy to see that OPT can find choosing $a_0$ is the optimal policy.
        \item RPS: our robust policy solving method. We also sample each $(s,a)$ for $2000$ times and then use our algorithm for PWC to find the robust policy.
        \item RQ-learning: the robust version of Q-learning. The update step for Q learning use a minimization step, as given in \citet{roy2017reinforcement}.
        \item RSARSA: the robust version of SARSA. The update step involves the minimization step, as given in \citet{roy2017reinforcement}.
    \end{itemize}
    Each method can interact with the training environment for $20000$ times. We choose $u\in\{0.001,0.005,0.01,0.05,0.1,0.5\}$. 
    
    \begin{figure}[hbt]
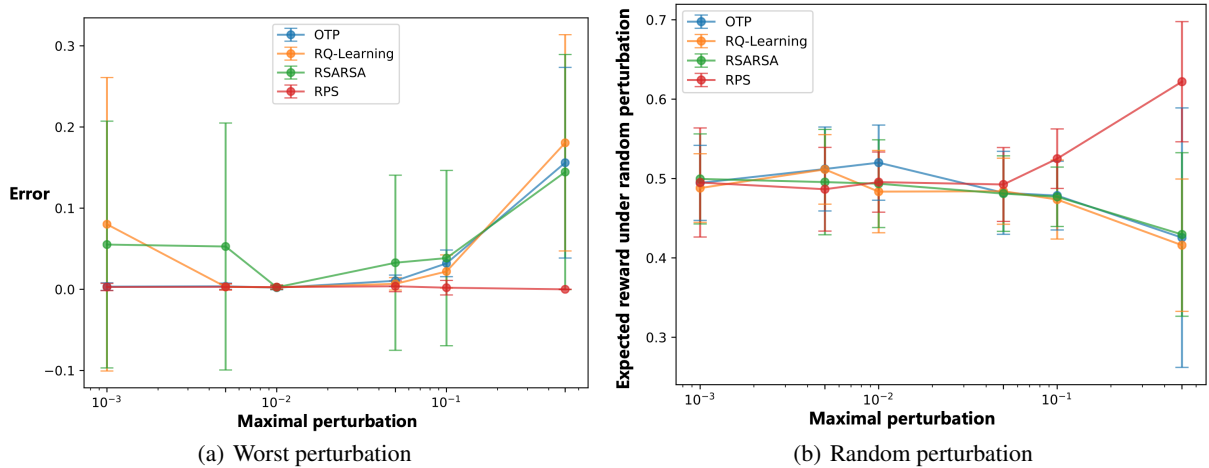

        \centering
        \subfigure[Worst perturbation]{\includegraphics[width=.45\textwidth]{pic/err.png}\label{fig:err1}}
        \subfigure[Random perturbation]{\includegraphics[width=.45\textwidth]{pic/test.png}\label{fig:rand_res1}}
        \caption{Results on 4 methods}
    \end{figure}
    
    The results under worst-case perturbation are shown in Fig.~\ref{fig:err1}. When testing the algorithms, we choose $P^*(s_2|s_0,a_0)=u$ and $P^*(s_4|s_0,a_1)=u$. It can be shown that only RPS can always finding the robust solution while other three methods suffer a large error for a large $u$. OTP suffer a large error because it ignores the effect of the perturbation. Online methods RSARSA and RQ-learning also fail because these online methods cannot reach $s_4$ at all. They cannot give a proper estimation for $s_4$ and cannot find the near-optimal robust solution.
    
    We also show the results on random perturbation in Fig.~\ref{fig:rand_res1}. We uniformly choose random values for $P^*(s_2|s_0,a_0)\in[0,u)$ and $P^*(s_4|s_0,a_1)\in[0,u)$. The random perturbation here means that the difference between the training and testing transitions is sampled uniformly from the perturbation range. It can be seen that for a relatively large perturbation, a robust policy indeed gives higher expected rewards, since it takes the risks into consideration. 

\end{document}